\documentclass{article}

\usepackage{arxiv}
\usepackage[utf8]{inputenc}
\usepackage[T1]{fontenc}
\usepackage{amsmath,amssymb,amsthm,mathtools}
\usepackage{booktabs,array,multirow}
\usepackage{graphicx}
\usepackage[rightcaption]{sidecap}
\usepackage{xcolor}
\usepackage{enumitem}
\usepackage{adjustbox}
\usepackage{natbib}
\usepackage{microtype}
\usepackage{hyperref}
\usepackage{url}

\definecolor{darkgreen}{rgb}{0.0, 0.5, 0.0}
\definecolor{darkred}{rgb}{0.6, 0.0, 0.0}
\definecolor{gray}{rgb}{0.5, 0.5, 0.5}
\definecolor{myblue}{RGB}{0, 0, 180}
\definecolor{mygreen}{RGB}{0, 108, 0}
\definecolor{myred}{RGB}{196, 0, 0}
\definecolor{darkmagenta}{RGB}{149, 0, 149}

\definecolor{supportlimitcolor}{HTML}{014C99}
\definecolor{approximationgapcolor}{HTML}{990000}
\definecolor{alignmentgapcolor}{HTML}{994C00}
\definecolor{executiongapcolor}{HTML}{4C0099}
\DeclareRobustCommand{\supportlimit}{\textcolor{supportlimitcolor}{\mbox{\textbf{support limit}}}}
\DeclareRobustCommand{\approximationgap}{\textcolor{approximationgapcolor}{\mbox{\textbf{approximation gap}}}}
\DeclareRobustCommand{\alignmentgap}{\textcolor{alignmentgapcolor}{\mbox{\textbf{alignment gap}}}}
\DeclareRobustCommand{\executiongap}{\textcolor{executiongapcolor}{\mbox{\textbf{execution gap}}}}
\hypersetup{
    colorlinks=true,
    linkcolor=myred,
    citecolor=mygreen,
    filecolor=darkmagenta,
    urlcolor=darkmagenta
}

\newcommand{\Etail}{E_{\mathrm{tail}}}
\newcommand{\norm}[1]{\left\lVert #1\right\rVert}
\newcommand{\RR}{\mathbb{R}}
\newcommand{\mat}[1]{\mathbf{#1}}
\newcommand{\vect}[1]{\boldsymbol{#1}}
\newcommand{\set}[1]{\mathcal{#1}}
\newcommand{\Id}{\mat{I}}
\newcommand{\epsnum}{\varepsilon_{\mathrm{num}}}

\newtheorem{proposition}{Proposition}
\newtheorem{theorem}{Theorem}
\newtheorem{corollary}{Corollary}

\sidecaptionvpos{figure}{t}
\sidecaptionvpos{table}{t}

\newcommand{\papertitle}{%
  Reachability Is Not Enough:\\
  Diagnosing Long-Range Behavior in GNNs%
}
\newcommand{\paperauthor}{Filippo Maria Bianchi}
\newcommand{\paperaffiliations}{%
  UiT The Arctic University of Norway\\
  NORCE Norwegian Research Centre%
}
\newcommand{\paperemail}{filippo.m.bianchi@uit.no}

\title{\papertitle}
\author{\paperauthor}
\date{}

\author{%
    \paperauthor\\
    \normalfont\paperaffiliations\\
    \href{mailto:\paperemail}{\texttt{\paperemail}}%
}
\renewcommand{\headeright}{}
\renewcommand{\undertitle}{}
\renewcommand{\shorttitle}{Reachability is not enough}
\hypersetup{
    pdftitle={Reachability is not enough: Diagnosing long-range behavior in GNNs},
    pdfauthor={\paperauthor}
}

\begin{document}
\maketitle

\begin{abstract}
Graph neural networks (GNNs) are often called long-range because their architecture can connect distant nodes, but this does not show whether they use distant information correctly.
We introduce a framework that measures how strongly inputs at each graph distance affect predictions and separates limitations due to architecture, finite approximation, training, and numerical execution.
Our analysis shows that local message-passing can spread influence slowly, so a finite implementation may rely mainly on nearby inputs even when the ideal computation uses the whole graph.
We also explain why mathematically equivalent filters can differ in how easily they are learned and how reliably they run.
Across controlled tasks, models with similar architectural reach use distant information very differently, while low average error can hide failures on distant interactions.
Together, these results show that long-range capability depends on learning to use information at the distances required by the task and preserving that use during computation.
\end{abstract}

\section{Introduction}

Learning dependencies between distant nodes remains a difficult and open challenge in graph deep learning. 
Models must preserve and use information across long graph distances, whether retrieving a marked source, predicting effects of remote atoms, or aggregating weak interactions over a large region.
Supporting such dependencies has motivated deeper message passing, global attention, adaptive propagation, spectral filters, and state-space recurrences~\citep{gutteridge2023drew,rampasek2022gps,errica2025amp,behrouz2024graphmamba,ceni2025mpssm}. 
These mechanisms differ substantially in computational cost and inductive bias, so it matters whether their advertised long-range capability is actually used.
Long-range capability is often inferred from architectural properties: receptive-field size, global attention, or the number of propagation steps.
These properties only establish that communication is possible. 
They do not show that training uses the available route, that the resulting influence, i.e., the input-output sensitivity, fullfills the task requirements, or that finite computation and numerical precision preserve it.
In addition, aggregate prediction metrics can hide these kinds of failure when distant interactions contribute only a small part of the loss.

To address this, we measure the \emph{realized interaction profile} of a trained model: how strongly inputs at each graph distance affect its predictions.
Its \emph{realized range} summarizes how far this influence extends.
We compare this profile with what the architecture permits, what an ideal operator would produce, and what the task requires. 
Through input perturbations we verify that predictions follow the intended distant information, while errors measured on distant interactions expose effects that typically go unnoticed.
This turns long-range capability from an architectural label into a diagnosis of where information is lost: communication, finite approximation, training, or numerical execution.

Our contributions are:
\begin{enumerate}[leftmargin=*,itemsep=1pt,topsep=2pt]
    \item \textbf{A framework for diagnosing long-range failures.}
    We separate limits in architectural reach from failures in approximation, training, and numerical execution.
    \item \textbf{Theory linking range to implementation.}
    We show how architecture limits how far information can travel and why stopping an iterative diffusion process after a limited number of steps can greatly shorten its realized range.
    For propagation methods that repeatedly average over neighboring nodes, doubling their realized range requires roughly four times as many updates.
    We also explain why a model that in principle may learn the desired solution can still be difficult to fit or execute reliably in practice.
    \item \textbf{Mechanistic evidence across tasks and models.}
    Through controlled experiments across several GNN families, we show where long-range behavior is lost.
    Our results demonstrate that being able to communicate over long distances is not enough: models must also learn to preserve and use the information that matters for the downstream task.
\end{enumerate}

\section{From Possible Communication to Realized Interaction}
\label{sec:framework}

\begin{figure}[ht]
\centering
\includegraphics[width=\textwidth,trim=4.6cm 0 0 2.8cm,clip]{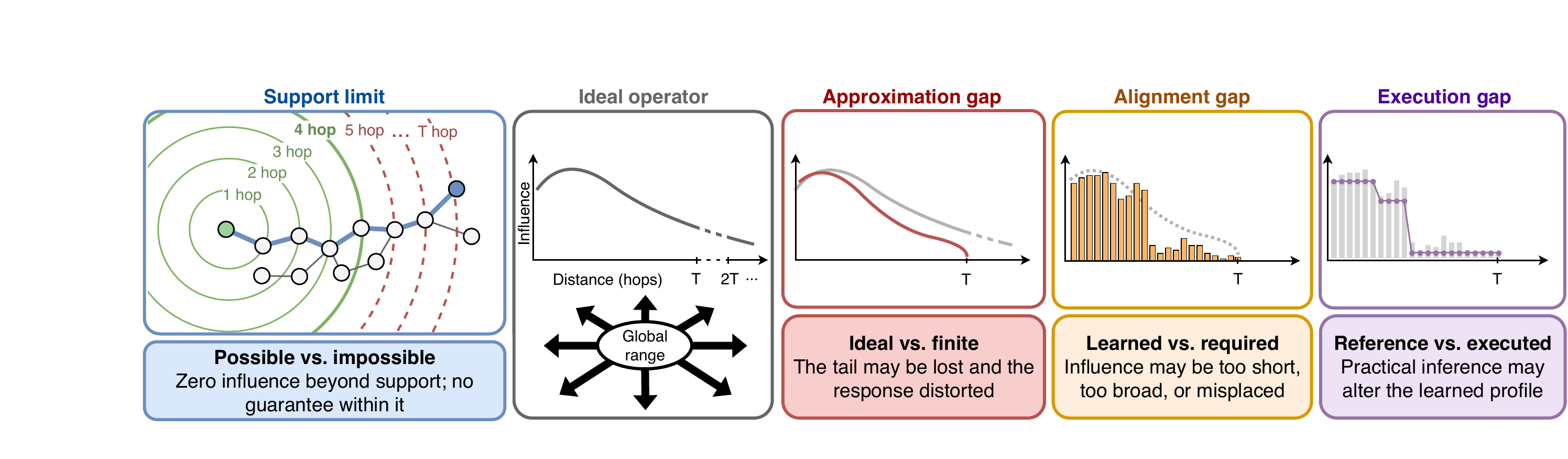}
\caption{\footnotesize
Architectural support determines which node pairs may interact.
When an ideal operator exists, a finite implementation approximates it; training may fail to learn the profile required by the task, and approximations and rounding errors during inference may further alter the learned profile.
}
\label{fig:overview}
\end{figure}

\paragraph{Architectural support.}
The \emph{support} of an output tells which input nodes can possibly affect it.
A local model with $T$ one-hop message-passing steps has support only within $T$ hops, while rewiring, virtual nodes, hierarchy, or global attention can enlarge it.

\paragraph{Realized interaction profiles.}
Support identifies which nodes can affect an output.
For a trained model on given inputs, its realized interaction profile measures how strongly inputs at each distance affect that output.
Models with identical support can have very different realized interaction profiles.
We compare this profile with two references: the \emph{ideal operator profile}, from the exact mechanism approximated by the model (e.g., an equilibrium), and the \emph{target profile}, from the interactions required by the task.
The two references could not coincide.

\paragraph{Learning and numerical execution.}
Learning and numerical execution both shape a model's predictions.
The \emph{learned profile} describes the interactions implied by the trained parameters under accurate computation.
The \emph{executed profile} also reflects the effects of finite iteration budgets, sparsification, solver tolerances, and arithmetic precision, which can alter these interactions even when the parameters remain fixed.

\paragraph{Measuring influence and realized range.}
Let $\set{G}=(\set{V},\set{E})$ be a graph and $d(u,v)$ the shortest-path distance between nodes $u$ and $v$. Let $\mat{X}\in\RR^{|\set{V}|\times F}$ be the node-feature matrix, where $F$ is the number of features per node, and let $\vect{x}_u\in\RR^F$ denote its $u$\textsuperscript{th} row. For an output at node $v$, let $s_v(\mat{X})$ be a scalar task score, e.g., a class margin for classification or one prediction coordinate for regression. For continuous inputs, we measure the influence of input node $u$ by
\[
a_v(u)
=
\norm{\frac{\partial s_v(\mat{X})}{\partial \vect{x}_u}}_2.
\]
For discrete features, we instead use finite, task-valid perturbations rather than relying only on local derivatives: $a_v(u)$ is the absolute change in $s_v$ after replacing the input at $u$ with another value allowed by the task.
Let $A_v=\sum_{u\in\set{V}}a_v(u)$. When $A_v>0$, the normalized node profile is $p_v(u)=a_v(u)/A_v$, and its distance profile is $p_v(r)=\sum_{u:\,d(u,v)=r}p_v(u)$. For $q\in(0,1)$, its realized range $R_q(v)$ is the smallest radius containing at least a fraction $q$ of this measured influence:
\begin{equation}
R_q(v)
=
\min\left\{
r:
\sum_{u:\,d(u,v)\leq r}p_v(u)\geq q
\right\}.
\label{eq:Rq}
\end{equation}
$R_q$ summarizes the realized interaction profile: two models can have the same $R_{90}$\footnote{We use percentile subscripts for reported quantiles, so $R_{90}\equiv R_{q}$ with $q=0.9$.} while placing their influence at different distances.
When a target profile $p_v^\star(r)$ is known, we assess the \alignmentgap{} through profile agreement.
A larger range is useful only if it reduces this gap.
Matching the target distance profile does not guarantee that the model uses the correct nodes or features, since inputs at the same distance are grouped together.
We need task-specific interventions to check if predictions respond correctly to changes in relevant inputs and remain stable when irrelevant inputs are changed.

\paragraph{Tail fidelity.}
Some controlled tasks provide a known target input--output map, so we can compare the model's and target's interactions for individual input nodes.
We define the normalized tail error $\Etail(v;r_0)$ at radius $r_0\geq0$ as a measure of how closely the model's map matches the target map for input nodes more than $r_0$ hops from $v$.
A model with zero response throughout a nonzero target tail has $\Etail=1$, whereas exact recovery gives $\Etail=0$. 
Appendix~\ref{app:range} gives the full definition and edge cases, along with measures of how numerical execution changes model outputs and realized range.

These definitions lead to four questions: which interactions are impossible (the \supportlimit); how a finite implementation differs from its ideal operator (the \approximationgap); how the learned and target profiles differ (the \alignmentgap); and how practical inference changes the reference learned profile (the \executiongap).
Figure~\ref{fig:overview} illustrates the \supportlimit{} and the three gaps from the computational graph and any associated ideal operator to the learned and executed profiles.

\section{Why Possible and Realized Interactions Diverge}
\label{sec:theory}

The following results explain mechanisms behind the \supportlimit, the \approximationgap, and the \executiongap.
Local computation limits support, iterative propagation may require many message-passing steps to approach its ideal limit, and limited numerical precision can distort the final computation. We also show that some mathematical forms of the same filter are harder to fit, although theory alone cannot explain which profile training will select. 
We therefore measure the \alignmentgap{} empirically in Section~\ref{sec:experiments}.
Full theoretical statements and proofs are in Appendix~\ref{app:proofs}.

\subsection{Finite local computation sets the support limit}

\begin{proposition}[Finite local support]
\label{prop:support}
Suppose that every computational path from an input to an output contains at most $T$ operations that aggregate across one edge of the graph, while all remaining operations are node-wise. Then the output at node $v$ is independent of every input node $u$ with $d(u,v)>T$. Parallel branches do not increase this radius unless they are composed sequentially.
\end{proposition}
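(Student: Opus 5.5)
The plan is to model the computation as a directed acyclic computational graph whose vertices are intermediate quantities $h$, each indexed by a graph node $w(h)\in\set{V}$. I will prove by induction over a topological order a \emph{dependency-set} invariant. For an intermediate $h$, let $k(h)$ be the maximum number of edge-aggregating operations on any computational path from an input to $h$. The claim is that $h$ is a function only of the inputs $\vect{x}_u$ with $d(u,w(h))\le k(h)$. Applied to the output at $v$, where $k\le T$, this gives independence from every $u$ with $d(u,v)>T$.

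The base case covers inputs $\vect{x}_u$ themselves, with $k=0$ and dependency set $\{u\}$. The inductive step splits into two kinds of operation.

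\begin{itemize}
\item \textbf{Node-wise operation at node $w$.} This includes feature MLPs, nonlinearities, normalizations applied per node, and combining several quantities at the same node, such as residual connections and concatenating parallel branches. All its arguments live at $w$, and each has $k$ at most that of the result. The dependency set of the result is the union of the arguments' sets, and each lies within the ball $B(w,k(h))$.
\item \textbf{One-hop aggregation at $w$.} Its arguments are quantities at nodes $w'$ with $d(w,w')\le 1$, each with budget at most $k(h)-1$. By the triangle inequality, $d(u,w)\le d(u,w')+1\le k(h)$, so the ball grows by exactly one.
\end{itemize}

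Functional dependence composes: if $h=f(g_1,\dots,g_m)$ and each $g_i$ depends only on inputs in a set $S_i$, then $h$ depends only on inputs in $\bigcup_i S_i$. This closes the induction.

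The remark on parallel branches follows from the same invariant, because $k$ is a maximum over paths, not a sum over branches. Two branches each with $T$ aggregations, merged by a node-wise operation, still give $k=T$. Only sequential composition adds budgets along a single path. Shared weights across nodes are parameters, not inputs, so they do not affect the argument.

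The main obstacle is making "node-wise" and "aggregates across one edge" precise enough that nothing hidden escapes the invariant. Normalizations that pool across the whole graph, such as graph-level batch norm or a softmax over all nodes, would have to count as non-local operations and so are excluded by hypothesis. Degree-based normalization constants, like those in GCN, are fixed functions of the graph structure rather than of $\mat{X}$, so they are harmless. I will state this convention explicitly: every primitive either reads inputs of a single node or reads from the one-hop neighborhood. With that convention, the induction on topological order is routine.
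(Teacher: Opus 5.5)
Your proposal is correct and is essentially the paper's argument: node-wise operations preserve dependency sets, each one-hop aggregation enlarges them by at most one hop via the triangle inequality, and parallel branches only take unions, so induction confines the output at $v$ to the ball of radius $T$. Your per-quantity path budget $k(h)$, with induction over a topological order, replaces the paper's layer-indexed sets $\mathcal{D}_t(v)$; it is a more explicit formalization that tracks the ``every computational path'' hypothesis a little more faithfully.
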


Whenever $R_q(v)$ is defined for a local model with $T$ one-hop message-passing steps, $R_q(v)\leq T$.
If the target map depends on inputs more than $T$ hops away, the model's response to those inputs is zero and $\Etail=1$.
Within the \supportlimit, however, Proposition~\ref{prop:support} says nothing about whether and how training uses the available routes.

\subsection{Why finite updates can greatly reduce interaction range}

Some architectures approximate an ideal operator. 
Their \approximationgap{} is the difference between that operator and the one produced by the finite updates used in practice.

\begin{theorem}[Finite realization of a contractive fixed point]
\label{thm:truncation}
Let $\mat{H}_t$ denote the state after $t$ updates in a finite-dimensional normed vector space and, for each fixed input $\mat{X}$, let $\mat{H}_{t+1}=\Phi_{\mat{X}}(\mat{H}_t)$, where $\Phi_{\mat{X}}$ is contractive in its state argument with contraction factor $0\leq\theta<1$ in the chosen norm. 
Then $\Phi_{\mat{X}}$ has a unique fixed point $\mat{H}^\star$, and for every integer $T\geq0$,
\begin{equation}
\norm{\mat{H}_T-\mat{H}^\star}
\leq
\theta^T\norm{\mat{H}_0-\mat{H}^\star}.
\label{eq:contractive-gap}
\end{equation}
If, in addition, the update is affine, let $\mat{M}$ be the state-transition operator, $\mat{B}$ the input-injection operator, and $\mat{C}$ the initialization operator, so that $\mat{H}_{t+1}=\mat{M}\mat{H}_t+\mat{B}\mat{X}$ and $\mat{H}_0=\mat{C}\mat{X}$. With $\Id$ denoting the identity on the state space, the fixed point is $\mat{H}^\star=\mat{R}\mat{X}$ for $\mat{R}=(\Id-\mat{M})^{-1}\mat{B}$, and the finite operator $\mat{P}_T$, defined by $\mat{H}_T=\mat{P}_T\mat{X}$, satisfies
\begin{equation}
\mat{R}-\mat{P}_T
=
\mat{M}^T(\mat{R}-\mat{C}).
\label{eq:finite-gap-identity}
\end{equation}
Hence $\norm{\mat{R}-\mat{P}_T}\leq\norm{\mat{M}^T}\norm{\mat{R}-\mat{C}}$, and contractivity gives $\norm{\mat{R}-\mat{P}_T}\leq \theta^T\norm{\mat{R}-\mat{C}}$.
\end{theorem}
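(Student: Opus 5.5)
The plan is to handle the general contractive case first and then specialize to the affine case. For existence and uniqueness of $\mat{H}^\star$, I will invoke the Banach fixed-point theorem. A finite-dimensional normed vector space is complete, since all norms are equivalent to the Euclidean one. The map $\Phi_{\mat{X}}$ is a contraction with factor $\theta<1$ on this complete space, so it has a unique fixed point.

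The bound \eqref{eq:contractive-gap} then follows by induction on $T$. The case $T=0$ is trivial. For the inductive step, I use $\mat{H}^\star=\Phi_{\mat{X}}(\mat{H}^\star)$ to write
\[
\norm{\mat{H}_{T+1}-\mat{H}^\star}=\norm{\Phi_{\mat{X}}(\mat{H}_T)-\Phi_{\mat{X}}(\mat{H}^\star)}\leq\theta\norm{\mat{H}_T-\mat{H}^\star}.
\]

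For the affine case, I first identify the fixed point. Contractivity of $\mat{H}\mapsto\mat{M}\mat{H}+\mat{B}\mat{X}$ with factor $\theta$ means $\norm{\mat{M}\mat{Z}}\leq\theta\norm{\mat{Z}}$ for every state difference $\mat{Z}$. Hence the induced operator norm satisfies $\norm{\mat{M}}\leq\theta<1$. The Neumann series therefore converges, $\Id-\mat{M}$ is invertible, and solving $\mat{H}^\star=\mat{M}\mat{H}^\star+\mat{B}\mat{X}$ gives $\mat{H}^\star=\mat{R}\mat{X}$ with $\mat{R}=(\Id-\mat{M})^{-1}\mat{B}$.

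Next I unroll the recursion:
\[
\mat{P}_T=\mat{M}^T\mat{C}+\sum_{k=0}^{T-1}\mat{M}^k\mat{B}.
\]
I compare this with $\mat{R}$. From $(\Id-\mat{M})\mat{R}=\mat{B}$ we get $\mat{R}=\mat{M}\mat{R}+\mat{B}$. Iterating this identity $T$ times yields
\[
\mat{R}=\mat{M}^T\mat{R}+\sum_{k=0}^{T-1}\mat{M}^k\mat{B}.
\]
Subtracting the expression for $\mat{P}_T$ gives \eqref{eq:finite-gap-identity} directly.

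An equivalent route avoids sums altogether. Set $\mat{E}_t=\mat{R}-\mat{P}_t$. Then $\mat{E}_{t+1}=\mat{M}\mat{E}_t$, because both $\mat{R}$ and $\mat{P}_t$ obey the same affine recursion $\mat{Q}\mapsto\mat{M}\mat{Q}+\mat{B}$ at the operator level. Combined with $\mat{E}_0=\mat{R}-\mat{C}$, this gives the identity.

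The norm bounds then follow from submultiplicativity of the operator norm, $\norm{\mat{M}^T(\mat{R}-\mat{C})}\leq\norm{\mat{M}^T}\norm{\mat{R}-\mat{C}}$, together with $\norm{\mat{M}^T}\leq\norm{\mat{M}}^T\leq\theta^T$.

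The main obstacle is being careful about which norms are meant. $\mat{R}$, $\mat{P}_T$ and $\mat{C}$ map input space to state space, while $\mat{M}$ acts on state space. I will therefore take $\norm{\cdot}$ on these operators to be the induced norm from a chosen input norm to the given state norm. The key point is that the contraction hypothesis, which is stated pointwise on states, must translate into $\norm{\mat{M}}\leq\theta$ in the induced state-to-state norm. This holds because differences of states range over the whole space.
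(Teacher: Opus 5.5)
Your proposal is correct and follows the paper's proof essentially step for step: the Banach fixed-point theorem, the recursive contraction estimate, $\norm{\mat{M}}\leq\theta$ giving invertibility of $\Id-\mat{M}$, unrolling $\mat{P}_T$, and submultiplicativity. The one cosmetic difference is how you obtain $\mat{R}-\mat{P}_T=\mat{M}^T(\mat{R}-\mat{C})$: you iterate the finite identity $\mat{R}=\mat{M}\mat{R}+\mat{B}$, whereas the paper splits the Neumann series $\sum_{t\geq0}\mat{M}^t\mat{B}$ at $t=T$; your version needs only invertibility of $\Id-\mat{M}$, so it also covers the paper's closing remark about the case $\rho(\mat{M})<1$.
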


Equation~\eqref{eq:finite-gap-identity} shows why the number of updates alone does not tell us how closely the finite operator $\mat{P}_T$ matches the ideal operator $\mat{R}$.
The remaining gap $\mat{R}-\mat{P}_T$ has two main factors: $\mat{M}^T$ describes how much of the initial error survives after $T$ updates, while $\mat{R}-\mat{C}$ measures the mismatch between the initialization and the ideal operator.
If $\mat{M}^T$ decays slowly or the initial mismatch is large, stopping after a fixed number of updates can make a process that would eventually use information from across the graph rely mostly on nearby nodes or distribute influence over the wrong distances.
Theorem~\ref{thm:truncation} applies beyond any particular graph-filter family.
For normalized restart diffusion, the gap has a closed-form bound, proved in Appendix~\ref{app:restart}.

\begin{corollary}[Normalized restart diffusion]
\label{cor:restart}
Let $\mat{S}$ be a symmetric graph propagation matrix, such as a normalized adjacency matrix, with $\norm{\mat{S}}_2\leq1$. Let $a$ be the signed recurrence coefficient and set $0<\beta=|a|<1$.
The exact equilibrium $\mat{H}^\star=\mat{R}_a\mat{X}$ satisfies
\[
\mat{H}^\star=a\mat{S}\mat{H}^\star+(1-\beta)\mat{X},
\qquad
\mat{R}_a=(1-\beta)(\Id-a\mat{S})^{-1}.
\]
The finite operator $\mat{P}_{a,T}$ is obtained by running the same update for $T$ steps:
\[
\mat{H}_0=\mat{X},
\qquad
\mat{H}_{t+1}=a\mat{S}\mat{H}_t+(1-\beta)\mat{X},
\qquad
\mat{H}_T=\mat{P}_{a,T}\mat{X}.
\]
Then
\begin{equation}
\norm{\mat{R}_a-\mat{P}_{a,T}}_2
\leq
\frac{2\beta^{T+1}}{1+\beta}.
\label{eq:truncation}
\end{equation}
\end{corollary}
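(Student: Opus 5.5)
We apply the affine part of Theorem~\ref{thm:truncation} with $\mat{M}=a\mat{S}$, $\mat{B}=(1-\beta)\Id$ and $\mat{C}=\Id$. First we check that the update is contractive. Since $\norm{a\mat{S}}_2\leq|a|\norm{\mat{S}}_2\leq\beta<1$, the Neumann series for $(\Id-a\mat{S})^{-1}$ converges. The fixed point is therefore $\mat{R}_a=(1-\beta)(\Id-a\mat{S})^{-1}$, as stated. The identity~\eqref{eq:finite-gap-identity} then gives
\[
\mat{R}_a-\mat{P}_{a,T}=(a\mat{S})^T(\mat{R}_a-\Id).
\]
A direct use of $\norm{\mat{M}^T}\norm{\mat{R}_a-\Id}$ with crude bounds would lose constants. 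Instead we use that $\mat{S}$ is symmetric, so every operator in this identity is a function of $\mat{S}$.

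Diagonalize $\mat{S}=\mat{U}\Lambda\mat{U}^\top$ with eigenvalues $\lambda\in[-1,1]$, using $\norm{\mat{S}}_2\leq1$. The $2$-norm of the difference equals $\max_\lambda |g(\lambda)|$, where
\[
g(\lambda)=(a\lambda)^T\left(\frac{1-\beta}{1-a\lambda}-1\right)=(a\lambda)^T\,\frac{a\lambda-\beta}{1-a\lambda}.
\]
Substitute $x=a\lambda\in[-\beta,\beta]$. Since $\beta=|a|$, the sign of $a$ does not matter, and we need to bound
\[
h(x)=|x|^T\,\frac{|x-\beta|}{1-x}\quad\text{on } [-\beta,\beta].
\]
The factor $|x|^T$ is at most $\beta^T$. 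The factor $|x-\beta|/(1-x)=(\beta-x)/(1-x)$ is decreasing in $x$ on this interval, because its derivative has sign $-(1-x)+(\beta-x)=\beta-1<0$. So it is largest at $x=-\beta$, where it equals $2\beta/(1+\beta)$. Both factors attain their maxima at $x=-\beta$, so
\[
\max h = h(-\beta)=\beta^T\cdot\frac{2\beta}{1+\beta}=\frac{2\beta^{T+1}}{1+\beta},
\]
which is~\eqref{eq:truncation}. The bound is attained whenever $-\operatorname{sign}(a)$ lies in the spectrum of $\mathbf{S}$, for example $\lambda=-1$ on a bipartite graph when $a>0$.

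The only real obstacle is getting the constant $2/(1+\beta)$ instead of a weaker one such as $2/(1-\beta)$. This requires the scalar spectral computation rather than submultiplicativity alone. The key observations are that both factors of $h$ peak at the same endpoint $x=-\beta$, and that the restart term keeps $|x-\beta|$ small near $x=\beta$, where the denominator $1-x$ would otherwise blow up. The remaining steps are routine: the Neumann series and the symmetric functional calculus that turns the operator norm into a maximum over eigenvalues.
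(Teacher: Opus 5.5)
Your proposal is correct and follows essentially the paper's argument: both write the gap as $(a\mat{S})^T(a\mat{S}-\beta\Id)(\Id-a\mat{S})^{-1}$, use the spectral theorem for the symmetric $\mat{S}$ to reduce to a scalar maximization, and observe that the power factor and the decreasing rational factor both peak at the endpoint $\lambda=-\operatorname{sign}(a)$. The differences are cosmetic: you obtain the gap identity from Theorem~\ref{thm:truncation} with $\mat{C}=\Id$ instead of unrolling the recurrence directly, and you parametrize by $x=a\lambda\in[-\beta,\beta]$ instead of factoring out $\beta^{T+1}$ and using $x=\operatorname{sign}(a)\lambda\in[-1,1]$.
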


Thus, $\mat{R}_a$ is the result of running the recurrence to equilibrium, whereas $\mat{P}_{a,T}$ is the result of stopping it after $T$ updates.
The exact operator combines walks of all lengths, while the finite operator includes only those reached within its update budget.
Equation~\eqref{eq:truncation} controls worst-case operator error; near-unit $\beta$ makes that error decay slowly.

We next characterize the distance profile of the finite realization itself.
The missing ingredient is how quickly influence travels during those updates.
A positive diffusion uses nonnegative weights, so contributions from different walks add rather than cancel; it can therefore be interpreted as an average over random walks.
A common case is ordinary diffusion: although a $T$-step walk can reach $T$ hops, its back-and-forth motion leaves most endpoints only about $\sqrt{T}$ hops from the start.
This creates an \approximationgap{} between the exact equilibrium $\mat{R}_a$, obtained by running the recurrence until convergence, and the finite implementation $\mat{P}_{a,T}$, obtained by stopping after $T$ updates.
If the exact equilibrium relies mainly on walks of length about $m$, the finite implementation is limited to an effective walk length of $\min\{m,T\}$.
On an ordinarily diffusive graph, the equilibrium range is about $\sqrt{m}$, while the finite range is about $\sqrt{\min\{m,T\}}$.
Preserving a nonvanishing fraction of the equilibrium range therefore requires $T$ to be comparable to $m$.
In other words, doubling the desired interaction range requires roughly four times as many updates.
Theorem~\ref{thm:transport} in Appendix~\ref{app:transport-assumptions} states this result formally and also covers graphs on which walks spread faster or slower.

This general argument applies to the restart recurrence above when $a=\beta>0$: each update mixes propagated information with the original input, using weights $\beta$ and $1-\beta$.
Its typical walk length is $m\approx1/(1-\beta)$, so $\beta$ near one produces a longer-range equilibrium but requires more updates to recover it (Corollary~\ref{cor:restart-range}, Appendix~\ref{app:transport-assumptions}).

Finally, this quadratic cost is a consequence of slow spreading under nonnegative averaging, rather than a requirement of all local methods.
For example, Chebyshev polynomial filters also use local updates, but combine positive and negative contributions to approximate the same equilibrium more efficiently.
For a fixed approximation accuracy, their required update count grows nearly in proportion to the equilibrium's range (Proposition~\ref{prop:cheb-acceleration}, Appendix~\ref{app:cheb-acceleration}).

\subsection{Equivalent formulas can behave very differently}

Being able to express the right filter on paper does not mean that a model can easily learn or  execute it. Two problems can arise. First, the building blocks of a filter may look so similar on the fitting points that the fit cannot tell them apart. A tiny change in the data can then produce a large change in the fitted coefficients. 
Second, the filter response may be a small remainder after large positive and negative terms nearly cancel.
Rounding operations can then noticeably change that remainder.

\begin{theorem}[Simplified conditioning and cancellation bounds]
\label{thm:realization}
Consider a nonsingular square basis fit $\mat{B}\vect{c}=\vect{y}$ with nonzero target $\vect{y}$. Let $\kappa_2(\mat{B})$ be its condition number, let $\delta_B$ and $\delta_y$ be the relative sizes of perturbations to $\mat{B}$ and $\vect{y}$, and let $\Delta\vect{c}$ be the resulting coefficient change. Then
\begin{equation}
\frac{\norm{\Delta\vect{c}}_2}{\norm{\vect{c}}_2}
\leq
\frac{\kappa_2(\mat{B})}
{1-\kappa_2(\mat{B})\delta_B}
\left(\delta_B+\delta_y\right).
\label{eq:synthesis-bound-simple}
\end{equation}
If the $m$ already formed terms $c_jb_j(z)$ are summed sequentially in floating-point arithmetic with unit roundoff $u$, let $r(z)=\sum_{j=1}^{m}c_jb_j(z)$, let $\operatorname{fl}(r(z))$ denote the computed sum, and let $\eta>0$ be a small denominator safeguard. Then
\begin{equation}
\frac{|\operatorname{fl}(r(z))-r(z)|}
{|r(z)|+\eta}
\leq
\frac{(m-1)u}{1-(m-1)u}
\chi_\eta(z),
\qquad
\chi_\eta(z)
=
\frac{\sum_j|c_jb_j(z)|}{|r(z)|+\eta}.
\label{eq:cancellation-bound-simple}
\end{equation}
The bounds hold when their denominators are positive.
\end{theorem}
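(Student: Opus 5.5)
The two bounds are independent, so I will prove them separately with standard numerical-analysis arguments: a Neumann-series perturbation argument for \eqref{eq:synthesis-bound-simple} and a running-error analysis of recursive summation for \eqref{eq:cancellation-bound-simple}.

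\textbf{Conditioning bound.} I will make the perturbation model precise. The perturbed system is $(\mat{B}+\Delta\mat{B})(\vect{c}+\Delta\vect{c})=\vect{y}+\Delta\vect{y}$, with $\norm{\Delta\mat{B}}_2\le\delta_B\norm{\mat{B}}_2$ and $\norm{\Delta\vect{y}}_2\le\delta_y\norm{\vect{y}}_2$.

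1. Subtracting $\mat{B}\vect{c}=\vect{y}$ gives $(\mat{B}+\Delta\mat{B})\Delta\vect{c}=\Delta\vect{y}-\Delta\mat{B}\,\vect{c}$.
2. Hence $\Delta\vect{c}=(\Id+\mat{B}^{-1}\Delta\mat{B})^{-1}\mat{B}^{-1}(\Delta\vect{y}-\Delta\mat{B}\vect{c})$.
3. Since $\norm{\mat{B}^{-1}\Delta\mat{B}}_2\le\kappa_2(\mat{B})\delta_B<1$ by the positivity hypothesis, the Neumann series bounds the inverse factor in norm by $1/(1-\kappa_2\delta_B)$.
4. Use $\norm{\vect{y}}_2\le\norm{\mat{B}}_2\norm{\vect{c}}_2$. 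This gives $\norm{\mat{B}^{-1}\Delta\vect{y}}_2\le\kappa_2\delta_y\norm{\vect{c}}_2$ and $\norm{\mat{B}^{-1}\Delta\mat{B}\vect{c}}_2\le\kappa_2\delta_B\norm{\vect{c}}_2$.
5. Note that $\vect{c}\ne0$ because $\vect{y}\ne0$. Dividing by $\norm{\vect{c}}_2$ yields \eqref{eq:synthesis-bound-simple}.

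\textbf{Cancellation bound.} I will use the standard model $\operatorname{fl}(a+b)=(a+b)(1+\epsilon)$ with $|\epsilon|\le u$, treating the terms $t_j=c_jb_j(z)$ as exact inputs.

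1. Let $s_k$ be the computed partial sums, with $s_1=t_1$ and $s_k=(s_{k-1}+t_k)(1+\epsilon_k)$.
2. Unrolling gives $\operatorname{fl}(r)=\sum_j t_j\prod_{i=\max(j,2)}^{m}(1+\epsilon_i)$. Each term therefore carries at most $m-1$ rounding factors.
3. By the standard lemma (Higham), $|\prod_{i=1}^{k}(1+\epsilon_i)-1|\le\gamma_k:=ku/(1-ku)$ whenever $ku<1$. I will prove it by induction on $k$, or simply cite it.
4. Since $\gamma_k$ is monotone in $k$, $|\operatorname{fl}(r)-r|\le\sum_j|t_j|\,\gamma_{m-1}$.
5. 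Dividing by $|r(z)|+\eta>0$ gives exactly $\gamma_{m-1}\chi_\eta(z)$. This is \eqref{eq:cancellation-bound-simple}, and positivity of $1-(m-1)u$ is the stated hypothesis.

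\textbf{Main obstacle.} The main obstacle is bookkeeping rather than depth. In the first bound I must fix the perturbation convention, namely that $\delta_B$ is measured in the operator 2-norm, so that $\kappa_2$ appears cleanly. In the second I must check the product-of-roundoffs lemma carefully, including the fact that $t_1$ and $t_2$ both pick up $m-1$ factors. I would state both conventions explicitly at the start of the appendix proof.
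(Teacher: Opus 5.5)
Your proposal is correct and follows the paper's proof. The conditioning part uses the same factorization $\Delta\vect{c}=(\Id+\mat{B}^{-1}\Delta\mat{B})^{-1}(\mat{B}^{-1}\Delta\vect{y}-\mat{B}^{-1}\Delta\mat{B}\,\vect{c})$, the Neumann bound, and $\norm{\vect{y}}_2\leq\norm{\mat{B}}_2\norm{\vect{c}}_2$. For the cancellation part, the paper only cites the standard sequential-summation bound $\gamma_{m-1}\sum_j|c_jb_j(z)|$, whereas you also derive it by unrolling the partial sums and applying the $\gamma_k$ lemma.
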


The condition number $\kappa_2(\mat{B})$ is large when the fit has difficulty distinguishing the basis functions. The cancellation factor $\chi_\eta(z)$ is large when the individual signed terms are much larger than their final sum. These are worst-case upper bounds: they identify ways in which errors can be amplified, but do not say that training or execution must fail.
In the terminology of Section~\ref{sec:framework}, high basis sensitivity can enlarge the \alignmentgap, while severe cancellation can create an \executiongap. Thus, two parameterizations can have the same architectural support and represent the same functions under exact arithmetic, yet still differ in how easily they are learned and how faithfully they run on a computer.
Theorem~\ref{thm:realization} is a simplified statement of Theorem~\ref{thm:realization-formal} in Appendix~\ref{app:precision}, which gives the exact assumptions and discusses the additional rounding errors incurred when the basis values and products are themselves computed in floating point.

\section{Related Work}

Jacobian-based influence distributions, hop-wise sensitivity profiles, range summaries, and effective receptive fields are established ways to link nodes to a GNN output~\citep{xu2018jumping,topping2022oversquashing,finder2025effective,bamberger2025range,marisca2025over,attali2026diagnostics,liang2026city}.
The quantities $p_v(r)$ and $R_q$ build on this family of measures and help diagnose the \supportlimit, the \approximationgap, the \alignmentgap, and the \executiongap{} defined in Section~\ref{sec:framework}.
The \supportlimit{} identifies underreaching if the task requires interactions beyond those the model can represent~\citep{errica2025amp}.
When the required interactions lie within this support, weak distant sensitivity may be consistent with oversquashing but does not identify its mechanism; oversmoothing and heterophily are distinct phenomena~\citep{arnaiz2025demystifying}.

Long-range designs modify paths or dynamics through rewiring and attention, adaptive depth, and rational, state-space, or non-dissipative propagation~\citep{gutteridge2023drew,rampasek2022gps,errica2025amp,eliasof2025grama,ceni2025mpssm,heilig2025phdgn, gravina2023adgn}.
PageRank and implicit GNNs instead define fixed-point or infinite-depth operators~\citep{klicpera2019predict,gu2020implicit,liu2021eignn}.
MGNNI relates the effective range of implicit GNNs to the convergence of their iterative computation~\citep{liu2022mgnni}.
We, instead, directly quantify the cost of a finite update budget: under ordinary diffusion, doubling the interaction range requires roughly four times as many restart-diffusion updates.
Because this cost is not evident from the architecture alone, we compare the profiles that models actually learn and how well they match the task.

Spectral filters use Chebyshev, monomial, Bernstein, and rational coordinates~\citep{defferrard2016convolutional,chien2021gpr,tremblay2026generalized,he2021bernnet,he2022chebnetii,klicpera2019predict,levie2019cayley,bianchi2021arma,eliasof2025grama}.
Prior related work studies approximation, stability, and transferability~\citep{levie2021transfer,hariri2025chebnet}.
We instead test conditioning, target behavior between sampled frequencies, and finite-precision reliability, which differs from formal transferability of a fixed regular filter.

Long-range benchmarks test models on different tasks and graph types.
Good overall performance alone does not establish that predictions depend on distant nodes, since local information and optimization choices can dominate the scores~\citep{dwivedi2022lrgb,tonshoff2023gap,liang2026city,mathys2026lrim,miglior2026echo}.
Further benchmark discussion is in Appendix~\ref{app:datasets}.

\section{Experiments}
\label{sec:experiments}

\subsection{Controlled filters isolate long-range failure modes}
\label{sec:controlled}

In the following we consider three complementary synthetic tasks, detailed in Appendix~\ref{app:datasets}, and discuss their results in the same order.
In \emph{Marked PathCopy}, a path contains a marked source carrying one of five class labels, a marked query at a variable distance, and three marked distractors carrying unrelated labels; the model must reproduce the source label at the query.
\emph{ResolventCopy} asks to regress the input features $\mat{X}$ to $\mat{Y}=(1-\alpha)(\Id-\alpha\mat{S})^{-1}\mat{X}$, where $\mat{S}$ is a connectivity matrix. The inverse $(\Id-\alpha\mat{S})^{-1}$ is the \emph{resolvent} of $\mat{S}$. For $\alpha\in(0,1)$, it can be written as an infinite sum, $(\Id-\alpha\mat{S})^{-1}=\sum_{k=0}^{\infty}\alpha^k\mat{S}^k$, so the target combines information from every possible number $k$ of neighbor-mixing steps, while the weight $\alpha^k$ decreases for longer propagation.
Larger $\alpha$ makes this decay slower and produces a longer-range tail, making the task harder for models with limited reach or a fixed number of propagation steps.
Finally, in \emph{RingTransfer}~\citep{bodnar2021cwn}, a cycle of $2d$ nodes contains a marked source and a marked query at the antipode, $d$ hops away; the model must reproduce the source's class label at the query. 
In RingTransfer, we test whether a filter fitted to one cycle's spectrum remains accurate on graphs with different spectra, including cycles with different numbers of nodes and paths.

Across these tasks, we make three comparisons.
First, Finite ARMA~\citep{bianchi2021arma}, which runs learned recurrent propagation for 20 steps, is compared with the Exact bank, which combines eight fixed recurrent filters evaluated at equilibrium, where further propagation updates leave the node representations unchanged.
Second, the Exact bank is compared with 20-step restart, which uses the same eight filters but stops after 20 steps; the two models learn how to combine their outputs independently.
Third, Chebyshev~\citep{defferrard2016convolutional} and Monomial use different bases for the same degree-20 polynomial filter class.
Full experimental details are in Appendix~\ref{app:controlled}.

\begin{figure}[ht]
\centering
\begin{minipage}[t]{0.297\textwidth}
\centering
\includegraphics[width=\linewidth]{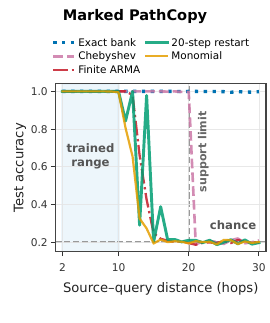}
\end{minipage}\hfill%
\begin{minipage}[t]{0.350\textwidth}
\centering
\includegraphics[width=\linewidth]{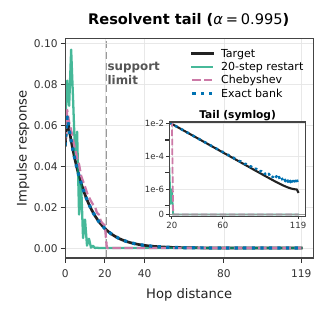}
\end{minipage}\hfill%
\begin{minipage}[t]{0.350\textwidth}
\centering
\includegraphics[width=\linewidth]{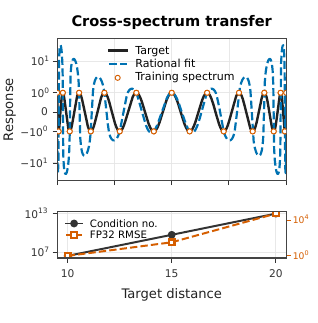}
\end{minipage}
\caption{
\textbf{Left}: Marked PathCopy accuracy; shading marks training distances and the dashed line marks the support limit at 20 hops.
\textbf{Middle}: aggregate error on ResolventCopy hides a tail that finite models miss completely. \textbf{Right}: in RingTransfer, a rational fit matches the frequencies present in the training graph but becomes inaccurate at frequencies that graph does not contain as conditioning worsens.
}
\label{fig:controlled-diagnostics}
\end{figure}

On Marked PathCopy (Figure~\ref{fig:controlled-diagnostics}-left), Finite ARMA, 20-step restart, and Monomial lose accuracy beyond the trained range despite their 20-hop reach.
Restart's accuracy peaks at even distances because its output retains a strong contribution from walks of exactly 20 steps.
On a path without self-loops, these walks can end only at even distances from the source (Appendix~\ref{app:pathcopy-parity}).
Chebyshev stays accurate through 20 hops and falls to chance beyond its \supportlimit; the Exact bank stays accurate through 30 hops.
Source-class and distractor perturbations, together with separate query-relocation tests, confirm that the Exact bank uses the marked source (Appendix~\ref{app:controlled}).
Chebyshev and Monomial span the same degree-20 filters, so their difference reflects ease of fitting rather than reach or expressivity.

On ResolventCopy, Figure~\ref{fig:controlled-diagnostics}-middle shows how a signal placed at one node affects nodes farther away for the long-range $\alpha=0.995$ target. 
Before the dotted 20-hop boundary, Chebyshev closely follows the black target curve, while 20-step restart already oscillates away from it. 
Beyond the \supportlimit, both finite filters become exactly zero and discard the distant tail.
The inset enlarges this region: only the Exact bank continues alongside the target.
This task reveals two different failures. 
Chebyshev captures the main response but loses all influence beyond its support, whereas 20-step restart approximates the equilibrium poorly even at shorter distances despite using the same recurrence coefficients of the Exact bank.
This \approximationgap{} illustrates a limitation of stationary restart diffusion: convergence is slow when each update reduces the remaining error only slightly, as quantified by Corollary~\ref{cor:restart}.
Corollary~\ref{cor:restart-range} explains why restart diffusion needs many updates to recover the target's range: on these graphs, its realized range grows only as the square root of the update count.
The Chebyshev realization can approximate the same target faster, as described in Appendix~\ref{app:cheb-acceleration}.
Because the distant tail contributes only a small share of the total signal, losing it entirely may increase the average error only slightly.
The tail-normalized error $\Etail$ is therefore needed to show whether remote influence survives.
Notably, when we leave the Exact bank unchanged but stop it after 20 updates, its measured influence disappears beyond 20 hops (Appendix~\ref{app:controlled-replay}).

Finally, the RingTransfer results in Figure~\ref{fig:controlled-diagnostics}-right show how ill-conditioned fitting can create both an \alignmentgap{} and an \executiongap.
The orange circles mark the frequencies of the ring used for training.
The blue rational fit passes through those points and therefore looks successful on that graph, but it oscillates far from the black target curve between them.
A different graph topology can contain frequencies in these gaps and expose the error.
Let FP32 error be the root-mean-square difference between the target and the fitted response when that response is evaluated in 32-bit floating-point arithmetic across a dense grid of graph frequencies.
The lower plot shows that, as the target distance grows, both the conditioning problem and the FP32 error rise sharply, leading to poor transfer to new graph spectra.
Theorem~\ref{thm:realization} explains these trends: ill-conditioning makes the RingTransfer fit sensitive to small perturbations during fitting, while cancellation amplifies finite-precision errors during execution. 
We also observe that transfer accuracy also depends on the target: the same rational family accurately fits the smooth resolvent response (Appendix~\ref{app:oracle}).

\subsection{Support, range, and alignment across GNN families}
\label{sec:gnn-families}

We compare GNN families by their \supportlimit, their realized interaction profiles and ranges, and how closely these profiles match the task's requirements (the \alignmentgap).
We compare GatedGCN~\citep{bresson2018gatedgcn}, Stable-ChebNet~\citep{hariri2025chebnet}, AMP~\citep{errica2025amp}, GraphGPS~\citep{rampasek2022gps}, MP-SSM~\citep{ceni2025mpssm}, Anti-Symmetric DGN (A-DGN)~\citep{gravina2023adgn}, Tikhonov GNN~\citep{tremblay2026generalized}, SONAR~\citep{trenta2025sonar}, and GRIT~\citep{ma2023grit}; a graph-agnostic MLP provides a no-propagation control.
Complete architecture and training details are in Appendix~\ref{app:models}.
The models share a pointwise encoder and prediction head.
GatedGCN, Stable-ChebNet, AMP, MP-SSM, A-DGN, and SONAR have a maximum spatial support of 20 hops through local steps, polynomial degree, or recurrent updates.
GraphGPS, GRIT, and the Tikhonov inverse operator instead have whole-graph support, whereas the MLP cannot combine information among nodes.

\begin{figure}[h]
\centering
\includegraphics[width=\textwidth]{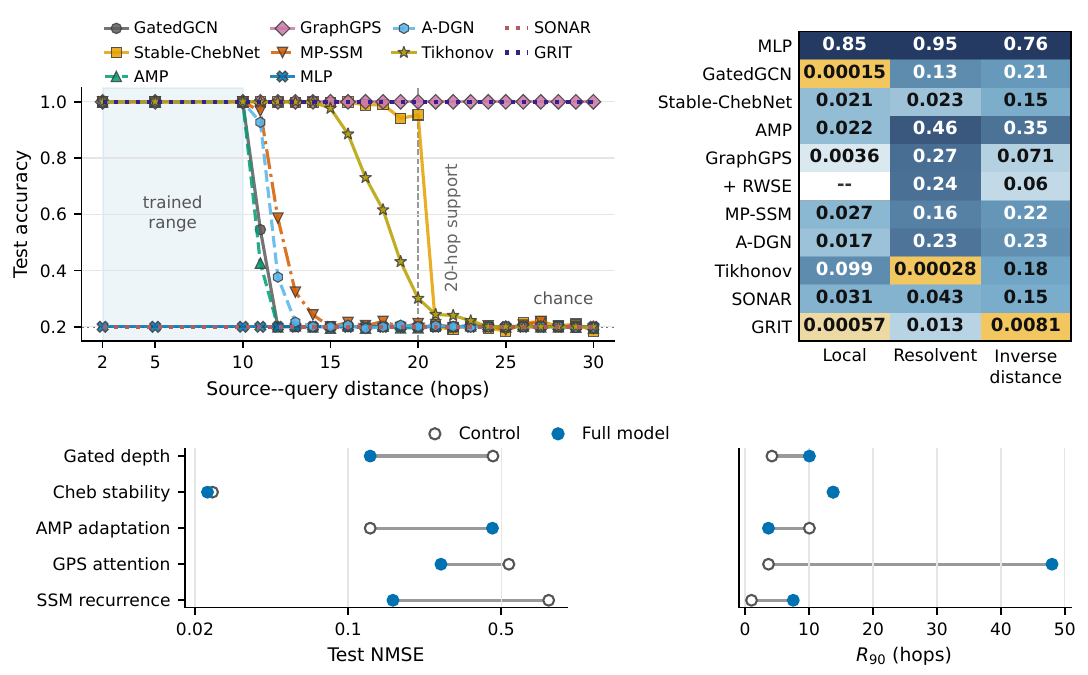}
\caption{
    \textbf{Top-left}: copying accuracy on the test set.
    Lines average five runs; shading marks the trained distances 2--10.
    The horizontal line marks five-class chance and the vertical line marks the local models' 20-hop support.
    \textbf{Top-right}: RangeProfileCopy test NMSE.
    \textbf{Bottom}: differences between each model and its matched control in prediction error and realized range $R_{90}$ on the long-range resolvent. Open markers denote matched controls, filled blue markers denote models with the mechanism being tested, and error bars show sample standard deviations across three runs.}
\label{fig:architecture-summary}
\end{figure}

We consider two complementary tasks.
First, we reuse Marked PathCopy with a different aim than in Section~\ref{sec:controlled}: the latter isolates failure mechanisms using controlled filters; here, we compare how GNNs generalize beyond the trained distances.
We train at distances 2--10 and verify success on training and new test graphs throughout this range before interpreting performance at longer distances.
Across all five runs, every GNN except SONAR achieves 100\% training and at least 99.6\% test accuracy at each trained distance; SONAR and the MLP remain at chance-level (20\% test accuracy).
Their outcomes at longer distances nevertheless differ sharply (Figure~\ref{fig:architecture-summary}, upper-left).
GraphGPS and GRIT retain 100\% accuracy through 40 hops.
Stable-ChebNet retains 95\% at distance 20 but drops to 20\% at distance 21, just beyond its 20-hop \supportlimit{} (Proposition~\ref{prop:support}).
GatedGCN, AMP, MP-SSM, and A-DGN fall to about 20\% at distance 15, inside their nominal 20-hop support.
Tikhonov reaches 97\% at distance 15 but only 30\% at distance 20 and is near chance by distance 24, despite its global support.
Besides SONAR and the MLP, these failures occur beyond the trained distances, after successful fitting and generalization within them.
To determine whether long-distance failures reflect loss of the source signal, reliance on distractors, or simply the larger test graphs, we run counterfactual tests that alter the source label, distractors, and source/query positions while keeping the trained checkpoint and graph topology fixed. 
These tests (Appendix~\ref{app:competence-diagnostics}) show that, where long-distance accuracy drops, predictions no longer reliably match the new source label when it is changed.
Moving the query to a node 2--10 hops from the source makes the GNNs that learned the training task predict the source label correctly, without changing the graph's size or connections.

Next, we consider an additional task called \emph{RangeProfileCopy}, which tests whether a model can aggregate information distributed across many nodes.
Each node in a path- or ladder-graph receives a random input, and the model must predict, at every node, a weighted average of inputs from across the graph.
We use three rules: ``Local'' averages inputs within two hops, whereas ``Resolvent'' and ``Inverse Distance'' also give weight to distant inputs.
Together, they reveal whether the model combines information over the distances each rule requires.
The upper-right heat map in Figure~\ref{fig:architecture-summary} compares performance under the three RangeProfileCopy averaging rules.
GatedGCN performs best for the ``Local'' two-hop average, Tikhonov for the smooth global ``Resolvent'', and GRIT for the ``Inverse Distance'' rule.
Success therefore depends on matching the task's distance weighting, not simply on reaching farther.
Additional results and numerical checks are in Appendices~\ref{app:architecture-results} and~\ref{app:tikhonov-replay}.

Figure~\ref{fig:architecture-summary}-bottom reports the individual propagation mechanisms using the long-range ``Resolvent'' rule.
In this task, every output combines inputs from across the graph, with influence that decays slowly enough for distant nodes to matter.
Each comparison pairs a model with the mechanism of interest to a closely matched control that removes or alters it.
The bottom-left plot shows prediction error.
The bottom-right summarizes each realized interaction profile by its range $R_{90}$ (Equation~\eqref{eq:Rq}).
Moving right therefore means that a larger radius is needed to accumulate the same fraction of sensitivity.
The five control-to-model changes are: GatedGCN from 5 to 20 propagation steps; vanilla degree-20 ChebNet to Stable-ChebNet's parameterization; fixed 20-step GatedGCN to AMP's learned depth; GraphGPS with local message passing only to GraphGPS with also global self-attention; and one-step MP-SSM to its 20-step recurrence.
Increasing GatedGCN depth and adding MP-SSM recurrence improve prediction and extend realized range.
Global attention greatly extends GraphGPS's range and lowers its prediction error, while AMP has higher error and a shorter range than its fixed-depth control.
The two ChebNet variants nearly overlap, indicating little effect from the stabilization mechanism.
Table~\ref{tab:e2-architectural-contrasts} reports the numerical comparisons and agreement with the target profile used to assess the \alignmentgap.
Repeating inference with the same trained parameters in BF16 instead of FP64 increases prediction error for Stable-ChebNet and MP-SSM while barely changing their measured range (Table~\ref{tab:e2-precision}).
This illustrates the \executiongap: numerical errors can degrade predictions even when the measured range of input influence remains nearly unchanged.
Checking $R_{90}$ alone would miss this loss of accuracy.

\subsection{Is remote context useful for prediction?}
\label{sec:remote-pairs}

\begin{figure}[ht]
\centering
\includegraphics[width=.97\textwidth]{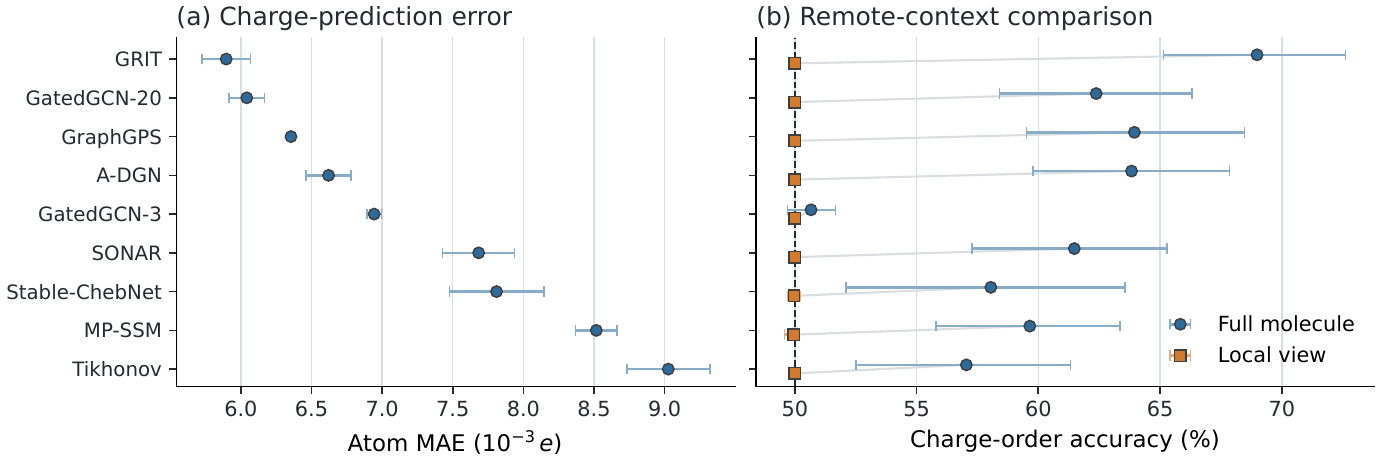}
\caption{
    \textbf{Left}: mean test error across runs; lower is better.
    \textbf{Right}: how often each model correctly identifies which atom in a matched pair has the larger charge when it sees the complete molecules (blue) or only the matched local views (orange).
    A blue--orange gap means that the rest of the molecule helps the model.
    The dashed line marks the $0.5$ local-only baseline.
}
\label{fig:e3-molecular}
\end{figure}

ECHO-Charge provides molecules together with a reference partial charge for every atom~\citep{miglior2026echo}.
We train different GNNs to predict these charges from atoms and bonds, omitting each atom's distance from the molecular center of mass because it encodes global position.
The left panel of Figure~\ref{fig:e3-molecular} reports their usual average error over 8,519 held-out molecules.
This tells us which model predicts charges well overall, but not whether it uses information from atoms far away.
Since the target distance profile is unknown, we cannot measure the \alignmentgap{} by comparing profiles.
We instead use a paired test, \emph{RemotePairs}, to check whether distant information improves the model's predictions.
Its goal is to identify which atom in each pair has the larger charge.
We first find atoms in different molecules whose chemical surroundings within 2, 3, or 4 bond hops are indistinguishable to the models, even though the molecules differ beyond the matched radius.
For every such pair, we give the same already-trained model two versions of the problem.
In the first, it sees both complete molecules.
In the second, we cut away everything outside the matched surroundings and show it only the two identical local views.
If the model is more accurate with the complete molecules than with the identical local views, the removed context was useful; if its accuracy is unchanged, it was not.

The right panel of Figure~\ref{fig:e3-molecular} shows this comparison, averaging the three radii equally.
Eight models are reliably better when they can see the complete molecules.
GRIT reaches 69\% matched-pair accuracy with full context versus 50\% after cropping, the largest mean improvement in this comparison.
The three-layer GatedGCN illustrates the \supportlimit{} in Proposition~\ref{prop:support}: matching through three or four hops covers its entire support, so additional context cannot affect its predictions.
As expected, the two views then give the same predictions and the score returns to the $0.5$ baseline.
Appendix~\ref{app:echo-remotepairs} gives the exact matching, cropping, scoring, and uncertainty procedure; Appendix~\ref{app:molecular} gives the training details and complete numerical results.

\section{Conclusions}
\label{sec:conclusion}

This study challenges the common assumption that a GNN is long-range simply because its architecture can connect distant nodes.
Our framework separates architectural support, finite approximation, task alignment, and execution, revealing failures to use distant information correctly even when overall prediction error is low.
Across controlled and molecular tasks, models with similar architectural support differ in their realized interactions: which distances affect predictions, whether these influences match the task, and whether practical inference preserves them.
The results point toward a conceptually simple but consequential standard: long-range claims should report where interaction is possible, where it occurs, whether it matches the task, and whether execution preserves it.
The goal is not maximal reach, but the right computation at the right distance.

\section*{Acknowledgments}
This work is supported by the Research Council of Norway through \textit{RELAY:~Relational Deep Learning for Energy Analytics} (project no. 345017).
The author wishes to thank NVIDIA Corporation for donating the GPUs used in this project.

\bibliographystyle{unsrtnat}
\bibliography{biblio}

\appendix
\section{Measurement Definitions and Edge Cases}
\label{app:range}

This appendix makes the measurements introduced in Section~\ref{sec:framework} precise and specifies their edge cases.
In the experiments, task-valid perturbations test source use on Marked PathCopy (Section~\ref{sec:controlled} and Appendix~\ref{app:competence-diagnostics}), while gradient-based distance profiles, range quantiles, and task-alignment metrics characterize the RangeProfileCopy architectural contrasts (Section~\ref{sec:gnn-families} and Appendix~\ref{app:mechanism-precision}).
Interaction blocks and tail error measure whether models recover the distant part of the known target map in ResolventCopy and RangeProfileCopy (Appendices~\ref{app:controlled} and~\ref{app:rangeprofilecopy}).
Output drift and range retention quantify the effects of numerical precision when replaying fixed checkpoints (Appendix~\ref{app:mechanism-precision}).

In the theory, range and tail error express the consequences of the support bound in Proposition~\ref{prop:support} (Section~\ref{sec:theory}).
The operator-induced profiles and their total-variation distance are used in the positive-diffusion range bounds and proofs in Appendices~\ref{app:transport-assumptions}--\ref{app:path-restart}, including Theorem~\ref{thm:transport} and its restart-diffusion specializations.
The conventions below specify how these measurements handle zero influence, zero target-tail energy, and disconnected graphs.

\subsection{Measurement definitions}

\paragraph{Graphs and outputs.}
Let $\set{G}=(\set{V},\set{E})$ be an undirected graph with node set $\set{V}$ and edge set $\set{E}$.
Let $\mat{X}\in\RR^{|\set{V}|\times F}$ be its node-feature matrix, where $F$ is the number of features per node and the $u$\textsuperscript{th} row $\vect{x}_u\in\RR^F$ contains the features of node $u$.
For nodes $u,v\in\set{V}$, let $d(u,v)$ be their shortest-path distance.
We let $d(u,v)=\infty$ when $u$ and $v$ lie in different connected components.
At each node $v$, we reduce the model output to a scalar score $s_v$: for classification, the correct-class logit minus the largest competing logit; for regression, a selected output channel.

\paragraph{Derivative and perturbation influence.}
For continuous inputs, we measure the influence of node $u$ on the score at node $v$ by the size of the corresponding gradient:
\[
a^{\mathrm{grad}}_v(u)=\norm{\frac{\partial s_v(\mat{X})}{\partial \vect{x}_u}}_2.
\]
The Euclidean norm combines the score's sensitivity to the $F$ features of node $u$ into one nonnegative value.
Discrete inputs cannot be varied infinitesimally, so we instead make a valid change to node $u$ and measure the resulting change in the score at node $v$.
The allowed changes depend on the task: they may substitute another valid class, swap node roles, alter a distractor value, or replace a feature with a conditional baseline.
Let $\set{Q}_u$ be a finite probability distribution over the changes allowed by the task, and let $\boldsymbol{\Delta}_u$ be a perturbation matrix sampled from $\set{Q}_u$ and added to $\mat{X}$.
We define the counterfactual, or finite-change, influence as the average absolute score change:
\[
a^{\mathrm{cf}}_v(u)=\mathbb{E}_{\boldsymbol{\Delta}_u\sim \set{Q}_u}\lvert s_v(\mat{X}+\boldsymbol{\Delta}_u)-s_v(\mat{X})\rvert.
\]
Here, $\mathbb{E}$ denotes the average over perturbations drawn from $\set{Q}_u$.

\paragraph{Realized interaction profile, range, and task alignment.}
Let $a_v(u)$ denote the appropriate influence measure for the input type: $a^{\mathrm{grad}}_v(u)$ for continuous inputs or $a^{\mathrm{cf}}_v(u)$ for discrete inputs.
Its sum over all input nodes, $A_v=\sum_{u\in\set{V}}a_v(u)$, is the total measured influence on the score at node $v$.
When $A_v>0$, the normalized node influence $p_v(u)=a_v(u)/A_v$ is the fraction of this total attributable to node $u$.
For a finite graph distance $r\in\{0,1,2,\ldots\}$, the distance profile
\[
p_v(r)=\sum_{u:\,d(u,v)=r}p_v(u)
\]
is the fraction of measured influence contributed by nodes exactly $r$ hops from $v$; it describes the model's realized interaction profile at the evaluated input and settings.
For $q\in(0,1)$, its realized range $R_q(v)$ in Equation~\eqref{eq:Rq} is the smallest radius containing at least a fraction $q$ of the total influence.
When $A_v=0$, no input node has measured influence on the score, so normalized-profile summaries are undefined and the case is recorded as zero influence.

We also define the expected influence distance $\sum_{r=0}^{\infty}r p_v(r)$, the tail mass $\sum_{r>r_{\mathrm{task}}}p_v(r)$ beyond a task-defined radius $r_{\mathrm{task}}$, and the complete distance profile.
When the task specifies a target profile $p^\star(r)$, which describes how much influence should occur at each distance $r$, we compare it with $p_v(r)$ using cosine similarity and the diameter-normalized Wasserstein distance
\[
W_{1,\mathrm{norm}}(p_v,p^\star)
=\frac{W_1(p_v,p^\star)}{D_v},
\]
where $W_1$ is the one-dimensional Wasserstein distance between the two distance profiles and $D_v$ is the diameter of the connected component containing $v$.
Lower normalized Wasserstein distance and higher cosine similarity indicate better alignment; both are accompanied by the profiles themselves.
If no finite radius contains a fraction $q$ of the normalized influence mass, we set $R_q(v)=\infty$. This convention covers disconnected graphs.

\paragraph{Operator-induced profiles.}
Theorem~\ref{thm:transport} in Appendix~\ref{app:transport-assumptions} relates walk depth to spatial range using the endpoint distribution of a nonnegative row-stochastic operator.
For such an operator $\mat{K}$, define
\[
p_{\mat{K}}^{\,v}(r)
=
\sum_{u:\,d(u,v)=r}K_{vu},
\qquad
R_q(\mat{K};v)
=
\min\left\{r:\sum_{s=0}^{r}p_{\mat{K}}^{\,v}(s)\geq q\right\}.
\]
This is the distance distribution of an endpoint sampled from row $v$ of $\mat{K}$. For a scalar positive row-normalized linear map, its entries are also the nonnegative input derivatives, so this definition agrees with Equation~\eqref{eq:Rq}. General signed or multichannel operators continue to use the influence definitions in Appendix~\ref{app:range} and are not automatically covered by Theorem~\ref{thm:transport}. For two distance profiles $p$ and $p'$, total variation means $\norm{p-p'}_{\mathrm{TV}}=\frac12\sum_{r\geq0}|p(r)-p'(r)|$.

\paragraph{Interaction blocks and tail errors.}
For a differentiable model with vector output $\widehat{\vect{y}}_v$ at node $v$, the complete input--output block realized at the evaluated input is
\[
\widehat{\mat{K}}_{vu}
=
\frac{\partial \widehat{\vect{y}}_v}{\partial \vect{x}_u}.
\]
In a nonlinear model, the block must be evaluated at a specific input because the model's sensitivity may vary from one input to another.
In a fixed linear model, the block is constant across inputs and equals the corresponding block of its linear operator.
On controlled tasks with a known linear target, $\mat{K}_{vu}$ denotes the corresponding block of the exact target operator.
For radius $r_0$, let $\set{T}_v(r_0)=\{u\in\set{V}:d(u,v)>r_0\}$.
With $\norm{\cdot}_F$ denoting the Frobenius norm, the normalized error for a target with positive energy in this tail is

\begin{equation}
\Etail(v;r_0)
=
\frac{
\sum_{u\in\set{T}_v(r_0)}
\norm{\widehat{\mat{K}}_{vu}-\mat{K}_{vu}}_F^2
}{
\sum_{u\in\set{T}_v(r_0)}
\norm{\mat{K}_{vu}}_F^2
}.
\label{eq:tail}
\end{equation}

If the exact target assigns no influence beyond $r_0$, the denominator in Equation~\eqref{eq:tail} is zero and the normalized error cannot be computed.
We then compute the absolute tail error $\sum_{u\in\set{T}_v(r_0)}\norm{\widehat{\mat{K}}_{vu}}_F^2$, which measures the spurious influence that the model assigns beyond $r_0$.
When both operators depend only on graph distance, all nodes at the same distance share the same interaction block. Equation~\eqref{eq:tail} then compares the blocks at each distance, weighted by the number of nodes at that distance.

\paragraph{Output drift and range retention.}
Given a reference implementation $f^{\mathrm{ref}}$ and a practical implementation $f^{\mathrm{impl}}$, let $R_q^{\mathrm{ref}}$ and $R_q^{\mathrm{impl}}$ be their respective realized ranges and let $\epsnum>0$ be a small numerical safeguard.
Using the Euclidean norm after vectorizing the outputs, we define the relative output drift and range retention as
\[
D_{\mathrm{out}}=
\frac{\norm{f^{\mathrm{impl}}(\mat{X})-f^{\mathrm{ref}}(\mat{X})}}
{\max\!\left\{\norm{f^{\mathrm{ref}}(\mat{X})},\epsnum\right\}},
\qquad
\operatorname{Ret}_q=\frac{R_q^{\mathrm{impl}}}{R_q^{\mathrm{ref}}}.
\]
Range retention equals 1 when the ranges agree; values below or above 1 indicate a shorter or longer range under practical execution, respectively.
These metrics compare implementations that share trained parameters or form a specified ideal/approximate pair. Range retention is reported when both ranges are finite and $R_q^{\mathrm{ref}}>0$; all other cases are recorded as undefined.



\section{Datasets and Tasks}
\label{app:datasets}

Long-range benchmarks differ in their tasks and in how much distant information those tasks require.
LRGB includes tasks with different degrees of dependence on remote information; its initial evaluation reported that graph Transformers outperformed message-passing GNN baselines~\citep{dwivedi2022lrgb}.
Subsequent hyperparameter tuning substantially improved GCN, GINE, and GatedGCN, closing their performance gap to the GPS graph Transformer on both peptide datasets~\citep{tonshoff2023gap}.
On Peptides-Struct, for example, replacing the baselines' linear prediction head with a two-layer MLP accounted for most of the improvement, showing that the reported gap depended strongly on the baseline configuration~\citep{tonshoff2023gap}.
ECHO covers non-local algorithmic and molecular tasks, LRIM provides known power-law dependencies and size extrapolation, and City-Networks uses large road graphs with prescribed dependency radii~\citep{miglior2026echo,mathys2026lrim,liang2026city}.

Our experiments ask where a model loses the distant information required by a task: through limited support, finite approximation, training, or numerical execution.
Repeating standard benchmark evaluations would not by itself separate these causes, since the same prediction error can arise from different failures and a low overall error can hide a missing distant contribution.
We therefore use tasks with explicit reference computations: Marked PathCopy identifies the exact source and its distance from the query, while ResolventCopy and RangeProfileCopy specify the contribution required from every input node (Appendices~\ref{app:pathcopy}--\ref{app:rangeprofilecopy}).
Their small paths and ladders make complete interaction profiles and exact reference solutions tractable, allowing us to measure errors by distance and compare finite and reference implementations with trained parameters held fixed.

For real-data validation, we reuse ECHO-Charge with additional controls (Appendix~\ref{app:echo-remotepairs}).
Average charge-prediction error alone does not establish whether remote atoms help a model.
We remove the center-of-mass-distance feature, which directly supplies global information, and construct RemotePairs with identical model-visible local neighborhoods but different surrounding molecules.
Comparing full-molecule and radius-cropped predictions then tests whether access to the remote context improves prediction.
This section specifies the tasks, adaptations, and data splits used for these comparisons.
Figure~\ref{fig:task-graph-examples} shows representative graphs, and Table~\ref{tab:datasets} summarizes the role of each task.

\begin{figure}[!ht]
\centering
\includegraphics[width=\textwidth]{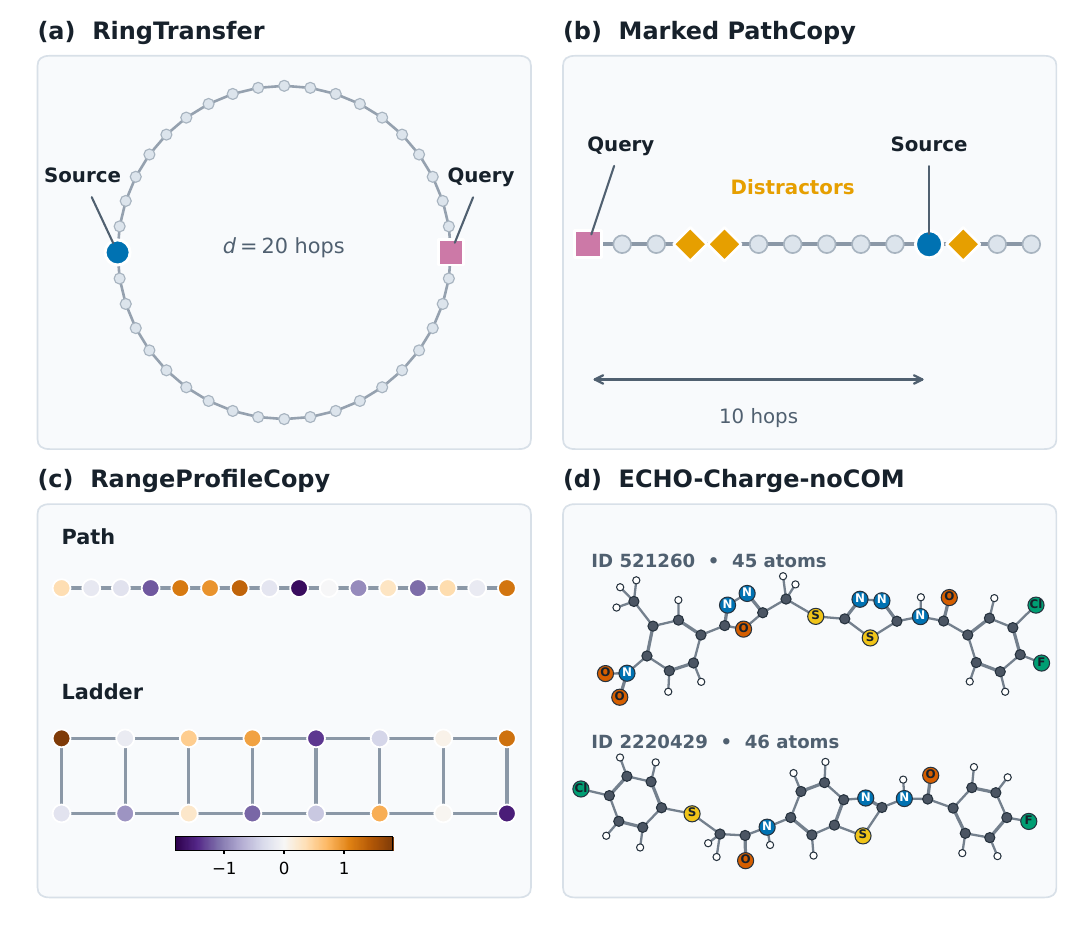}
\caption{Representative graphs from each task.
(a) The distance-20 RingTransfer cycle, with the antipodal source and query distinguished from ordinary nodes.
(b) A representative Marked PathCopy sample showing the source, query, three distractors, and padding nodes at their sampled positions.
(c) The path and ladder topologies used for RangeProfileCopy; node color shows the observed first input channel.
(d) Two molecules from the validation split of the ECHO-Charge dataset.
Element color identifies atomic number, bond width identifies bond type, and the layout uses observed bond lengths; every atom is a charge-prediction target.}
\label{fig:task-graph-examples}
\end{figure}

\begin{table}[h]
\centering
\small
\setlength{\tabcolsep}{3pt}
\caption{Tasks and the questions they address.}
\label{tab:datasets}
\begin{tabular}{@{}>{\raggedright\arraybackslash}p{0.28\textwidth}>{\raggedright\arraybackslash}p{0.69\textwidth}@{}}
\toprule
Task or test & Main question \\
\midrule
RingTransfer & How does a filter's mathematical form affect fitting and numerical accuracy? \\
\addlinespace
Marked PathCopy & Can the model copy a source label from increasingly distant nodes? \\
\addlinespace
ResolventCopy & Does the model recover weak distant effects that average error can hide? \\
\addlinespace
RangeProfileCopy & Does the model weight nearby and distant inputs as the task requires? \\
\addlinespace
ECHO-Charge-noCOM & How accurately does the model predict atomic charges in real molecules? \\
\addlinespace
RemotePairs & Does the rest of the molecule improve predictions when local surroundings are identical? \\
\bottomrule
\end{tabular}
\end{table}

\subsection{RingTransfer}
\label{app:ringtransfer}

RingTransfer~\citep{bodnar2021cwn} uses a cycle with a marked source and its antipodal query, as shown in Figure~\ref{fig:task-graph-examples}(a). The source stores one of five classes, and the query must predict it.
For a cycle with $2d$ nodes, source and query are $d$ hops apart.
We use source--query distances $d\in\{10,15,20\}$ to test whether a filter fitted on a cycle remains accurate on other graphs and when evaluated at lower numerical precision (Appendix~\ref{app:oracle}).

\subsection{Marked PathCopy}
\label{app:pathcopy}

Marked PathCopy uses a bidirectional path with one source marker, one query marker, five class channels, exactly three distractors, random source--query direction, and 3--10 padding nodes; Figure~\ref{fig:task-graph-examples}(b) shows a representative example. The query must predict the class stored at the source. No node identifiers or positional encodings are provided.

The experiments reported in Sections~\ref{sec:controlled} (controlled-filter study) and~\ref{sec:gnn-families} (GNN-family comparison) use the same task with separate protocols.
Both studies train on 5,000 graphs approximately balanced over source--query distances 2--10, using separately generated datasets.

In the controlled-filter study, we train each method in three independent runs and select one checkpoint per run by validation loss on 500 graphs at distance 12.
We evaluate test accuracy on 500 class-balanced graphs at every integer distance 2--30, generated independently of the training and validation graphs.
We report means across runs and estimate 95\% intervals from 2,000 paired bootstrap resamples of runs and test graphs, conditional on the selected checkpoints.
To assess source use, we substitute the source class and exhaustively vary the distractor classes at distance 20. Separate source/query role swaps and query-relocation tests probe dependencies beyond 20 hops.

The GNN-family comparison uses 250 independent, class-balanced validation graphs at every distance 2--10 and 500 independent, class-balanced test graphs at every distance 2--20, 30, and 40. The validation and test sets contain 50 and 100 graphs per class and distance, respectively.
Within each study, all models and repeated runs use the same validation and test graphs, allowing direct comparison on identical examples.
Before interpreting performance beyond 10 hops, we check that the model already copies labels reliably at every distance 2--10 on both training graphs and new test graphs (Appendix~\ref{app:pathcopy-competence}).
This separates failure at longer distances from failure to learn the original task.
The additional tests at distances 11--14 and 16--19 use already trained and selected models; their results are not used for tuning, model selection, or the checks at distances 2--10.

\subsection{ResolventCopy}
\label{app:resolventcopy}

ResolventCopy uses bidirectional paths with four-dimensional Gaussian node inputs $\mat{X}$. With $\mat{S}$ denoting the symmetric neighbor-only normalized adjacency, every node is supervised against
\begin{equation}
\mat{Y}=(1-\alpha)(\Id-\alpha\mat{S})^{-1}\mat{X},
\qquad \alpha\in\{0.95,0.995\}.
\end{equation}
Targets are computed in FP64 and stored in FP32.
Training paths contain 10--30 nodes, validation paths contain 40 or 60 nodes, and test paths contain 80 or 120 nodes.
Impulse tests record the complete response kernel for endpoint and center impulses on paths with $N\in\{80,120\}$ and report errors separately at distances at most and greater than 20.

\subsection{RangeProfileCopy}
\label{app:rangeprofilecopy}

RangeProfileCopy uses canonical bidirectional paths and two-rail ladders with $m$ nodes on each rail, shown in Figure~\ref{fig:task-graph-examples}(c); $N=|\set{V}|$ denotes the total number of graph nodes.
Every node receives $\vect{x}_v\in\RR^4$ sampled from a standard normal distribution.
Inputs are generated in FP32, promoted to FP64 for target construction, and stored with FP32 targets.
The same base graphs and inputs are shared across target profiles and models.

Training uses 5,000 graphs over $N\in\{16,24,32,40\}$, with 625 graphs in each topology--size cell.
Validation uses 500 graphs at each of $N\in\{48,64\}$, and test uses 500 graphs at each of $N\in\{80,120\}$; every validation and test size contains 250 paths and 250 ladders.

Let $\mat{Y}$ collect the node-wise target vectors $\vect{y}_v$ as rows. The three target profiles are
\begin{align}
\text{local:}\quad
\vect{y}_v &= \frac{\sum_{u:d(u,v)\leq2}\vect{x}_u}{|\{u:d(u,v)\leq2\}|},\\
\text{resolvent:}\quad
\mat{Y} &= (1-\alpha)(\Id-\alpha\mat{S})^{-1}\mat{X},\qquad \alpha=0.995,\\
\text{inverse distance:}\quad
\vect{y}_v &= \frac{\sum_u(1+d(u,v))^{-1}\vect{x}_u}{\sum_u(1+d(u,v))^{-1}},
\end{align}
where $\mat{A}$ is the neighbor-only adjacency matrix, $\mat{D}$ is its degree matrix, and $\mat{S}=\mat{D}^{-1/2}\mat{A}\mat{D}^{-1/2}$ is the symmetric normalized adjacency. Resolvent residuals are checked in FP64 before caching.

For each of the three RangeProfileCopy rules above, the model predicts the four target values at every node.
Training minimizes their squared prediction error: we sum over the four channels, average over nodes within each graph, and then average over graphs.
This gives each graph equal weight, regardless of its size.

For evaluation, the normalized mean squared error (NMSE) reported in Table~\ref{tab:e1-range-predictive} measures prediction error relative to the size of the target values.
For each graph, we divide the total squared prediction error by the sum of squared target values, then average these ratios over graphs.
We also record mean absolute error (MAE), averaging absolute errors over channels, nodes, and then graphs, and relative $L_2$ error, averaging the square root of each graph's normalized squared error.
Graphs whose sum of squared target values is at most $\epsnum=10^{-12}$ are counted and excluded from the two normalized metrics.
We also report these metrics separately for paths and ladders at each graph size to show how performance depends on graph structure and node count.

For the hard-resolvent architectural contrasts, influence analysis uses all 250 graphs in every topology--size test cell and evaluates one central and one near-boundary output on each path and ladder. The complete $4\times4$ Jacobian block $\mat{J}_{vu}$, mapping the four input channels at $u$ to the four output channels at $v$, is computed for every input node. Let $K_{vu}$ be the scalar target-kernel entry and $\mat{I}_4$ the $4\times4$ identity matrix. Frobenius norms form influence mass, while strict tail error compares $\mat{J}_{vu}$ directly with $K_{vu}\mat{I}_4$. We report raw and normalized profiles, $R_{50}$, $R_{90}$, mass beyond 20, cosine similarity, diameter-normalized one-dimensional Wasserstein distance, and normalized and absolute tail errors. When total influence or target-tail energy is at most $\epsnum$, the corresponding normalized metrics are undefined and flagged.

\subsection{ECHO-Charge-noCOM and RemotePairs}
\label{app:echo-remotepairs}

ECHO-Charge-noCOM adapts the ECHO-Charge task~\citep{miglior2026echo}, using its official split of 153,330 training, 8,518 validation, and 8,519 test molecules; representative molecular graphs appear in Figure~\ref{fig:task-graph-examples}(d). The task predicts the partial charge of every atom. We remove the center-of-mass-distance node feature because it directly encodes whole-molecule information. Atomic number and bond type remain categorical. Each raw bond length is first rounded to four decimal places with round-half-up, assigned to a half-open $0.05$~\AA{} bin, and replaced by that bin's midpoint. This midpoint is the continuous bond input to every model; the unrounded length is stored but never used as model input. Laplacian and random-walk positional encodings, virtual nodes, and global molecular descriptors are disabled.

Models are trained only on this quantized ECHO-Charge-noCOM dataset. We evaluate the same validation-selected checkpoints, without retraining, on the full test set and with RemotePairs. Full-test MAE averages absolute error over all test atoms, counting each atom once. Because the center-of-mass-distance feature is omitted and bond lengths are quantized, these values form a controlled internal comparison rather than standard ECHO-Charge leaderboard results.

RemotePairs uses a matching radius $r\in\{2,3,4\}$, measured in bond steps. Rooted neighborhoods must agree exactly in topology, atomic numbers, bond types, and bond-length bin IDs. Each boundary node additionally carries its full-molecule degree and the multiset of labels on all incident edges, including edges leaving the crop. This boundary signature retains the information needed by degree-normalized propagation without matching the rest of the molecule.
To find possible matches quickly, we first group local neighborhoods using a compact summary of their atoms and bonds (a Weisfeiler--Leman hash).
We then check each candidate pair with an exact graph-matching algorithm (VF2).
A pair is accepted only if the atoms in the two neighborhoods can be matched one to one, with the two central atoms corresponding and all connections and the atom and bond information listed above preserved.
Pairs come from different molecules, match through radius $r$, and have different context beyond the match.
We first find matching atom pairs and select 1,000 of them without using their reference charges.
A fixed, repeatable rule assigns the two atoms in each pair to the left and right positions, also without using their charges.
Only afterward do we use the reference charges to keep pairs with a large enough charge difference, using a threshold chosen beforehand from validation data (see below).
The reference charges also tell us which atom has the larger charge, so we can check whether the model's prediction is correct.

One label-blind common selection of 1,000 test pairs is frozen for every model and seed: 334 pairs at radius two and 333 at each of radii three and four. The charge-difference eligibility threshold, $0.009304$ elementary charge, is the 75th percentile fixed from validation pairs. It leaves 116, 69, and 55 test pairs at radii two, three, and four, respectively. Among these 240 pairs, the left endpoint has the larger charge in 130 and the right endpoint in 110. The validation-frozen majority orientation is left-higher; its validation accuracy is $0.504$, and its equal-radius test accuracy is $0.544$. The exact-local descriptor baseline predicts a tie and scores $0.5$.

For each selected pair, we first predict on both complete molecules. We then replace each molecule by the induced radius-$r$ ego graph, preserving all quantized node and edge inputs, confirm that the two rooted crops are isomorphic, and evaluate the same checkpoint again. Full and cropped conditions use the same validation-threshold-eligible pair IDs. Predicted charge differences with magnitude at most $10^{-6}$ are ties and receive score $0.5$; otherwise charge-order accuracy records whether the predicted sign matches the reference sign.
Each represented radius receives equal weight. Paired hierarchical intervals use 2,000 replicates that jointly resample the different training seeds and connected components of the molecule-pair graph while preserving model and full/crop pairing. We claim remote-context use only when the 95\% interval for full-minus-crop accuracy is strictly positive and full-molecule balanced accuracy exceeds $0.5$.

\section{Controlled Diagnostic Study}
\label{app:controlled}

This section provides the full results and methods for the controlled diagnostic study presented in Section~\ref{sec:controlled}. It covers trained-filter comparisons on Marked PathCopy and ResolventCopy and the separate RingTransfer cross-spectrum diagnostic defined in Appendix~\ref{app:ringtransfer}.
Table~\ref{tab:diagnostic-models} summarizes the five controlled filters.

\subsection{Controlled filter models}

\begin{table}[h]
\centering
\small
\setlength{\tabcolsep}{4pt}
\caption{Controlled filter models. Parameter counts are shown as PathCopy/ResolventCopy and include the task-specific encoder and decoder.}
\label{tab:diagnostic-models}
\begin{tabular}{lccr}
\toprule
Model & Support or order & Propagation & Parameters \\
\midrule
Chebyshev & degree 20 & polynomial & 87,237/86,660 \\
Monomial & degree 20 & polynomial & 87,237/86,660 \\
Finite ARMA & 20 stacks, 20 steps & tied recurrence & 88,089/87,746 \\
Exact bank & 8 fixed poles & equilibrium solves & 90,095/89,203 \\
20-step restart & 8 poles, 20 steps & matched recurrence & 90,095/89,203 \\
\bottomrule
\end{tabular}
\end{table}
The Exact bank and 20-step restart use fixed poles $a_k\in\pm\{0.6480543,0.95,0.995,0.9994446\}$.
Since the Exact bank explicitly contains the two target resolvent poles, $\alpha=0.95$ and $\alpha=0.995$, it serves as an oracle-aligned positive control whose purpose is to verify that the diagnostics recover a known equilibrium tail.
Each branch is one of eight parallel filters applied to the same input feature matrix $\mat{H}$, with its own fixed coefficient $a_k$.
In the Exact bank, branch $k$ returns
\[
\mat{Z}_k=(1-|a_k|)(\Id-a_k\mat{S})^{-1}\mat{H},
\]
where $\mat{S}$ is the symmetric normalized adjacency matrix defined above.
The factor $1-|a_k|$ keeps the output scale bounded as $|a_k|$ approaches one.
The 20-step restart model approximates this output by starting from $\mat{Z}_k^{(0)}=\mat{H}$ and applying
\[
\mat{Z}_k^{(t+1)}
=a_k\mat{S}\mat{Z}_k^{(t)}+(1-|a_k|)\mat{H},
\qquad t=0,\ldots,19.
\]
Each update combines information from neighboring nodes and adds a scaled copy of the original input.
Continuing these updates to convergence would give the Exact-bank branch output above.
The input representation and eight branch outputs are concatenated and mixed by a signed pointwise projection. The equilibrium control uses differentiable dense Cholesky solves and serves solely as a diagnostic reference.
The range scaling in Theorem~\ref{thm:transport} transfers to each positive-pole branch on paths by the bounded-degree-ratio argument in Appendix~\ref{app:transport-assumptions}; the negative-pole branches and learned signed projection lie outside its nonnegative-mixture premise. For the positive ResolventCopy target, its parameter is $\beta=\alpha$.

All neural results use three independent training runs, FP32 Adam with learning rate $10^{-3}$, row-wise LayerNorm for Marked PathCopy, and identity activations without normalization for ResolventCopy. 
Results use the checkpoint selected by validation loss.

\subsection{Full controlled results}

\begin{table}[h]
\centering
\small
\setlength{\tabcolsep}{4pt}
\caption{Controlled-filter results, averaged across three independent training runs. PathCopy models train at distances 2--10. ResolventCopy NMSE is evaluated on 120-node paths, and tail error is measured strictly beyond hop 20 for $\alpha=0.995$.}
\label{tab:controlled-summary}
\begin{tabular}{lrrrrrr}
\toprule
& \multicolumn{3}{c}{PathCopy accuracy} & \multicolumn{2}{c}{ResolventCopy NMSE} & \\
\cmidrule(lr){2-4}\cmidrule(lr){5-6}
Model & $d=10$ & $d=15$ & $d=20$ & $\alpha=.95$ & $\alpha=.995$ & Tail error \\
\midrule
Exact bank & 1.000 & 1.000 & 1.000 & $4.3{\times}10^{-7}$ & $1.8{\times}10^{-6}$ & $2.03{\times}10^{-5}$ \\
Chebyshev & 1.000 & 1.000 & 0.997 & $2.6{\times}10^{-6}$ & $2.75{\times}10^{-2}$ & 1.000 \\
Finite ARMA & 1.000 & 0.247 & 0.195 & $3.1{\times}10^{-4}$ & $1.22{\times}10^{-1}$ & 1.000 \\
20-step restart & 1.000 & 0.201 & 0.210 & $4.3{\times}10^{-3}$ & $2.91{\times}10^{-1}$ & 1.000 \\
Monomial & 0.995 & 0.193 & 0.193 & $5.1{\times}10^{-4}$ & $1.54{\times}10^{-1}$ & 1.000 \\
\bottomrule
\end{tabular}
\end{table}

\paragraph{Odd--even effect in the 20-step restart baseline.}
\label{app:pathcopy-parity}
Without self-loops, each propagation step switches between nodes at odd and even distances from the source.
Starting from $\mat{H}$ leaves a term $a_k^{20}\mat{S}^{20}\mat{H}$ after 20 updates, in addition to the restart sum.
This term carries source information only to even distances and remains strongly weighted for poles close to $\pm1$, explaining the alternating peaks.
The pattern across training runs is consistent with this expected dependence on walk parity.
On the shared test graphs at distances 11--16, FP64 propagation leaves every predicted class unchanged, and independent polynomial spot checks agree with the recurrence.
With trained parameters fixed, 19 or 21 updates change the parity of the retained term and remove the even-distance peaks.
The per-hop test sweep complements checkpoint selection at distance 12 by measuring behavior at both odd and even distances.

\paragraph{Task-valid perturbations.}
At $d=20$, standard PathCopy accuracy tests extrapolation beyond the training distances of 2--10 hops. The perturbations keep this distance fixed and ask a different question: does the prediction follow the marked source while ignoring the distractors? The Exact bank scores 1.000 after both source-class substitution and exhaustive distractor variation; Chebyshev scores 0.998 in both cases. We then place the relevant source--query dependency beyond Chebyshev's 20-hop support, either by moving the source marker to a distractor or by relocating the query. Chebyshev falls to accuracies of 0.182 and 0.189, respectively, while the Exact bank retains 0.999 and 0.998.

\paragraph{Tail decomposition.}
Tail error isolates the distant part of the response: it is the normalized squared prediction error computed only at nodes more than 20 hops from the impulse, as defined by Equation~\eqref{eq:tail} with $r_0=20$. Errors at nearer nodes therefore do not affect it.
We test four combinations of path length and impulse position: 80 nodes with an endpoint impulse, 80 nodes with a center impulse, 120 nodes with an endpoint impulse, and 120 nodes with a center impulse.
Table~\ref{tab:controlled-summary} averages tail error over these four combinations and the three independent training runs.
Its NMSE columns instead measure the complete output on 120-node paths. For $\alpha=0.995$, the region beyond 20 hops contains 1.63--1.90\% of the target's squared response energy. Every finite model has zero response in this region and therefore a tail error of 1.000. The Exact bank closely reproduces the target tail, with error $2.03\times10^{-5}$, cosine similarity $0.999992$, and projected gain $1.0018$, where a gain of one denotes the correct amplitude.
For $\alpha=0.95$, only approximately $1.68\times10^{-6}$ of the target energy lies beyond 20 hops, so the table reports tail error only for $\alpha=0.995$.

\subsection{Separating finite approximation from training}
\label{app:controlled-replay}

Figure~\ref{fig:fixed-pole-replay} isolates the recurrence by fixing the learned parameters: blue uses the Exact-bank parameters and orange the 20-step-restart parameters.
At ``32-bit equilibrium,'' both sets---including the orange one---are run all the way to equilibrium; numbered points run the same parameters for that many 64-bit restart steps.
``Original model'' restores the usual evaluation: 32-bit equilibrium for blue and 20 32-bit steps for orange.
Each difference is measured from that model's own 64-bit equilibrium and averaged over three independent runs.

\begin{figure}[h]
\centering
\includegraphics[width=.94\textwidth]{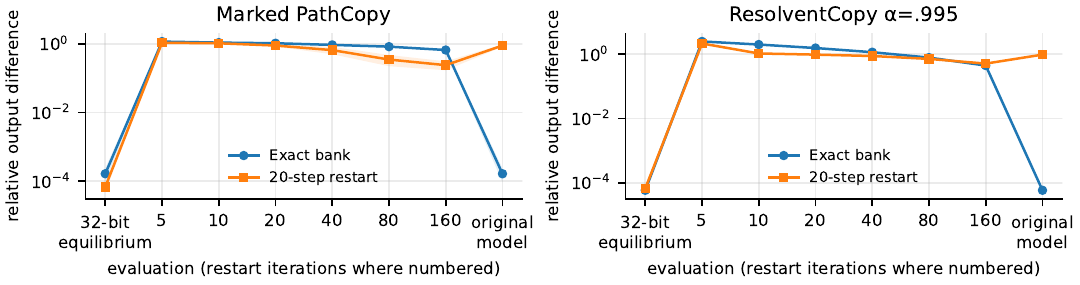}
\caption{Fixed-pole models with their learned parameters held fixed. Blue uses the Exact-bank parameters and orange the 20-step-restart parameters; the models were trained separately. ``32-bit equilibrium'' runs both sets to equilibrium, including the orange one. Numbered points run the same parameters for the indicated number of 64-bit restart steps. ``Original model'' uses 32-bit equilibrium for blue and 20 32-bit steps for orange. Relative difference is measured from the same model's 64-bit equilibrium; lines and bands show the mean and sample standard deviation over three runs.}
\label{fig:fixed-pole-replay}
\end{figure}

\begin{table}[h]
\centering
\small
\setlength{\tabcolsep}{3.5pt}
\caption{Selected values for the Exact-bank models (blue curves in Figure~\ref{fig:fixed-pole-replay}) on 120-node ResolventCopy paths. Relative difference is measured from the same model's 64-bit equilibrium. Profile statistics use an endpoint impulse, and tail error is measured beyond hop 20. Entries are means over three runs.}
\label{tab:fixed-pole-replay}
\begin{tabular}{llrrrrr}
\toprule
$\alpha$ & Evaluation & Rel. difference & NMSE & $R_{90}$ & Mass $>20$ & Tail error \\
\midrule
.95 & 64-bit equilibrium & 0 & $5.53{\times}10^{-7}$ & 7 & .00183 & .191 \\
.95 & 20 steps & .316 & .100 & 6 & 0 & 1.000 \\
.95 & 160 steps & .00847 & $7.50{\times}10^{-5}$ & 7 & .00120 & .0272 \\
\addlinespace
.995 & 64-bit equilibrium & 0 & $7.35{\times}10^{-7}$ & 23 & .1259 & .00219 \\
.995 & 20 steps & 1.586 & 2.514 & 7 & 0 & 1.000 \\
.995 & 160 steps & .448 & .200 & 18 & .0559 & .658 \\
\bottomrule
\end{tabular}
\end{table}

Figure~\ref{fig:fixed-pole-replay} shows that, for both separately trained models, stopping the recurrence early rather than numerical precision is the dominant source of error; Table~\ref{tab:fixed-pole-replay} quantifies this effect for the blue Exact-bank curve on ResolventCopy.
For the long-range $\alpha=.995$ target, 20 steps remove all measured influence beyond hop 20 and differ from equilibrium by $1.59$.
Even 160 steps leave a relative difference of $0.448$ and a tail error of $0.658$.
The shorter $\alpha=.95$ target converges faster, although its 20-step output still differs by $0.316$.
The same slow convergence appears on Marked PathCopy, whereas evaluating the equilibrium in 32-bit precision changes the output only slightly.

\subsection{Cross-spectrum and precision study}
\label{app:oracle}

This study asks whether a filter fitted to the graph frequencies seen during training remains accurate at unseen frequencies, on other graph structures, and at lower numerical precision. For each $d\in\{10,15,20\}$, we fit the filters only at the distinct eigenvalues of the neighbor-only cycle $C_{2d}$ and then freeze them. We evaluate the frozen filters, without refitting, on nearby cycles, on paths, and on a dense grid of 2,049 frequencies spanning $[-1,1]$. We compare degree-$d$ polynomials expressed in the Chebyshev and monomial bases with rational filters containing $K=d$ stable real poles under several stability margins. The targets are the exact-distance response $T_d(\lambda)$ and one-pole resolvents with $\alpha\in\{0.95,0.995\}$. We record approximation error in high precision, FP64, and FP32, together with diagnostics for conditioning, coefficient size, cancellation, gain, and non-finite outputs.

The \emph{synthesis matrix} maps the filter's coefficients to its values at the fitting frequencies.
Write the filter as $r(\lambda)=\sum_{j=0}^{d}c_j b_j(\lambda)$, where $b_j$ are the chosen basis functions and $c_j$ are the fitted coefficients.
For the rational fits, $b_0(\lambda)=1$ and $b_j(\lambda)=(1-a_j\lambda)^{-1}$ for $j\geq1$.
If $\lambda_i$ occurs $m_i$ times in the spectrum of the $2d$-node training cycle, the weighted synthesis matrix and its \emph{synthesis condition number} are
\begin{equation}
B_{ij}=\sqrt{\frac{m_i}{2d}}\,b_j(\lambda_i),
\qquad
\kappa_2(\mat{B})=\frac{\sigma_{\max}(\mat{B})}{\sigma_{\min}(\mat{B})},
\label{eq:synthesis-condition-number}
\end{equation}
where $\sigma_{\max}$ and $\sigma_{\min}$ are the largest and smallest singular values.
The weights account for repeated eigenvalues in the fitting error.
A large condition number means that small changes in the target values or basis evaluations can produce large changes in the fitted coefficients; the corresponding sensitivity bound is given in Theorem~\ref{thm:realization-formal}.

Theorem~\ref{thm:mode} establishes that fewer than $d$ stable real-pole modes cannot approximate $T_d$ uniformly on $[-1,1]$ with error below one, and the $K=d-1$ fits follow this bound. Increasing to $K=d$ removes this mode-count barrier: in high precision, the rational filter can match the finite set of training eigenvalues almost exactly. Those eigenvalues leave the response between them unconstrained. At $d=20$, the same fit has dense-grid RMSE $6.63$ and a synthesis condition number of $9.91\times10^{12}$; in FP32, its RMSE rises to $5.24\times10^4$. This conclusion concerns scalar filters with fixed real poles. Matrix-valued, complex-pole, and adaptive rational models lie outside the analysis.

Cross-spectrum behavior also depends on the target response. For the smooth $\alpha=0.995$ resolvent at $d=20$, the rational filter attains dense-grid RMSE $5.39\times10^{-4}$, compared with $1.95\times10^{-2}$ for the polynomial approximation. Table~\ref{tab:cross-spectrum} reports these dense-interval errors.

\begin{table}[h]
\centering
\small
\setlength{\tabcolsep}{4pt}
\caption{Compact cross-spectrum results on the dense interval for the reported real-pole configurations. The smooth-resolvent rows compare the degree-20 Chebyshev and rational fits.}
\label{tab:cross-spectrum}
\begin{tabular}{llrrr}
\toprule
Target & Fit & Synthesis condition & High-precision RMSE & FP32 RMSE \\
\midrule
$T_{10}$ & 10 real poles & $2.74{\times}10^{6}$ & $0.986$ & $0.985$ \\
$T_{15}$ & 15 real poles & $5.12{\times}10^{9}$ & $2.38$ & $32.8$ \\
$T_{20}$ & 20 real poles & $9.91{\times}10^{12}$ & $6.63$ & $5.24{\times}10^{4}$ \\
\addlinespace
Resolvent $\alpha=0.995$ & degree-20 Chebyshev & $1.41$ & $1.95{\times}10^{-2}$ & $1.95{\times}10^{-2}$ \\
Resolvent $\alpha=0.995$ & 20 real poles & $6.76{\times}10^{12}$ & $5.39{\times}10^{-4}$ & $5.56{\times}10^{-4}$ \\
\bottomrule
\end{tabular}
\end{table}

\section{GNN-Family and Architectural-Contrast Experiments}
\label{app:models}

This section details the comparison of trained configurations from different GNN families and the architectural contrasts in Section~\ref{sec:gnn-families}. The experiments use Marked PathCopy and RangeProfileCopy, defined in Appendices~\ref{app:pathcopy} and~\ref{app:rangeprofilecopy}.
The primary model families and their task-specific sizes are summarized in Table~\ref{tab:gnn-families}.

\subsection{Compared models and propagation mechanisms}

\begin{table}[h]
\centering
\small
\setlength{\tabcolsep}{4pt}
\caption{Models in the GNN-family comparison. Parameter counts include the common encoder and decoder. GraphGPS+RWSE is a structural-information control.}
\label{tab:gnn-families}
\begin{tabular}{@{}lp{0.56\textwidth}cc@{}}
\toprule
Model & Primary propagation mechanism & \multicolumn{1}{c}{\shortstack{PathCopy\\params.}} & \multicolumn{1}{c}{\shortstack{RangeProfile\\params.}} \\
\midrule
GatedGCN & 20 residual local steps with persistent directed-edge states & 108,709 & 108,548 \\
Stable-ChebNet & one stabilized degree-20 Chebyshev block & 97,493 & 97,012 \\
AMP & 20 GatedGCN steps, global depth posterior, and channelwise message gates & 109,991 & 109,830 \\
GraphGPS & four blocks combining GatedGCN and within-graph global attention & 87,085 & 86,884 \\
MP-SSM & one sequential normalized-adjacency state recurrence with 20 updates & 95,461 & 104,804 \\
MLP & two pointwise layers and no graph communication & 98,069 & 96,988 \\
A-DGN & 20 shared-weight antisymmetric neighbor-sum iterations & 96,989 & 103,044 \\
Tikhonov & one degree-5 regularized inverse with learned node-wise $Q$ & 98,140 & 101,739 \\
SONAR & 20 shared-weight wave updates with persistent velocity & 101,894 & 101,253 \\
GRIT & four global-attention layers with relative random-walk features & 108,581 & 108,340 \\
\bottomrule
\end{tabular}
\end{table}

\paragraph{Common design.}
The models share a pointwise encoder and prediction head. Except for GRIT's structural encodings, BatchNorm and attention dropout, they use row-wise LayerNorm, zero dropout, the same neighbor-only undirected graph for local operations, and no virtual node or positional encoding. Self-loops are absent unless required by the architecture. The MLP is exactly invariant to graph and edge changes.

\paragraph{GatedGCN.}
GatedGCN~\citep{bresson2018gatedgcn} keeps a separate feature vector for each direction of an undirected edge: one for messages from $u$ to $v$ and another for messages from $v$ to $u$.
Each layer updates these edge states using their previous values and the features of the two connected nodes, then passes the updated states to the next layer; this carryover is referred to as ``persistent''.
The edge updates determine the weights given to messages in each direction.
Node and edge updates use row-wise LayerNorm.

\paragraph{GraphGPS.}
GraphGPS combines a GatedGCN local branch with masked four-head attention applied independently within each graph~\citep{rampasek2022gps}. It sums the local and global branches, then applies a pointwise feed-forward network with twice the hidden width. Cross-graph attention is excluded.

\paragraph{Stable-ChebNet.}
We use the Stable-ChebNet model introduced by \citet{hariri2025chebnet}. Here $K=20$ is the maximum Chebyshev degree and graph hop, so the block contains 21 terms $T_0,\ldots,T_{20}$, where $T_k$ is the degree-$k$ Chebyshev polynomial. Its Euler step size is $\varepsilon=0.1$ and its dissipative shift is $\gamma=0.05$.

\paragraph{AMP.}
AMP uses a discretized Folded-Normal posterior on global depths 1--20, initialized with location and scale $(\mu,\sigma)=(10,5)$; its renormalized prior uses $(\mu,\sigma)=(5,10)$~\citep{errica2025amp}. The same prediction head is applied at every depth. Training uses the posterior expectation of the per-depth task loss plus depth and Gaussian parameter-prior terms normalized by training-set size. Its reported depth distribution is global, not a per-node stopping rule. Each embedding-conditioned affine sigmoid message filter is channelwise.

\paragraph{MP-SSM.}
MP-SSM uses a sequential real recurrence~\citep{ceni2025mpssm}. Let $\mat{U}$ be the encoded node-feature matrix, $\mat{B}$ the input-projection matrix, $\mat{W}$ the recurrent channel-mixing matrix, $\mat{H}_t$ the recurrent node-state matrix after $t$ updates, and $\widetilde{\mat{A}}$ the symmetric-normalized adjacency after adding the architecture-required self-loops. The model initializes $\mat{H}_0=\mat{U}\mat{B}$ and performs
\begin{equation}
\mat{H}_{t+1}
=
\widetilde{\mat{A}}\mat{H}_t\mat{W}+\mat{U}\mat{B},
\qquad t=0,\ldots,19,
\end{equation}
A two-layer pointwise ReLU MLP and row-wise LayerNorm are applied only after the linear recurrence.

\paragraph{Anti-Symmetric DGN.}
A-DGN follows the shared-weight Euler discretization of the stable graph ODE in \citet{gravina2023adgn}. With node state $\mat{H}_t$, learned channel matrix $\mat{W}$, neighbor-sum message $\Phi(\mat{H}_t)$, step size $\varepsilon=0.1$, and dissipation $\gamma=0.1$, each of the 20 iterations has the form
\begin{equation}
\mat{H}_{t+1}=\mat{H}_t+\varepsilon\tanh\!\left(
  (\mat{W}-\mat{W}^{\mathsf T}-\gamma\mat{I})\mat{H}_t
  +\Phi(\mat{H}_t)+\mat{b}
\right).
\end{equation}
The skew-symmetric channel mixing and dissipative shift control stability. We use shared weights, neighbor-sum aggregation, no inserted self-loops, scalar edge weights, and exactly 20 iterations.

\paragraph{Tikhonov.}
The Tikhonov layer implements the generalized regularization operator of \citet{tremblay2026generalized},
\begin{equation}
\mat{R}=(p(\mat{L})+\mat{Q})^{-1}\mat{Q},
\end{equation}
where $\mat{L}$ is the symmetric normalized Laplacian, $p$ is a learned positive degree-5 Bernstein polynomial, and $\mat{Q}$ is a positive learned node-wise diagonal regularizer. A three-layer ChebNet with $K=3$ and residual connections, hidden width 8, and a two-layer head predicts $\mat{Q}$ from node features.
We approximate the inverse using graph-segmented preconditioned conjugate gradients, with 30 iterations in the GNN-family comparison and 50 in the molecular study, so batched molecular graphs remain independent.
The output projection is followed by ReLU and row-wise LayerNorm.

\paragraph{SONAR.}
SONAR~\citep{trenta2025sonar} evolves node states $\mat{H}_t$ and velocities $\mat{V}_t$ through a damped graph-wave recurrence. Starting from encoded features $\mat{H}_0$ and learned initial velocity $\mat{V}_0=v_\theta(\mat{H}_0)$, it performs 20 updates with shared parameters:
\begin{equation}
\mat{V}_{t+1}=\mat{V}_t-h\!\left[\mathcal{L}_{c(\mat{H}_t)}g_\theta(\mat{H}_t)+\gamma_\theta(\mat{H}_t)\odot\mat{V}_t-f_\theta(\mat{H}_t)\right],
\qquad \mat{H}_{t+1}=\mat{H}_t+h\mat{V}_{t+1}.
\end{equation}
Here $g_\theta$, $f_\theta$ and $v_\theta$ are learned affine maps, and $\gamma_\theta\geq0$ is learned damping. The unnormalized Laplacian acts as $(\mathcal{L}_c Z)_v=(\sum_w c_{vw})Z_v-\sum_u c_{uv}Z_u$, with nonnegative directed edge weights recomputed from the connected nodes' states at every update.
Each update uses only neighboring states, giving a maximum support of 20 hops. Pointwise normalization and a two-layer GELU MLP with residual connections follow the recurrence.

\paragraph{GRIT.}
GRIT~\citep{ma2023grit} uses four layers with eight-head global attention and relative random-walk features $I,P,\ldots,P^{20}$ to update node and pair representations.
Attention covers every node pair within a graph, so limiting the walk features to order 20 does not bound spatial support.
The model uses ReLU, full pair-state updates, degree scaling, and BatchNorm (momentum 0.1, $\epsilon=10^{-5}$). Dropout acts only on attention during training. At evaluation, predictions and influence calculations use the running means and variances accumulated during training and saved in the checkpoint.
Walk features and degrees are computed from unweighted graph connectivity, with each random-walk step choosing uniformly among the current node's neighbors. Molecular bond information enters through learned pair representations, as detailed in Appendix~\ref{app:molecular}.

\paragraph{Targeted controls.}
The upper-right panel of Figure~\ref{fig:architecture-summary} includes GraphGPS+RWSE as a structural-encoding control for the RangeProfileCopy comparison. 
It projects raw return probabilities for walks 1--20 to 16 channels and concatenates them with an $h-16$ base encoding, where $h$ is the hidden width. No activation or batch-statistic normalization is applied to these features, and the model retains the same total width, blocks, prediction head, optimizer, and selected learning rate as GraphGPS. 
The two bottom panels of Figure~\ref{fig:architecture-summary} instead probe individual propagation mechanisms through paired controls.
They compare GatedGCN-5 with GatedGCN-20, vanilla degree-20 ChebNet with Stable-ChebNet, GatedGCN-20 with AMP, GraphGPS without and with global attention, and MP-SSM without and with its state-space recurrence.

\subsection{PathCopy training, selection, and accuracy requirements}
\label{app:pathcopy-competence}

We evaluate three stages separately: fitting the training graphs, generalizing to new graphs within the trained distance range, and extrapolating beyond that range.
At each validation check during training, we compute the mean cross-entropy loss separately at every source--query distance from 2 to 10 hops and take the largest of these nine values as the selection score.
For each run, we select the saved model parameters from the epoch with the lowest selection score.
Using the largest loss prevents good performance at some distances from hiding poor performance at another.
Training and validation remain disjoint, and final-test graphs are accessed only after screening and confirmation.

The GatedGCN, Stable-ChebNet, AMP, GraphGPS, MP-SSM, MLP, A-DGN and Tikhonov searches start with three predefined training configurations, each evaluated once on validation data.
The three configurations are: (i) AdamW with weight decay $10^{-5}$, a plateau schedule, at most 200 epochs, and learning rate $3\times10^{-4}$ for GatedGCN and MP-SSM, $10^{-3}$ for MLP, and $3\times10^{-3}$ otherwise; (ii) constant-rate AdamW at $10^{-3}$; and (iii) constant-rate AdamW at $3\times10^{-4}$. The latter two use weight decay $10^{-5}$, at most 500 epochs, and early-stopping patience 80, compared with patience 40 for the first configuration.

If none of these configurations reaches 95\% validation accuracy at every distance 2--10, we consider three additional configurations: Adam at $10^{-3}$ with no weight decay and a 10-epoch warm-up followed by cosine decay; AdamW at $10^{-3}$ with weight decay $10^{-4}$, dropout 0.1, and a plateau schedule; and a model-specific variant of the constant-rate $10^{-3}$ configuration. This last variant removes normalization for GatedGCN; reduces the Stable-ChebNet step size and dissipation to 0.05 and 0.01; shifts AMP's depth-posterior mean and scale to 5 and 3; uses eight GraphGPS blocks with dropout 0.1; uses two residual 10-step MP-SSM blocks with inter-block LayerNorm and recurrent spectral radius 0.9; reduces A-DGN dissipation to 0.01 and uses symmetric neighbor normalization; initializes Tikhonov's $q$ to 0.01; or gives the MLP a two-layer head with dropout 0.1. Plateau schedules halve the rate after 15 stalled epochs, cosine schedules decay to $10^{-6}$, and all configurations clip gradients at norm 1.
For SONAR, we test each learning rate with step sizes $\{0.05,0.1,0.5\}$ using constant-rate AdamW. We also test initializing all edge weights to one, using only training and validation data. The selected model uses $h=0.05$ and the standard initialization (Table~\ref{tab:e1-competence-profiles}).
For GRIT, we cross learning rates $\{10^{-4},3\times10^{-4},10^{-3}\}$ with attention dropout $\{0,0.2,0.5\}$, using AdamW with a ten-epoch linear warm-up from 0.01 times the learning rate and cosine decay over a 500-epoch horizon. The selected attention dropout is zero; training/evaluation batch sizes are 64/32. Table~\ref{tab:e1-competence-profiles} summarizes the selected settings.

\begin{table}[t]
\centering
\footnotesize
\setlength{\tabcolsep}{2.5pt}
\renewcommand{\arraystretch}{1.12}
\caption{Marked PathCopy model selection. ``Screened configs'' counts configurations in the main validation search; the two best are repeated on three additional seeds. WD/clip gives weight decay and gradient-clipping norm. $\dagger$ SONAR's nine combinations vary learning rate and integration step; additional initialization checks use only training and validation data.}
\label{tab:e1-competence-profiles}
\begin{adjustbox}{max width=\textwidth}
\begin{tabular}{lcccccc}
\toprule
Model & \shortstack{Screened\\configs} & Width & Optimizer & LR / schedule & Epochs / patience & WD / clip \\
\midrule
GatedGCN & 3 & 32 & AdamW & \shortstack{$10^{-3}$\\constant} & 500 / 80 & $10^{-5}$ / 1 \\
Stable-ChebNet & 3 & 96 & AdamW & \shortstack{$3\times10^{-4}$\\constant} & 500 / 80 & $10^{-5}$ / 1 \\
AMP & 3 & 32 & AdamW & \shortstack{$3\times10^{-4}$\\constant} & 500 / 80 & $10^{-5}$ / 1 \\
GraphGPS & 3 & 40 & AdamW & \shortstack{$10^{-3}$\\constant} & 500 / 80 & $10^{-5}$ / 1 \\
MP-SSM & 6 & 152 & Adam & \shortstack{$10^{-3}$\\warm-up/cosine} & 500 / 80 & 0 / 1 \\
MLP & 6 & 216 & AdamW & \shortstack{$10^{-3}$\\plateau} & 200 / 40 & $10^{-5}$ / 1 \\
A-DGN & 3 & 216 & AdamW & \shortstack{$3\times10^{-4}$\\constant} & 500 / 80 & $10^{-5}$ / 1 \\
Tikhonov & 3 & 288 & AdamW & \shortstack{$10^{-3}$\\constant} & 500 / 80 & $10^{-5}$ / 1 \\
SONAR & $9^{\dagger}$ & 128 & AdamW & \shortstack{$10^{-3}$\\constant} & 500 / 80 & $10^{-5}$ / 1 \\
GRIT & 9 & 48 & AdamW & \shortstack{$10^{-4}$\\warm-up/cosine} & 500 / 80 & $10^{-5}$ / 1 \\
\bottomrule
\end{tabular}
\end{adjustbox}
\end{table}

For the three-configuration searches, the ``Screened configs'' column in Table~\ref{tab:e1-competence-profiles} reports three if at least one configuration achieves at least 95\% validation accuracy at every source--query distance from 2 to 10 hops.
If each of the three initial configurations falls below 95\% at one or more distances, we test the three additional configurations described above, bringing the total to six.
The two highest-ranked configurations are each evaluated on three additional training seeds. Configurations are ranked by their minimum validation accuracy over distances 2--10, then by mean worst-distance cross-entropy across the available runs, and finally by parameter count. A configuration is selected only if every confirmation run attains at least 0.95 validation accuracy at every distance 2--10; among those that pass, the highest-ranked configuration is selected. For MLP and SONAR, where none passes, the top-ranked configuration is retained.

Widths are fixed before tuning to place models near a common $10^5$-parameter budget; when a model-specific variant changes the parameterization, its width is recomputed under the same budget.
All configurations use row-wise LayerNorm and zero dropout except the additional AdamW configuration with dropout 0.1 and the model-specific exceptions described above.
Only MP-SSM and the MLP control required the three additional configurations.
The selected configurations retain each model's architecture; MP-SSM uses Adam with warm-up and cosine decay.

The independent in-range test is never used to choose a configuration. We interpret longer-distance errors as failures of extrapolation only when accuracy is at least 95\% on both training and independent test graphs at every distance 2--10 in at least four of five runs. Eight GNNs exceed this requirement in all five runs, with 100\% training accuracy and at least 99.6\% test accuracy. SONAR fails to learn the training task and remains at 20\% test accuracy at every evaluated distance, so its longer-distance errors do not isolate extrapolation failure. MLP is the no-propagation control.

\subsection{Robustness to the choice of configuration}
\label{app:pathcopy-candidate-robustness}

For seven of the GNNs that learn the training task, we compare the training configuration used in the main PathCopy results with the other leading candidate from the validation search described in Appendix~\ref{app:pathcopy-competence}. The candidates share the same architecture but differ in learning rate or, for MP-SSM, in optimizer, schedule, and regularization. We ask whether switching to the alternative changes the finding that near-perfect accuracy at the trained distances 2--10 can fall to chance at distances 15 and 20. We evaluate three independent runs of each configuration (Table~\ref{tab:e1-candidate-robustness}); neither candidate was chosen using performance beyond the trained range.

For GatedGCN, AMP, and A-DGN, both configurations succeed at the trained distances but fall near five-class chance at distances 15 and 20. In contrast, both GraphGPS configurations remain perfectly accurate, and both Stable-ChebNet configurations exceed 97\% accuracy at distance 20. Changing between these candidates therefore does not reverse the main qualitative contrast: success within the trained range need not carry over to longer distances, even when those distances lie within the model's support.

The alternative configurations reach 95\% accuracy at every distance 2--10 on both validation and independent test graphs in 0/3 runs for MP-SSM and 2/3 for Tikhonov (Table~\ref{tab:e1-candidate-robustness}).
Their longer-distance averages include runs that do not meet this requirement, so they do not isolate generalization beyond the trained range.

\begin{table}[t]
\centering
\footnotesize
\setlength{\tabcolsep}{2.5pt}
\caption{Does another strong configuration change extrapolation? Accuracies are percentages; Selected denotes the configuration used in the main comparison, and Alternative the other leading validation candidate. Minima cover all three runs and distances 2--10. Pass / 3 counts runs reaching 95\% at every trained distance on both validation and independent test graphs. The last two columns show means and 95\% bootstrap intervals across runs and test graphs, including unsuccessful runs.}
\label{tab:e1-candidate-robustness}
\begin{adjustbox}{max width=\textwidth}
\begin{tabular}{llrrrcc}
\toprule
Model & Setting & Min. val. & Min. test & Pass / 3 & $d=15$ & $d=20$ \\
\midrule
GatedGCN & Selected & 100.00 & 99.80 & 3/3 & 20.40 [18.33, 22.60] & 20.13 [18.00, 22.27] \\
GatedGCN & Alternative & 100.00 & 100.00 & 3/3 & 20.00 [18.07, 22.20] & 20.00 [18.00, 22.13] \\
Stable-ChebNet & Selected & 100.00 & 100.00 & 3/3 & 99.93 [99.73, 100.00] & 97.73 [96.40, 98.87] \\
Stable-ChebNet & Alternative & 100.00 & 100.00 & 3/3 & 99.47 [98.80, 100.00] & 97.87 [96.20, 99.13] \\
AMP & Selected & 100.00 & 99.40 & 3/3 & 20.07 [18.13, 22.20] & 20.07 [18.07, 22.13] \\
AMP & Alternative & 100.00 & 100.00 & 3/3 & 19.47 [17.53, 21.47] & 20.13 [18.07, 22.13] \\
GraphGPS & Selected & 100.00 & 100.00 & 3/3 & 100.00 [100.00, 100.00] & 100.00 [100.00, 100.00] \\
GraphGPS & Alternative & 100.00 & 100.00 & 3/3 & 100.00 [100.00, 100.00] & 100.00 [100.00, 100.00] \\
MP-SSM & Selected & 100.00 & 100.00 & 3/3 & 21.20 [18.60, 23.80] & 22.00 [19.80, 24.20] \\
MP-SSM & Alternative & 20.00 & 20.00 & 0/3 & 20.00 [18.00, 22.00] & 20.00 [17.93, 22.00] \\
A-DGN & Selected & 100.00 & 99.80 & 3/3 & 18.73 [15.80, 21.47] & 19.80 [17.87, 21.80] \\
A-DGN & Alternative & 100.00 & 100.00 & 3/3 & 19.60 [17.27, 21.87] & 20.67 [18.20, 23.13] \\
Tikhonov & Selected & 100.00 & 100.00 & 3/3 & 100.00 [100.00, 100.00] & 62.27 [55.87, 68.87] \\
Tikhonov & Alternative & 19.60 & 19.60 & 2/3 & 73.07 [20.53, 100.00] & 55.67 [21.07, 76.07] \\
\bottomrule
\end{tabular}
\end{adjustbox}
\end{table}

\subsection{RangeProfileCopy training and selection}

RangeProfileCopy uses independent training, validation, and test graphs. All models use AdamW, validation-loss checkpoint selection, and at most 200 epochs; results average three independent runs. Validation searches use learning rates $\{3\times10^{-4},10^{-3},3\times10^{-3}\}$ (the first two rates for GRIT). Dropout is zero apart from GRIT's attention dropout of 0.2.
SONAR additionally searches step sizes $\{0.05,0.1,0.5\}$ and confirms the two best configurations across three runs, selecting $h=0.05$ and learning rate $3\times10^{-4}$. GRIT selects learning rate $10^{-3}$ directly from one validation run per rate. Both selections use the resolvent task and transfer unchanged to the local and inverse-distance targets. GRIT retains the PathCopy warm-up/cosine schedule with a 200-epoch horizon, patience 40, weight decay $10^{-5}$, clipping at norm 1, and training/evaluation batches of 32/16.

\subsection{Architectural contrasts and precision replay}

The five model comparisons in Section~\ref{sec:gnn-families} (Figure~\ref{fig:architecture-summary}, bottom panels) use the hard resolvent target of RangeProfileCopy, $\mat{Y}=(1-\alpha)(\Id-\alpha\mat{S})^{-1}\mat{X}$ with $\alpha=0.995$ (Appendix~\ref{app:rangeprofilecopy}).
Each model is trained in three independent runs.
The pairs are GatedGCN-5 versus GatedGCN-20, vanilla degree-20 ChebNet versus Stable-ChebNet, fixed-depth GatedGCN-20 versus AMP, GraphGPS with local message passing only versus GraphGPS with global attention, and one-step versus recurrent MP-SSM. Widths are chosen under the common parameter budget, and every comparison uses validation-selected checkpoints. For AMP, we also analyze its learned depth distribution and message gates on all three RangeProfileCopy rules.

We assess the effect of numerical precision on the hard resolvent task by reevaluating GatedGCN-20, Stable-ChebNet, AMP, GraphGPS, MP-SSM-20, SONAR, and GRIT with their trained parameters fixed.
For each checkpoint, we use the same 256 test graphs to compute predictions and influence in FP64, FP32, and BF16, taking FP64 as the reference.
The BF16 results are reported in Table~\ref{tab:e2-precision} and Figure~\ref{fig:e2-precision-amp}(a).

\subsection{Primary predictive metrics}
\label{app:architecture-results}

\begin{table}[t]
\centering
\small
\setlength{\tabcolsep}{3.2pt}
\caption{RangeProfileCopy test NMSE (mean $\pm$ sample standard deviation across three runs) for validation-selected checkpoints. GraphGPS+RWSE is a matched structural-information control.}
\label{tab:e1-range-predictive}

\begin{adjustbox}{max width=\textwidth}
\begin{tabular}{lrrr}
\toprule
\multicolumn{4}{l}{\textbf{RangeProfileCopy NMSE}} \\
Model & Local & Resolvent & Inverse distance \\
\midrule
GatedGCN & \textbf{$1.539{\scriptstyle\pm0.208}\!\times\!10^{-4}$} & $1.262{\scriptstyle\pm0.056}\!\times\!10^{-1}$ & $2.058{\scriptstyle\pm0.020}\!\times\!10^{-1}$ \\
Stable-ChebNet & $2.148{\scriptstyle\pm0.023}\!\times\!10^{-2}$ & $2.283{\scriptstyle\pm0.022}\!\times\!10^{-2}$ & $1.486{\scriptstyle\pm0.014}\!\times\!10^{-1}$ \\
AMP & $2.160{\scriptstyle\pm0.344}\!\times\!10^{-2}$ & $4.559{\scriptstyle\pm0.163}\!\times\!10^{-1}$ & $3.519{\scriptstyle\pm0.065}\!\times\!10^{-1}$ \\
GraphGPS & $3.643{\scriptstyle\pm2.559}\!\times\!10^{-3}$ & $2.655{\scriptstyle\pm0.007}\!\times\!10^{-1}$ & $7.128{\scriptstyle\pm0.092}\!\times\!10^{-2}$ \\
MP-SSM & $2.706{\scriptstyle\pm0.057}\!\times\!10^{-2}$ & $1.603{\scriptstyle\pm0.019}\!\times\!10^{-1}$ & $2.185{\scriptstyle\pm0.080}\!\times\!10^{-1}$ \\
MLP & $8.480{\scriptstyle\pm0.003}\!\times\!10^{-1}$ & $9.463{\scriptstyle\pm0.018}\!\times\!10^{-1}$ & $7.632{\scriptstyle\pm0.024}\!\times\!10^{-1}$ \\
A-DGN & $1.728{\scriptstyle\pm0.051}\!\times\!10^{-2}$ & $2.262{\scriptstyle\pm0.303}\!\times\!10^{-1}$ & $2.346{\scriptstyle\pm0.176}\!\times\!10^{-1}$ \\
Tikhonov & $9.861{\scriptstyle\pm0.002}\!\times\!10^{-2}$ & \textbf{$2.804{\scriptstyle\pm0.107}\!\times\!10^{-4}$} & $1.771{\scriptstyle\pm0.108}\!\times\!10^{-1}$ \\
\addlinespace[1pt]
GraphGPS+RWSE & -- & $2.432{\scriptstyle\pm0.027}\!\times\!10^{-1}$ & $5.978{\scriptstyle\pm0.470}\!\times\!10^{-2}$ \\
SONAR & $0.0308\pm0.0002$ & $0.0431\pm0.0017$ & $0.1534\pm0.0030$ \\
GRIT & $0.0006\pm0.0002$ & $0.0126\pm0.0003$ & \textbf{$0.0081\pm0.0014$} \\
\bottomrule
\end{tabular}
\end{adjustbox}
\end{table}

\begin{table}[t]
\centering
\footnotesize
\setlength{\tabcolsep}{2.5pt}
\renewcommand{\arraystretch}{1.12}
\caption{Marked PathCopy test accuracy (\%) beyond the trained distances 2--10. Entries show means over five runs, with 95\% uncertainty intervals below, using 500 shared, class-balanced graphs per distance. The seven baseline GNNs achieve at least 95\% training and test accuracy at every trained distance; SONAR does not in any of its five runs, so its longer-distance errors do not isolate extrapolation failure. MLP is the no-propagation control. GRIT also meets the accuracy requirement in all five runs.}
\label{tab:e1-competence-extrapolation}
\begin{adjustbox}{max width=\textwidth}
\begin{tabular}{lcccccccccc}
\toprule
$d$ & GatedGCN & \shortstack{Stable-\\ChebNet} & AMP & GraphGPS & MP-SSM & A-DGN & Tikhonov & MLP & SONAR & GRIT \\
\midrule
$11$ & \shortstack{54.56\\{\scriptsize[39.00, 71.60]}} & \shortstack{100.00\\{\scriptsize[100.00, 100.00]}} & \shortstack{42.64\\{\scriptsize[20.44, 73.76]}} & \shortstack{100.00\\{\scriptsize[100.00, 100.00]}} & \shortstack{96.72\\{\scriptsize[94.68, 98.24]}} & \shortstack{92.72\\{\scriptsize[88.16, 96.36]}} & \shortstack{100.00\\{\scriptsize[100.00, 100.00]}} & \shortstack{20.00\\{\scriptsize[18.52, 21.56]}} & \shortstack{20.00\\{\scriptsize[18.48, 21.52]}} & \shortstack{100.00\\{\scriptsize[100.00, 100.00]}} \\
$12$ & \shortstack{20.00\\{\scriptsize[18.40, 21.60]}} & \shortstack{100.00\\{\scriptsize[100.00, 100.00]}} & \shortstack{20.00\\{\scriptsize[18.40, 21.56]}} & \shortstack{100.00\\{\scriptsize[100.00, 100.00]}} & \shortstack{58.64\\{\scriptsize[52.32, 64.60]}} & \shortstack{37.68\\{\scriptsize[30.80, 45.12]}} & \shortstack{100.00\\{\scriptsize[100.00, 100.00]}} & \shortstack{20.00\\{\scriptsize[18.48, 21.56]}} & \shortstack{20.00\\{\scriptsize[18.44, 21.56]}} & \shortstack{100.00\\{\scriptsize[100.00, 100.00]}} \\
$13$ & \shortstack{20.00\\{\scriptsize[18.40, 21.64]}} & \shortstack{99.84\\{\scriptsize[99.56, 100.00]}} & \shortstack{20.36\\{\scriptsize[18.76, 22.04]}} & \shortstack{100.00\\{\scriptsize[100.00, 100.00]}} & \shortstack{32.32\\{\scriptsize[29.36, 35.28]}} & \shortstack{21.64\\{\scriptsize[19.68, 23.52]}} & \shortstack{99.96\\{\scriptsize[99.84, 100.00]}} & \shortstack{20.00\\{\scriptsize[18.52, 21.60]}} & \shortstack{20.00\\{\scriptsize[18.48, 21.72]}} & \shortstack{100.00\\{\scriptsize[100.00, 100.00]}} \\
$14$ & \shortstack{20.00\\{\scriptsize[18.56, 21.52]}} & \shortstack{99.88\\{\scriptsize[99.56, 100.00]}} & \shortstack{19.96\\{\scriptsize[18.48, 21.52]}} & \shortstack{100.00\\{\scriptsize[100.00, 100.00]}} & \shortstack{24.24\\{\scriptsize[22.04, 26.40]}} & \shortstack{20.08\\{\scriptsize[18.44, 21.64]}} & \shortstack{99.96\\{\scriptsize[99.84, 100.00]}} & \shortstack{20.00\\{\scriptsize[18.44, 21.48]}} & \shortstack{20.00\\{\scriptsize[18.36, 21.60]}} & \shortstack{100.00\\{\scriptsize[100.00, 100.00]}} \\
$15$ & \shortstack{20.00\\{\scriptsize[18.44, 21.56]}} & \shortstack{99.72\\{\scriptsize[99.32, 100.00]}} & \shortstack{19.52\\{\scriptsize[17.72, 21.36]}} & \shortstack{100.00\\{\scriptsize[100.00, 100.00]}} & \shortstack{19.96\\{\scriptsize[18.20, 21.84]}} & \shortstack{20.04\\{\scriptsize[18.48, 21.72]}} & \shortstack{97.68\\{\scriptsize[95.00, 100.00]}} & \shortstack{20.00\\{\scriptsize[18.48, 21.56]}} & \shortstack{20.00\\{\scriptsize[18.44, 21.60]}} & \shortstack{100.00\\{\scriptsize[100.00, 100.00]}} \\
$16$ & \shortstack{20.00\\{\scriptsize[18.36, 21.52]}} & \shortstack{99.92\\{\scriptsize[99.68, 100.00]}} & \shortstack{20.36\\{\scriptsize[18.72, 22.12]}} & \shortstack{100.00\\{\scriptsize[100.00, 100.00]}} & \shortstack{21.56\\{\scriptsize[19.68, 23.40]}} & \shortstack{20.56\\{\scriptsize[18.84, 22.28]}} & \shortstack{88.56\\{\scriptsize[75.64, 98.08]}} & \shortstack{20.00\\{\scriptsize[18.52, 21.52]}} & \shortstack{20.00\\{\scriptsize[18.48, 21.64]}} & \shortstack{100.00\\{\scriptsize[100.00, 100.00]}} \\
$17$ & \shortstack{20.00\\{\scriptsize[18.44, 21.52]}} & \shortstack{98.84\\{\scriptsize[97.56, 99.92]}} & \shortstack{19.56\\{\scriptsize[17.76, 21.20]}} & \shortstack{100.00\\{\scriptsize[100.00, 100.00]}} & \shortstack{20.40\\{\scriptsize[18.80, 22.00]}} & \shortstack{19.44\\{\scriptsize[17.88, 21.04]}} & \shortstack{73.08\\{\scriptsize[45.72, 95.08]}} & \shortstack{20.00\\{\scriptsize[18.60, 21.48]}} & \shortstack{20.00\\{\scriptsize[18.44, 21.60]}} & \shortstack{100.00\\{\scriptsize[100.00, 100.00]}} \\
$18$ & \shortstack{20.00\\{\scriptsize[18.48, 21.56]}} & \shortstack{99.04\\{\scriptsize[98.12, 99.80]}} & \shortstack{20.28\\{\scriptsize[18.60, 21.88]}} & \shortstack{100.00\\{\scriptsize[100.00, 100.00]}} & \shortstack{21.92\\{\scriptsize[19.88, 23.92]}} & \shortstack{19.96\\{\scriptsize[18.36, 21.56]}} & \shortstack{61.60\\{\scriptsize[38.56, 85.88]}} & \shortstack{20.00\\{\scriptsize[18.48, 21.60]}} & \shortstack{20.00\\{\scriptsize[18.40, 21.60]}} & \shortstack{100.00\\{\scriptsize[100.00, 100.00]}} \\
$19$ & \shortstack{20.00\\{\scriptsize[18.40, 21.56]}} & \shortstack{94.08\\{\scriptsize[91.20, 96.84]}} & \shortstack{19.60\\{\scriptsize[17.88, 21.24]}} & \shortstack{100.00\\{\scriptsize[100.00, 100.00]}} & \shortstack{19.84\\{\scriptsize[18.00, 21.72]}} & \shortstack{20.56\\{\scriptsize[18.76, 22.36]}} & \shortstack{43.16\\{\scriptsize[27.08, 66.28]}} & \shortstack{20.00\\{\scriptsize[18.44, 21.52]}} & \shortstack{20.00\\{\scriptsize[18.48, 21.56]}} & \shortstack{100.00\\{\scriptsize[100.00, 100.00]}} \\
$20$ & \shortstack{20.00\\{\scriptsize[18.48, 21.60]}} & \shortstack{95.36\\{\scriptsize[92.44, 98.20]}} & \shortstack{19.64\\{\scriptsize[18.04, 21.32]}} & \shortstack{100.00\\{\scriptsize[100.00, 100.00]}} & \shortstack{21.12\\{\scriptsize[19.24, 22.92]}} & \shortstack{20.08\\{\scriptsize[18.44, 21.76]}} & \shortstack{30.12\\{\scriptsize[21.08, 44.64]}} & \shortstack{20.00\\{\scriptsize[18.48, 21.64]}} & \shortstack{20.00\\{\scriptsize[18.44, 21.52]}} & \shortstack{100.00\\{\scriptsize[100.00, 100.00]}} \\
$21$ & \shortstack{20.00\\{\scriptsize[18.48, 21.64]}} & \shortstack{20.04\\{\scriptsize[18.32, 21.84]}} & \shortstack{20.48\\{\scriptsize[18.76, 22.28]}} & \shortstack{100.00\\{\scriptsize[100.00, 100.00]}} & \shortstack{20.76\\{\scriptsize[19.00, 22.56]}} & \shortstack{20.40\\{\scriptsize[18.84, 21.96]}} & \shortstack{24.40\\{\scriptsize[19.40, 32.60]}} & \shortstack{20.00\\{\scriptsize[18.44, 21.52]}} & \shortstack{20.00\\{\scriptsize[18.48, 21.56]}} & \shortstack{100.00\\{\scriptsize[100.00, 100.00]}} \\
$22$ & \shortstack{20.00\\{\scriptsize[18.44, 21.56]}} & \shortstack{19.00\\{\scriptsize[15.88, 22.12]}} & \shortstack{20.04\\{\scriptsize[18.48, 21.64]}} & \shortstack{100.00\\{\scriptsize[100.00, 100.00]}} & \shortstack{19.68\\{\scriptsize[17.76, 21.52]}} & \shortstack{20.08\\{\scriptsize[18.44, 21.72]}} & \shortstack{24.04\\{\scriptsize[18.96, 32.48]}} & \shortstack{20.00\\{\scriptsize[18.48, 21.60]}} & \shortstack{20.00\\{\scriptsize[18.48, 21.56]}} & \shortstack{100.00\\{\scriptsize[100.00, 100.00]}} \\
$23$ & \shortstack{20.00\\{\scriptsize[18.48, 21.56]}} & \shortstack{21.04\\{\scriptsize[19.28, 22.84]}} & \shortstack{20.32\\{\scriptsize[18.72, 22.08]}} & \shortstack{100.00\\{\scriptsize[100.00, 100.00]}} & \shortstack{20.16\\{\scriptsize[18.04, 22.36]}} & \shortstack{19.92\\{\scriptsize[18.28, 21.48]}} & \shortstack{22.16\\{\scriptsize[18.88, 26.96]}} & \shortstack{20.00\\{\scriptsize[18.48, 21.64]}} & \shortstack{20.00\\{\scriptsize[18.52, 21.52]}} & \shortstack{100.00\\{\scriptsize[100.00, 100.00]}} \\
$24$ & \shortstack{20.00\\{\scriptsize[18.52, 21.64]}} & \shortstack{19.52\\{\scriptsize[17.56, 21.44]}} & \shortstack{20.12\\{\scriptsize[18.52, 21.68]}} & \shortstack{100.00\\{\scriptsize[100.00, 100.00]}} & \shortstack{20.36\\{\scriptsize[18.56, 22.16]}} & \shortstack{20.04\\{\scriptsize[18.44, 21.56]}} & \shortstack{20.04\\{\scriptsize[18.52, 21.60]}} & \shortstack{20.00\\{\scriptsize[18.44, 21.56]}} & \shortstack{20.00\\{\scriptsize[18.48, 21.56]}} & \shortstack{100.00\\{\scriptsize[100.00, 100.00]}} \\
$25$ & \shortstack{20.00\\{\scriptsize[18.48, 21.56]}} & \shortstack{18.48\\{\scriptsize[16.72, 20.24]}} & \shortstack{20.00\\{\scriptsize[18.48, 21.60]}} & \shortstack{100.00\\{\scriptsize[100.00, 100.00]}} & \shortstack{19.00\\{\scriptsize[17.28, 20.72]}} & \shortstack{20.12\\{\scriptsize[18.52, 21.76]}} & \shortstack{20.04\\{\scriptsize[18.48, 21.60]}} & \shortstack{20.00\\{\scriptsize[18.44, 21.64]}} & \shortstack{20.00\\{\scriptsize[18.48, 21.60]}} & \shortstack{100.00\\{\scriptsize[100.00, 100.00]}} \\
$26$ & \shortstack{20.00\\{\scriptsize[18.44, 21.64]}} & \shortstack{21.44\\{\scriptsize[19.52, 23.44]}} & \shortstack{20.28\\{\scriptsize[18.72, 21.92]}} & \shortstack{100.00\\{\scriptsize[100.00, 100.00]}} & \shortstack{20.60\\{\scriptsize[18.88, 22.32]}} & \shortstack{20.08\\{\scriptsize[18.48, 21.64]}} & \shortstack{20.00\\{\scriptsize[18.40, 21.60]}} & \shortstack{20.00\\{\scriptsize[18.44, 21.60]}} & \shortstack{20.00\\{\scriptsize[18.40, 21.56]}} & \shortstack{100.00\\{\scriptsize[100.00, 100.00]}} \\
$27$ & \shortstack{20.00\\{\scriptsize[18.44, 21.64]}} & \shortstack{21.80\\{\scriptsize[19.72, 23.88]}} & \shortstack{19.96\\{\scriptsize[18.40, 21.48]}} & \shortstack{100.00\\{\scriptsize[100.00, 100.00]}} & \shortstack{21.92\\{\scriptsize[19.72, 24.08]}} & \shortstack{20.20\\{\scriptsize[18.64, 21.80]}} & \shortstack{20.00\\{\scriptsize[18.48, 21.52]}} & \shortstack{20.00\\{\scriptsize[18.48, 21.52]}} & \shortstack{20.00\\{\scriptsize[18.52, 21.60]}} & \shortstack{100.00\\{\scriptsize[100.00, 100.00]}} \\
$28$ & \shortstack{20.00\\{\scriptsize[18.44, 21.60]}} & \shortstack{20.32\\{\scriptsize[18.36, 22.28]}} & \shortstack{19.96\\{\scriptsize[18.40, 21.52]}} & \shortstack{100.00\\{\scriptsize[100.00, 100.00]}} & \shortstack{20.12\\{\scriptsize[18.12, 22.24]}} & \shortstack{20.20\\{\scriptsize[18.36, 21.92]}} & \shortstack{20.00\\{\scriptsize[18.40, 21.60]}} & \shortstack{20.00\\{\scriptsize[18.44, 21.60]}} & \shortstack{20.00\\{\scriptsize[18.52, 21.48]}} & \shortstack{100.00\\{\scriptsize[100.00, 100.00]}} \\
$29$ & \shortstack{20.00\\{\scriptsize[18.40, 21.56]}} & \shortstack{20.96\\{\scriptsize[18.84, 23.24]}} & \shortstack{19.88\\{\scriptsize[18.36, 21.44]}} & \shortstack{100.00\\{\scriptsize[100.00, 100.00]}} & \shortstack{20.24\\{\scriptsize[18.60, 21.84]}} & \shortstack{19.84\\{\scriptsize[18.12, 21.48]}} & \shortstack{20.00\\{\scriptsize[18.44, 21.60]}} & \shortstack{20.00\\{\scriptsize[18.48, 21.56]}} & \shortstack{20.00\\{\scriptsize[18.48, 21.48]}} & \shortstack{100.00\\{\scriptsize[100.00, 100.00]}} \\
$30$ & \shortstack{20.00\\{\scriptsize[18.44, 21.56]}} & \shortstack{18.48\\{\scriptsize[16.56, 20.32]}} & \shortstack{19.52\\{\scriptsize[17.76, 21.20]}} & \shortstack{100.00\\{\scriptsize[100.00, 100.00]}} & \shortstack{19.40\\{\scriptsize[17.72, 21.00]}} & \shortstack{19.76\\{\scriptsize[18.12, 21.40]}} & \shortstack{20.00\\{\scriptsize[18.48, 21.56]}} & \shortstack{20.00\\{\scriptsize[18.48, 21.60]}} & \shortstack{20.00\\{\scriptsize[18.52, 21.60]}} & \shortstack{100.00\\{\scriptsize[100.00, 100.00]}} \\
$40$ & \shortstack{20.00\\{\scriptsize[18.40, 21.56]}} & \shortstack{18.84\\{\scriptsize[16.68, 21.12]}} & \shortstack{20.12\\{\scriptsize[18.56, 21.68]}} & \shortstack{100.00\\{\scriptsize[100.00, 100.00]}} & \shortstack{19.20\\{\scriptsize[17.40, 21.04]}} & \shortstack{19.68\\{\scriptsize[18.08, 21.28]}} & \shortstack{20.00\\{\scriptsize[18.44, 21.64]}} & \shortstack{20.00\\{\scriptsize[18.48, 21.56]}} & \shortstack{20.00\\{\scriptsize[18.44, 21.56]}} & \shortstack{100.00\\{\scriptsize[100.00, 100.00]}} \\
\bottomrule
\end{tabular}
\end{adjustbox}
\end{table}

The GNN-family experiments in Section~\ref{sec:gnn-families} test whether models can retrieve a distant source's label on Marked PathCopy and predict weighted combinations of inputs on RangeProfileCopy.
The numerical results for the top panels of Figure~\ref{fig:architecture-summary} are given in  Table~\ref{tab:e1-range-predictive} (RangeProfileCopy NMSE) and Table~\ref{tab:e1-competence-extrapolation} (PathCopy accuracy beyond the trained distances 2--10).
We estimate PathCopy's 95\% uncertainty intervals by resampling training runs and their graphs 2,000 times. Intervals describe each model's accuracy, rather than differences between models.
Table~\ref{tab:e1-competence-extrapolation} covers every integer distance from 11 to 30, plus distance 40. Each distance uses 500 class-balanced test graphs, shared across all models and their five independent runs.
The results show a sharp drop for Stable-ChebNet, from 95.36\% accuracy at distance 20 to 20.04\% at 21, just beyond its support. Tikhonov declines more gradually, reaching 24.40\% at distance 21 and 20.04\% at 24. Figure~\ref{fig:architecture-summary} ends at distance 30 to make these transitions easier to see.

\subsection{Source-use and task-alignment diagnostics}
\label{app:competence-diagnostics}

In the Marked PathCopy experiment in Section~\ref{sec:gnn-families} (Figure~\ref{fig:architecture-summary}, top-left), several models fall to chance beyond the trained distances 2--10, even at distances their architectures can reach.
The tests below examine whether these models still use the marked source and whether moving the query closer restores correct copying.

Keeping the trained model parameters fixed, we modify test inputs at source--query distances 2, 5, 10, 15, 20, 30, and 40. Source substitution changes the marked source class, distractor variation changes the three irrelevant classes, role swaps move the source marker to a distractor, and query relocation moves the query to another valid node. Assignments are averaged within each graph, and intervals resample seeds and graphs hierarchically. Table~\ref{tab:e1-competence-source_accuracy} reports the central source-substitution result.

The results distinguish several behaviors. GraphGPS and GRIT follow changed source values and ignore distractor values while remaining correct. They also track relocated source/query roles throughout the tested ranges, although GRIT reaches 99.0\% role-swap accuracy when the moved source lies within one hop of the query. Stable-ChebNet follows source changes through distance 20 (95.50\% accuracy there), with source-following ending at its 20-hop support. GatedGCN and AMP change no predicted classes after source substitutions at distance 15; A-DGN changes 0.01\% and MP-SSM 2.50\%. All eight GNNs that learned the training task copy correctly when relocated queries lie within 2--10 hops, including on the larger test graphs, localizing their observed failures to distance-dependent source use. MP-SSM changes its prediction on 51.32\% of the sampled distance-15 distractor assignments (graph-first average), even though it rarely follows source changes at that distance. Tikhonov retains source-following at distance 15 (98.25\%) but degrades at distance 20 (29.44\%) despite its graph-wide operator.

\begin{table}[t]
\centering
\footnotesize
\setlength{\tabcolsep}{2.5pt}
\renewcommand{\arraystretch}{1.12}
\caption{Source-substitution accuracy: the prediction follows the replacement source class. Values are percentages with 95\% seed/graph bootstrap intervals; assignments are averaged within graphs first. All 500 graphs per distance and all four replacement source classes are used. Graph resamples are paired across models.}
\label{tab:e1-competence-source_accuracy}
\begin{adjustbox}{max width=\textwidth}
\begin{tabular}{lccccccc}
\toprule
Model & $d=2$ & $d=5$ & $d=10$ & $d=15$ & $d=20$ & $d=30$ & $d=40$ \\
\midrule
GatedGCN & \shortstack{100.00\\{\scriptsize[100.00, 100.00]}} & \shortstack{100.00\\{\scriptsize[100.00, 100.00]}} & \shortstack{100.00\\{\scriptsize[100.00, 100.00]}} & \shortstack{20.00\\{\scriptsize[19.59, 20.38]}} & \shortstack{20.00\\{\scriptsize[19.60, 20.38]}} & \shortstack{20.00\\{\scriptsize[19.60, 20.39]}} & \shortstack{20.00\\{\scriptsize[19.60, 20.40]}} \\
Stable-ChebNet & \shortstack{99.99\\{\scriptsize[99.96, 100.00]}} & \shortstack{100.00\\{\scriptsize[100.00, 100.00]}} & \shortstack{100.00\\{\scriptsize[100.00, 100.00]}} & \shortstack{99.82\\{\scriptsize[99.53, 100.00]}} & \shortstack{95.50\\{\scriptsize[92.77, 98.20]}} & \shortstack{20.38\\{\scriptsize[19.88, 20.87]}} & \shortstack{20.29\\{\scriptsize[19.75, 20.82]}} \\
AMP & \shortstack{100.00\\{\scriptsize[100.00, 100.00]}} & \shortstack{100.00\\{\scriptsize[100.00, 100.00]}} & \shortstack{99.98\\{\scriptsize[99.94, 100.00]}} & \shortstack{20.12\\{\scriptsize[19.68, 20.60]}} & \shortstack{20.09\\{\scriptsize[19.67, 20.50]}} & \shortstack{20.12\\{\scriptsize[19.70, 20.57]}} & \shortstack{19.97\\{\scriptsize[19.58, 20.37]}} \\
GraphGPS & \shortstack{100.00\\{\scriptsize[100.00, 100.00]}} & \shortstack{100.00\\{\scriptsize[100.00, 100.00]}} & \shortstack{100.00\\{\scriptsize[100.00, 100.00]}} & \shortstack{100.00\\{\scriptsize[100.00, 100.00]}} & \shortstack{100.00\\{\scriptsize[100.00, 100.00]}} & \shortstack{100.00\\{\scriptsize[100.00, 100.00]}} & \shortstack{100.00\\{\scriptsize[100.00, 100.00]}} \\
MP-SSM & \shortstack{100.00\\{\scriptsize[100.00, 100.00]}} & \shortstack{100.00\\{\scriptsize[100.00, 100.00]}} & \shortstack{99.98\\{\scriptsize[99.92, 100.00]}} & \shortstack{21.16\\{\scriptsize[20.69, 21.65]}} & \shortstack{19.72\\{\scriptsize[19.22, 20.17]}} & \shortstack{20.15\\{\scriptsize[19.74, 20.57]}} & \shortstack{20.20\\{\scriptsize[19.72, 20.63]}} \\
MLP & \shortstack{20.00\\{\scriptsize[19.59, 20.39]}} & \shortstack{20.00\\{\scriptsize[19.59, 20.40]}} & \shortstack{20.00\\{\scriptsize[19.61, 20.41]}} & \shortstack{20.00\\{\scriptsize[19.60, 20.39]}} & \shortstack{20.00\\{\scriptsize[19.59, 20.41]}} & \shortstack{20.00\\{\scriptsize[19.60, 20.39]}} & \shortstack{20.00\\{\scriptsize[19.62, 20.40]}} \\
A-DGN & \shortstack{100.00\\{\scriptsize[100.00, 100.00]}} & \shortstack{100.00\\{\scriptsize[100.00, 100.00]}} & \shortstack{99.82\\{\scriptsize[99.66, 99.94]}} & \shortstack{19.99\\{\scriptsize[19.55, 20.37]}} & \shortstack{19.98\\{\scriptsize[19.56, 20.38]}} & \shortstack{20.06\\{\scriptsize[19.64, 20.45]}} & \shortstack{20.08\\{\scriptsize[19.71, 20.48]}} \\
Tikhonov & \shortstack{100.00\\{\scriptsize[100.00, 100.00]}} & \shortstack{100.00\\{\scriptsize[100.00, 100.00]}} & \shortstack{100.00\\{\scriptsize[100.00, 100.00]}} & \shortstack{98.25\\{\scriptsize[96.26, 100.00]}} & \shortstack{29.44\\{\scriptsize[20.66, 43.78]}} & \shortstack{20.00\\{\scriptsize[19.59, 20.39]}} & \shortstack{20.00\\{\scriptsize[19.62, 20.38]}} \\
GRIT & \shortstack{100.00\\{\scriptsize[100.00, 100.00]}} & \shortstack{100.00\\{\scriptsize[100.00, 100.00]}} & \shortstack{100.00\\{\scriptsize[100.00, 100.00]}} & \shortstack{100.00\\{\scriptsize[100.00, 100.00]}} & \shortstack{100.00\\{\scriptsize[100.00, 100.00]}} & \shortstack{100.00\\{\scriptsize[100.00, 100.00]}} & \shortstack{100.00\\{\scriptsize[100.00, 100.00]}} \\
\bottomrule
\end{tabular}
\end{adjustbox}
\end{table}

\subsection{Sensitivity to the Tikhonov solver}
\label{app:tikhonov-replay}

This analysis checks whether numerical approximation affects Tikhonov's RangeProfileCopy results in Section~\ref{sec:gnn-families} (Figure~\ref{fig:architecture-summary}, top-right).
Tikhonov computes updated node features by solving a system of linear equations linking nodes across the graph.
The exact solution can use distant inputs, but each forward pass requires a numerical solve.
Limiting the number of solver iterations reduces computation but can change both predictions and their dependence on distant inputs.

We keep each validation-selected model fixed and compare the standard 30-iteration solver with a direct 64-bit solution and a solver stopped after five iterations.
This tests whether the standard solver adequately approximates the learned operator and how stopping earlier affects prediction error and measured influence range (Table~\ref{tab:tikhonov-replay}).

\begin{table}[h]
\centering
\small
\setlength{\tabcolsep}{3.5pt}
\caption{Effect of the Tikhonov solver on RangeProfileCopy. ``Direct solve'' is the 64-bit reference, the standard solver uses 30 iterations, and ``5 iterations'' deliberately stops early. Prediction rows report test NMSE and the relative difference between the standard and direct predictions; the final row reports the average resolvent $R_{90}$ in graph hops. Entries are means over three runs.}
\label{tab:tikhonov-replay}
\begin{tabular}{llrlrrr}
\toprule
Task & Quantity & \shortstack{Standard--direct\\difference} & Metric & \shortstack{Direct\\solve} & \shortstack{Standard\\solver} & \shortstack{5\\iterations} \\
\midrule
RangeProfileCopy & Local & .005 & NMSE & .0926 & .0925 & .196 \\
 & Inverse-distance & $3.8{\times}10^{-5}$ & NMSE & .164 & .164 & .242 \\
 & Resolvent & .012 & NMSE & .000117 & .000260 & .240 \\
 & Resolvent influence & -- & $R_{90}$ & 20.1 & 18.8 & 38.8 \\
\bottomrule
\end{tabular}
\end{table}

The standard solver produces predictions close to the direct reference. Stopping after five iterations sharply increases both prediction error and the measured resolvent range. Thus, a broader influence profile can reflect solver error rather than more accurate use of distant inputs.

\subsection{Architectural-contrast and precision results}
\label{app:mechanism-precision}

\begin{table}[h!]
\centering
\small
\setlength{\tabcolsep}{2pt}
\caption{Architectural contrasts on hard-resolvent RangeProfileCopy. From top to bottom, the pairs vary GatedGCN depth and width, ChebNet stabilization, fixed versus adaptive depth, GraphGPS global attention, and MP-SSM state recurrence. NMSE is pooled over the test set; influence statistics average 2,000 topology--size--target cases per run. Values are means over three independent runs; entries displaying $\pm$ include the sample standard deviation.}
\label{tab:e2-architectural-contrasts}
\begin{tabular}{@{}lrrrrrr@{}}
\toprule
Control $\rightarrow$ full & \multicolumn{1}{c}{\shortstack{NMSE\\control}} & \multicolumn{1}{c}{\shortstack{NMSE\\full}} & \multicolumn{1}{c}{\shortstack{$R_{90}$\\control}} & \multicolumn{1}{c}{\shortstack{$R_{90}$\\full}} & $\Delta$ cosine & Full $M_{>20}$ \\
\midrule
GatedGCN-5 $\rightarrow$ GatedGCN-20 & $0.458{\scriptstyle\pm0.012}$ & $0.126{\scriptstyle\pm0.006}$ & $4.19{\scriptstyle\pm0.01}$ & $10.06{\scriptstyle\pm0.12}$ & $+0.1313$ & $0.000$ \\
Vanilla ChebNet $\rightarrow$ Stable-ChebNet & $0.024{\scriptstyle\pm0.001}$ & $0.023{\scriptstyle\pm0.000}$ & $13.75{\scriptstyle\pm0.07}$ & $13.78{\scriptstyle\pm0.07}$ & $-0.0001$ & $0.000$ \\
GatedGCN-20 $\rightarrow$ AMP & $0.126{\scriptstyle\pm0.006}$ & $0.456{\scriptstyle\pm0.016}$ & $10.06{\scriptstyle\pm0.12}$ & $3.65{\scriptstyle\pm0.19}$ & $-0.1551$ & $0.000$ \\
GraphGPS-local $\rightarrow$ GraphGPS & $0.542{\scriptstyle\pm0.012}$ & $0.265{\scriptstyle\pm0.001}$ & $3.66{\scriptstyle\pm0.09}$ & $48.00{\scriptstyle\pm0.09}$ & $+0.0932$ & $0.385$ \\
MP-SSM-local $\rightarrow$ MP-SSM-20 & $0.823{\scriptstyle\pm0.003}$ & $0.160{\scriptstyle\pm0.002}$ & $1.00{\scriptstyle\pm0.00}$ & $7.53{\scriptstyle\pm0.04}$ & $+0.4072$ & $0.000$ \\
\bottomrule
\end{tabular}
\end{table}

\begin{table}[h!]
\centering
\small
\setlength{\tabcolsep}{3pt}
\caption{BF16 precision replay relative to FP64 on hard-resolvent RangeProfileCopy. Each run replays 256 graphs and 512 influence targets using the same trained parameters. Values are means over three independent runs; entries displaying $\pm$ include the sample standard deviation.}
\label{tab:e2-precision}
\begin{tabular}{@{}lrrrrr@{}}
\toprule
Model & \multicolumn{1}{c}{\shortstack{Output drift\\(\%)}} & $\Delta$NMSE & \multicolumn{1}{c}{\shortstack{$R_{90}$ retention\\(\%)}} & \multicolumn{1}{c}{\shortstack{Exact $R_{90}$\\(\%)}} & $\Delta\mathcal{E}_{\rm tail}$ \\
\midrule
GatedGCN-20 & $3.78{\scriptstyle\pm0.11}$ & $-0.0012$ & $100.02{\scriptstyle\pm0.15}$ & $89.5{\scriptstyle\pm3.0}$ & $+0.000$ \\
Stable-ChebNet & $17.76{\scriptstyle\pm2.69}$ & $+0.0768$ & $99.96{\scriptstyle\pm0.05}$ & $95.9{\scriptstyle\pm1.8}$ & $+0.000$ \\
AMP & $4.42{\scriptstyle\pm1.53}$ & $+0.0001$ & $99.98{\scriptstyle\pm0.12}$ & $99.5{\scriptstyle\pm0.1}$ & $+0.000$ \\
GraphGPS & $1.56{\scriptstyle\pm0.10}$ & $-0.0001$ & $100.00{\scriptstyle\pm0.03}$ & $95.4{\scriptstyle\pm1.6}$ & $-0.037$ \\
MP-SSM-20 & $7.22{\scriptstyle\pm0.14}$ & $+0.0205$ & $99.81{\scriptstyle\pm0.24}$ & $98.4{\scriptstyle\pm2.2}$ & $+0.000$ \\
SONAR & $5.35{\scriptstyle\pm0.34}$ & $+0.0037$ & $100.03{\scriptstyle\pm0.05}$ & $98.0{\scriptstyle\pm0.7}$ & $+0.000$ \\
GRIT & $1.67{\scriptstyle\pm0.25}$ & $-0.0001$ & $99.99{\scriptstyle\pm0.16}$ & $88.5{\scriptstyle\pm3.1}$ & $-0.002$ \\
\bottomrule
\end{tabular}
\end{table}

Table~\ref{tab:e2-architectural-contrasts} gives the numerical results for the five architectural comparisons in Section~\ref{sec:gnn-families} (Figure~\ref{fig:architecture-summary}, bottom panels).
Alongside prediction error and $R_{90}$, it reports the change in cosine similarity to the target distance profile (higher is better) and the fraction of influence beyond 20 hops, $M_{>20}$ (Appendix~\ref{app:range}).
GraphGPS assigns 38.5\% of its influence beyond 20 hops, yet its NMSE (0.265) exceeds Stable-ChebNet's (0.023), whose influence is confined to 20 hops. Thus, using more distant inputs does not by itself ensure lower prediction error.

Table~\ref{tab:e2-precision} tests whether the same trained models retain their predictions and measured range when evaluated in BF16 instead of FP64.
Mean $R_{90}$ retention is near 100\% for all seven models, but Stable-ChebNet has 17.76\% output drift and an NMSE increase of 0.0768; MP-SSM has 7.22\% drift and an NMSE increase of 0.0205.
A nearly unchanged range statistic can therefore conceal a loss of prediction accuracy.
Output drift and range retention are defined in Appendix~\ref{app:range}; ``Exact $R_{90}$'' is the percentage of evaluated targets whose radius is unchanged.

\begin{figure}[h]
\centering
\includegraphics[width=\textwidth]{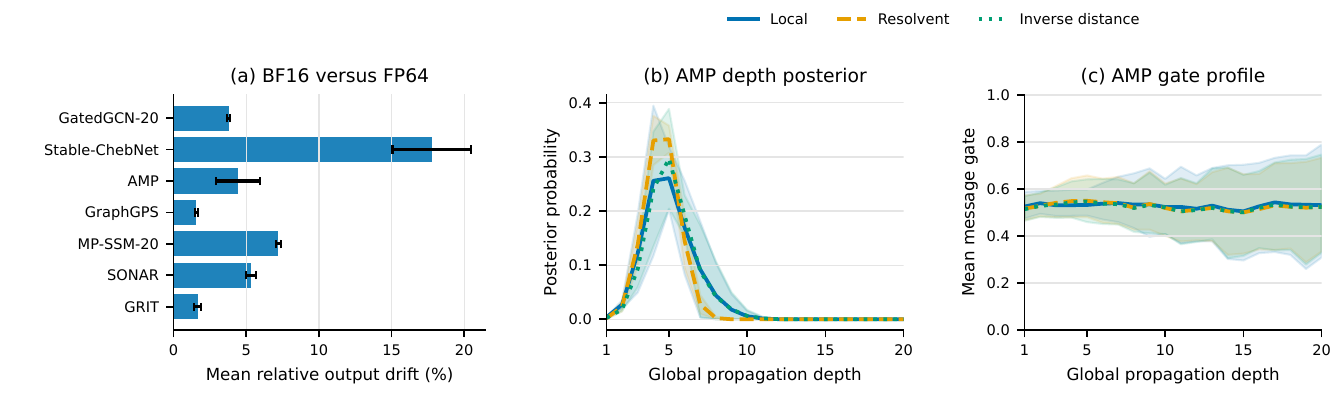}
\caption{Complementary hard-resolvent and AMP diagnostics. (a) BF16 output drift relative to FP64 for different architectures; bars show sample standard deviations across runs. (b) AMP's learned global depth posterior; lines are run means and bands are sample standard deviations. (c) AMP's mean message-gate profile; lines average run-level means and bands average the within-run 10th--90th percentiles.}
\label{fig:e2-precision-amp}
\end{figure}

Figure~\ref{fig:e2-precision-amp}(a) plots the output drift reported in Table~\ref{tab:e2-precision}.
Panels (b) and (c) examine AMP. In the main experiment, AMP has a shorter realized range and higher prediction error than the fixed-depth GatedGCN-20 on the hard-resolvent target. Panel (b) helps explain this result: although AMP can use depths 1--20, its learned global posterior places most mass on depths 3--7. The same preference for shallow computation appears for all three RangeProfileCopy transformations: local two-hop averaging, the hard resolvent, and inverse-distance weighting. Panel (c) shows that the mean channelwise message gate remains close to 0.5 across layers, while individual gate values become more dispersed at later layers. Differences among tasks are therefore more visible in AMP's depth posterior than in its average gate strength.

\section{Molecular Validation}
\label{app:molecular}

This appendix provides the models, training protocol, and full results for the molecular study introduced in Section~\ref{sec:remote-pairs}. Models are trained on ECHO-Charge-noCOM and evaluated both on its full test split and with RemotePairs, which applies the trained models without further training and is defined in Appendix~\ref{app:echo-remotepairs}.

\subsection{Models and training}

\begin{table}[ht]
\centering
\small
\setlength{\tabcolsep}{4pt}
\caption{Model sizes and learning rates for molecular training. Width and total trainable parameter counts include the encoder and prediction head.}
\label{tab:molecular-training}
\begin{tabular}{lrrc}
\toprule
Model & Width & Parameters & Learning rate \\
\midrule
GatedGCN-3 & 176 & 506,886 & $10^{-3}$ \\
GatedGCN-20 & 72 & 542,958 & $10^{-3}$ \\
Stable-ChebNet & 208 & 499,518 & $10^{-3}$ \\
GraphGPS & 96 & 503,526 & $10^{-3}$ \\
MP-SSM-20 & 320 & 501,126 & $3\times10^{-4}$ \\
A-DGN & 424 & 501,598 & $10^{-3}$ \\
Tikhonov & 528 & 501,581 & $10^{-3}$ \\
SONAR & 264 & 488,407 & $10^{-3}$ \\
GRIT & 104 & 511,062 & $3\times10^{-4}$ \\
\bottomrule
\end{tabular}
\end{table}

Atomic number receives a categorical embedding. For bond type $t$ and quantized bond-length midpoint $b$, the shared scalar edge encoding is $\operatorname{softplus}(e_t+wb)+10^{-6}$, where $\operatorname{softplus}(z)=\log(1+e^z)$ and $e_t$ and $w$ are learned scalars; GatedGCN/GraphGPS use it as a one-dimensional edge state, Stable-ChebNet/MP-SSM/A-DGN use it as an adjacency weight, and Tikhonov uses it in both qCheb and its normalized Laplacian. SONAR multiplies it by the learned directed edge weight. GRIT projects it into bonded pair states and adds the random-walk encoding. We compare GatedGCN-3, GatedGCN-20, Stable-ChebNet with $K=20$ and fixed largest normalized-Laplacian eigenvalue $\lambda_{\max}=2$, four-block global GraphGPS, 20-update MP-SSM, 20-iteration A-DGN, the degree-5 Tikhonov layer with 50 PCG iterations, 20-update SONAR, and four-layer GRIT. Results use three independent training runs per model.

All models use zero weight decay and minimize the base-10 logarithm of the atom-weighted minibatch MSE; epoch summaries average batch losses with atom-count weights. Model selection uses validation loss only, with test and RemotePairs analyses performed on the best checkpoint. Table~\ref{tab:molecular-training} reports model sizes and selected learning rates. Training uses Adam with batches of 256 and no scheduler for GatedGCN, Stable-ChebNet, GraphGPS, MP-SSM, A-DGN, Tikhonov and SONAR, and AdamW with batches of 32 for GRIT.

SONAR and GRIT use gradient clipping at norm 1, a 60-epoch minimum, and early stopping after 15 epochs without a validation-loss improvement of 0.002; other models use no clipping. SONAR additionally screens integration steps $\{0.01,0.005,0.001\}$ at learning rate $10^{-3}$, selecting $h=0.01$, with a 200-epoch final cap. GRIT screens rates $\{3\times10^{-4},10^{-3}\}$ for 60 epochs with attention dropout 0.2. It uses FP32, evaluation batches of 32, and a ten-epoch warm-up followed by cosine decay over a fixed 200-epoch horizon in both screening and finals. Its final epoch caps are 200/100/100, with 196/100/100 epochs completed.

\subsection{ECHO-Charge-noCOM and RemotePairs results}
\label{app:molecular-results}

For every model, full and cropped predictions use the same checkpoint in evaluation mode, with graph-dependent quantities computed from the supplied graph. For GRIT, these include random-walk features and node degrees; BatchNorm statistics remain fixed at their checkpoint values. The comparison measures the predictive benefit of remote context for each complete model.

\begin{table}[t]
\centering
\small
\caption{RemotePairs full-molecule and radius-crop intervention. Intervals are paired hierarchical 95\% bootstrap intervals.}
\label{tab:e3-remote-context-v2}
\begin{tabular}{lrrrr}
\toprule
Model & Test MAE & Full acc. & Crop acc. & Full $-$ crop \\ 
\midrule
GRIT & 0.0059 & 0.690 [0.651, 0.726] & 0.500 [0.500, 0.500] & 0.190 [0.151, 0.226] \\
GatedGCN-20 & 0.0060 & 0.624 [0.584, 0.663] & 0.500 [0.500, 0.500] & 0.124 [0.084, 0.163] \\
GraphGPS & 0.0064 & 0.639 [0.595, 0.685] & 0.500 [0.500, 0.500] & 0.139 [0.095, 0.185] \\
A-DGN & 0.0066 & 0.638 [0.598, 0.678] & 0.500 [0.500, 0.500] & 0.138 [0.098, 0.178] \\
GatedGCN-3 & 0.0069 & 0.507 [0.497, 0.517] & 0.500 [0.500, 0.500] & 0.007 [-0.003, 0.017] \\
SONAR & 0.0077 & 0.615 [0.573, 0.653] & 0.500 [0.500, 0.500] & 0.115 [0.073, 0.153] \\
Stable-ChebNet & 0.0078 & 0.581 [0.521, 0.636] & 0.500 [0.497, 0.502] & 0.081 [0.021, 0.136] \\
MP-SSM & 0.0085 & 0.597 [0.558, 0.634] & 0.500 [0.496, 0.500] & 0.098 [0.060, 0.135] \\
Tikhonov & 0.0090 & 0.570 [0.525, 0.613] & 0.500 [0.499, 0.501] & 0.070 [0.025, 0.113] \\
\bottomrule
\end{tabular}
\end{table}

\begin{table}[t]
\centering
\small
\caption{RemotePairs balanced charge-order accuracy. Entries are medians with paired hierarchical 95\% bootstrap intervals. The predeclared claim rule uses the full-molecule median.}
\label{tab:e3-remote-context-v2-balanced}
\begin{tabular}{lrr}
\toprule
Model & Full balanced acc. & Crop balanced acc. \\
\midrule
A-DGN & 0.640 [0.600, 0.680] & 0.500 [0.500, 0.500] \\
GRIT & 0.687 [0.648, 0.724] & 0.500 [0.500, 0.500] \\
GatedGCN-20 & 0.624 [0.584, 0.663] & 0.500 [0.500, 0.500] \\
GatedGCN-3 & 0.507 [0.497, 0.517] & 0.500 [0.500, 0.500] \\
GraphGPS & 0.634 [0.591, 0.680] & 0.500 [0.500, 0.500] \\
MP-SSM & 0.594 [0.555, 0.631] & 0.500 [0.496, 0.500] \\
SONAR & 0.613 [0.572, 0.652] & 0.500 [0.500, 0.500] \\
Stable-ChebNet & 0.575 [0.511, 0.633] & 0.499 [0.496, 0.502] \\
Tikhonov & 0.569 [0.523, 0.612] & 0.500 [0.498, 0.501] \\
\bottomrule
\end{tabular}
\end{table}

\begin{table}[t]
\centering
\scriptsize
\caption{RemotePairs intervention by matching radius, pooled across the three seeds.}
\label{tab:e3-remote-context-v2-radius}
\begin{tabular}{llrrrr}
\toprule
Model & Radius & Pairs & Full acc. & Crop acc. & Full $-$ crop \\ 
\midrule
A-DGN & 2 & 348 & 0.730 & 0.500 & +0.230 \\
A-DGN & 3 & 207 & 0.626 & 0.500 & +0.126 \\
A-DGN & 4 & 165 & 0.558 & 0.500 & +0.058 \\
GRIT & 2 & 348 & 0.767 & 0.500 & +0.267 \\
GRIT & 3 & 207 & 0.652 & 0.500 & +0.152 \\
GRIT & 4 & 165 & 0.648 & 0.500 & +0.148 \\
GatedGCN-20 & 2 & 348 & 0.744 & 0.500 & +0.244 \\
GatedGCN-20 & 3 & 207 & 0.589 & 0.500 & +0.089 \\
GatedGCN-20 & 4 & 165 & 0.539 & 0.500 & +0.039 \\
GatedGCN-3 & 2 & 348 & 0.520 & 0.500 & +0.020 \\
GatedGCN-3 & 3 & 207 & 0.500 & 0.500 & +0.000 \\
GatedGCN-3 & 4 & 165 & 0.500 & 0.500 & +0.000 \\
GraphGPS & 2 & 348 & 0.704 & 0.500 & +0.204 \\
GraphGPS & 3 & 207 & 0.609 & 0.500 & +0.109 \\
GraphGPS & 4 & 165 & 0.606 & 0.500 & +0.106 \\
MP-SSM & 2 & 348 & 0.651 & 0.500 & +0.151 \\
MP-SSM & 3 & 207 & 0.601 & 0.500 & +0.101 \\
MP-SSM & 4 & 165 & 0.536 & 0.497 & +0.039 \\
SONAR & 2 & 348 & 0.710 & 0.500 & +0.210 \\
SONAR & 3 & 207 & 0.623 & 0.500 & +0.123 \\
SONAR & 4 & 165 & 0.506 & 0.500 & +0.006 \\
Stable-ChebNet & 2 & 348 & 0.658 & 0.501 & +0.157 \\
Stable-ChebNet & 3 & 207 & 0.541 & 0.498 & +0.043 \\
Stable-ChebNet & 4 & 165 & 0.542 & 0.500 & +0.042 \\
Tikhonov & 2 & 348 & 0.615 & 0.500 & +0.115 \\
Tikhonov & 3 & 207 & 0.531 & 0.500 & +0.031 \\
Tikhonov & 4 & 165 & 0.561 & 0.500 & +0.061 \\
\bottomrule
\end{tabular}
\end{table}

Table~\ref{tab:e3-remote-context-v2} reports quantized-input atom MAE and the full-versus-crop intervention. Full-test MAE is the mean across the three validation-selected checkpoints. Accuracy entries are medians with paired hierarchical 95\% intervals. The cropped condition stays at the $0.5$ local baseline up to small floating-point deviations, while the full condition is higher for every model except the depth-three negative control under the predeclared claim rule.
The intervals compare each model with its own crop, not one architecture against another.

Table~\ref{tab:e3-remote-context-v2-balanced} reports balanced accuracy for identifying which atom in a matched pair has the larger charge.
At each matching radius, we average the accuracy on pairs where the left atom has the larger charge and the accuracy on pairs where the right atom does.
Always choosing the same side therefore scores $0.5$, even if that side has the larger charge more often.
We count a result as evidence that remote context helps only if the full-molecule median balanced accuracy exceeds $0.5$ and the entire 95\% interval for the gain in ordinary accuracy over cropped inputs is above zero.
The cropped balanced accuracies are shown for comparison only.

Table~\ref{tab:e3-remote-context-v2-radius} reports the intervention by matching radius, pooled across the three seeds while preserving full/crop pairing. A positive average gain need not hold at every radius: SONAR's full-molecule accuracy is 50.6\% at radius four. The depth-three GatedGCN can use context beyond a radius-two crop, but at radii three and four both full and cropped paired predictions agree within $1.2\times10^{-7}$ and tie-aware accuracy is exactly $0.5$ for every seed, as expected from its three-hop support.

\section{Theoretical Results and Proofs}
\label{app:proofs}

\subsection{Proof of Proposition~\ref{prop:support}}

\begin{proof}
For every node $v$, let $\mathcal N(v)$ be its neighbor set and let $\mathcal D_t(v)$ be the set of input nodes on which its representation can depend after at most $t$ sequential one-hop aggregation operations. Initially, $\mathcal D_0(v)=\{v\}$.
A node-wise operation does not change this set. A one-hop aggregation at $v$ can use only the current representations of $v$ and its neighbors, hence
\[
\mathcal D_{t+1}(v)
\subseteq
\bigcup_{w\in\mathcal N(v)\cup\{v\}}
\mathcal D_t(w).
\]
If every node in $\mathcal D_t(w)$ is at distance at most $t$ from $w$, then every such node is at distance at most $t+1$ from $v$. Induction therefore gives $\mathcal D_T(v)\subseteq\{u:d(u,v)\leq T\}$.
Residual connections, pointwise nonlinearities, channel mixing, and parallel branches take unions or transformations of existing dependency sets and do not enlarge their radius. Parallel branches increase the radius only when their outputs are subsequently composed through additional cross-node operations.
\end{proof}

\subsection{Proof of Theorem~\ref{thm:truncation}}

\begin{proof}
Because $\Phi_{\mat{X}}$ is a contraction on a finite-dimensional normed space, the Banach fixed-point theorem gives a unique fixed point $\mat{H}^\star=\Phi_{\mat{X}}(\mat{H}^\star)$.
Moreover,
\[
\norm{\mat{H}_{t+1}-\mat{H}^\star}
=
\norm{\Phi_{\mat{X}}(\mat{H}_t)-\Phi_{\mat{X}}(\mat{H}^\star)}
\leq
\theta\norm{\mat{H}_t-\mat{H}^\star}.
\]
Applying the inequality recursively yields $\norm{\mat{H}_T-\mat{H}^\star}\leq \theta^T\norm{\mat{H}_0-\mat{H}^\star}$.

In the affine case, contractivity with factor $\theta$ implies $\norm{\mat{M}}\leq\theta<1$. Therefore, $\Id-\mat{M}$ is invertible and $\mat{R}=(\Id-\mat{M})^{-1}\mat{B}=\sum_{t=0}^{\infty}\mat{M}^t\mat{B}$.
Unrolling $T$ updates from $\mat{H}_0=\mat{C}\mat{X}$ gives $\mat{P}_T=\mat{M}^T\mat{C}+\sum_{t=0}^{T-1}\mat{M}^t\mat{B}$.
Therefore,
\begin{align*}
\mat{R}-\mat{P}_T
&=
\sum_{t=T}^{\infty}\mat{M}^t\mat{B}-\mat{M}^T\mat{C}\\
&=
\mat{M}^T\left(
\sum_{s=0}^{\infty}\mat{M}^s\mat{B}-\mat{C}
\right)\\
&=
\mat{M}^T(\mat{R}-\mat{C}),
\end{align*}
which proves Equation~\eqref{eq:finite-gap-identity}. Taking operator norms gives $\norm{\mat{R}-\mat{P}_T}\leq\norm{\mat{M}^T}\norm{\mat{R}-\mat{C}}$, and submultiplicativity yields $\norm{\mat{M}^T}\leq\theta^T$.
\end{proof}

The contraction assumption is sufficient but not necessary for a linear fixed point. If only $\rho(\mat{M})<1$ is known, where $\rho(\mat{M})$ is the spectral radius of $\mat{M}$, Equation~\eqref{eq:finite-gap-identity} remains valid, but $\norm{\mat{M}^T}$ can increase transiently when $\mat{M}$ is non-normal before eventually converging to zero.

\subsection{Proof of Corollary~\ref{cor:restart}}
\label{app:restart}

\begin{proof}
Let $0<\beta=|a|<1$. Unrolling the recurrence in Corollary~\ref{cor:restart} from $\mat{H}_0=\mat{X}$ gives $\mat{H}_T=\mat{P}_{a,T}\mat{X}$, where
\[
\mat{P}_{a,T}
=
(a\mat{S})^T
+
(1-\beta)\sum_{t=0}^{T-1}(a\mat{S})^t.
\]
Since $\norm{a\mat{S}}_2<1$, $\mat{R}_a=(1-\beta)\sum_{t=0}^{\infty}(a\mat{S})^t$.
Subtracting and using the geometric series gives
\begin{align*}
\mat{R}_a-\mat{P}_{a,T}
&=
(1-\beta)(a\mat{S})^T(\Id-a\mat{S})^{-1}-(a\mat{S})^T\\
&=
(a\mat{S})^T(a\mat{S}-\beta\Id)(\Id-a\mat{S})^{-1}.
\end{align*}

Because $\mat{S}$ is symmetric, let $\operatorname{spec}(\mat{S})$ denote its spectrum and let $\lambda$ denote an eigenvalue in that spectrum. The spectral theorem gives
\[
\norm{\mat{R}_a-\mat{P}_{a,T}}_2
=
\max_{\lambda\in\operatorname{spec}(\mat{S})}
|a\lambda|^T
\frac{|a\lambda-\beta|}
{|1-a\lambda|}.
\]
Write $a=s\beta$, with $s\in\{-1,1\}$, and set $x=s\lambda\in[-1,1]$. The scalar response becomes
\[
\beta^{T+1}|x|^T\frac{1-x}{1-\beta x}
\leq
\beta^{T+1}\frac{1-x}{1-\beta x},
\]
because $|x|\leq1$. The function $x\mapsto(1-x)/(1-\beta x)$ is non-increasing on $[-1,1]$, and therefore
\[
\norm{\mat{R}_a-\mat{P}_{a,T}}_2
\leq
\frac{2\beta^{T+1}}{1+\beta}.
\]
If $-\operatorname{sign}(a)\in\operatorname{spec}(\mat{S})$, where $\operatorname{sign}(a)\in\{-1,1\}$ denotes the sign of the nonzero coefficient $a$, then $x=-1$ is available, $|x|^T=1$, and the bound in Equation~\eqref{eq:truncation} is attained.
\end{proof}

Solving $2\beta^{T+1}/(1+\beta)\leq\varepsilon$ gives the iteration threshold
\begin{equation}
T
\geq
\frac{
\log\!\big(\varepsilon(1+\beta)/2\big)
}{
\log \beta
}
-1,
\label{eq:restart-iterations}
\end{equation}
rounded up to the next integer.

\subsection{Transport assumptions and a general range theorem}
\label{app:transport-assumptions}

Sections~\ref{app:transport-assumptions}--\ref{app:modeproof} provide the formal development behind the theoretical claims summarized in Section~\ref{sec:theory}. The argument proceeds in two parts. Sections~\ref{app:transport-assumptions}--\ref{app:cheb-acceleration} study how a finite update budget changes the spatial range of diffusion: we first state a general transport theorem, specialize it to restart diffusion, sharpen it on the path, and then show how a signed Chebyshev polynomial avoids the diffusive slowdown. Sections~\ref{app:precision} and~\ref{app:modeproof} turn to representation, explaining how conditioning, cancellation, and the number of stable real modes constrain accurate realization.

We begin with the general question suggested by the random-walk discussion in Section~\ref{sec:theory}: if an ideal positive operator averages walks of a characteristic length, how much of its range can a finite implementation retain? The answer should depend both on the available walk depth and on how quickly a walk spreads on the graph. The transport assumption in Equation~\eqref{eq:transport-assumption} and the characteristic-scale assumption in Equation~\eqref{eq:depth-scale} separate these two ingredients. Appendix~\ref{app:transport-assumptions} also specifies the pre-mixing regime needed on finite graphs, where no distance scale can grow indefinitely. Theorem~\ref{thm:transport}, introduced here, then converts temporal depth into spatial range.

Let $\mat{P}$ be a nonnegative row-stochastic operator with $P_{vu}=0$ whenever $d(u,v)>1$, let $d_w\geq1$ be the \emph{walk dimension}, and let $Z_t$ be its random walk started at $v$. Write
\[
F_t^v(r)=\Pr_v\!\left(d(v,Z_t)\leq r\right),
\qquad
Q_q(t;v)=\min\{r:F_t^v(r)\geq q\}.
\]
To include growing finite graphs without claiming that transport continues after saturation, associate each graph with a transport horizon $h_{\set{G}}\in(0,\infty]$. The \emph{quantile transport assumption} means that, for every fixed $q\in(0,1)$, there are constants $0<c_q\leq C_q<\infty$ and $t_q<\infty$, uniform over the nodes and graph family under consideration, such that
\begin{equation}
c_q t^{1/d_w}
\leq
Q_q(t;v)
\leq
C_q t^{1/d_w}
\qquad
(t_q\leq t\leq h_{\set{G}}).
\label{eq:transport-assumption}
\end{equation}
On infinite graphs take $h_{\set{G}}=\infty$. On finite graph families, Theorem~\ref{thm:transport} is an asymptotic statement along sequences with $m/h_{\set{G}}\to0$; its arbitrary $T$-step mixture claim likewise requires $T/h_{\set{G}}\to0$. These pre-mixing conditions keep every fixed-constant neighborhood of the relevant depth below the transport-saturation scale. Two-sided Gaussian heat-kernel estimates provide standard settings satisfying Equation~\eqref{eq:transport-assumption} with $d_w=2$~\citep{delmotte1999harnack}, while sub-Gaussian estimates provide standard anomalous-diffusion settings with $d_w>2$~\citep{barlow2005subgaussian}.

A nonnegative integer depth $N_m$, independent of the walk, has \emph{characteristic scale} $m$ if, for every $\eta\in(0,1)$, there are $0<a_\eta\leq b_\eta<\infty$ such that
\begin{equation}
\Pr\!\left(a_\eta m\leq N_m\leq b_\eta m\right)
\geq
1-\eta
\label{eq:depth-scale}
\end{equation}
for all sufficiently large $m$. Thus $N_m/m$ is both tight and bounded away from zero in probability. Constants hidden by $O(\cdot)$ and $\Theta(\cdot)$ in Theorem~\ref{thm:transport} and its proof may depend on the fixed quantile but are uniform over the admissible nodes and graph family.

\paragraph{Transfer to symmetric normalization.}
Theorem~\ref{thm:transport} is stated for the random-walk normalization. Its range scaling also transfers to the symmetric normalization used in the experiments under a bounded degree-ratio condition. Indeed, for an undirected graph with $\mat{P}=\mat{D}^{-1}\mat{A}$ and $\mat{S}=\mat{D}^{-1/2}\mat{A}\mat{D}^{-1/2}$,
\[
\mat{S}^t
=
\mat{D}^{1/2}\mat{P}^t\mat{D}^{-1/2}.
\]
For normalized weights $w_t\geq0$, let $\mu_v(u)=\sum_t w_t(\mat P^t)_{vu}$ be the random-walk endpoint law. After normalizing the corresponding positive influence row of $\sum_t w_t\mat S^t$, its distribution is
\begin{equation}
\widetilde\mu_v(u)
=
\frac{\mu_v(u)d_u^{-1/2}}
{\sum_x\mu_v(x)d_x^{-1/2}},
\label{eq:symmetric-row-profile}
\end{equation}
because the common factor $\sqrt{d_v}$ cancels. If $d_{\max}/d_{\min}\leq C$ uniformly over the graph family, then $\frac{d\widetilde\mu_v}{d\mu_v}\in[C^{-1/2},C^{1/2}]$. Writing $R_q(\nu)$ for the radius of the smallest distance ball around $v$ containing probability $q$ under a node law $\nu$, consequently
\begin{equation}
R_{q/\sqrt C}(\mu_v)
\leq
R_q(\widetilde\mu_v)
\leq
R_{1-(1-q)/\sqrt C}(\mu_v).
\label{eq:symmetric-quantile-transfer}
\end{equation}
Both comparison levels are fixed in $(0,1)$, so the walk dimension and range scaling are unchanged. Paths and ladders satisfy the degree-ratio condition. The transfer in Equation~\eqref{eq:symmetric-quantile-transfer} does not cover negative poles or a signed combination of branches. The exact total-variation bound in Equation~\eqref{eq:mixture-tv} is stated for the row-stochastic operator $\mat P$; Equation~\eqref{eq:symmetric-quantile-transfer} transfers its range scaling but does not by itself give the same TV constant after symmetric reweighting.

For an independent depth $N_m$ of characteristic scale $m$, define
\[
\mat K_m=\mathbb E[\mat P^{N_m}],
\qquad
\mat K_{m,T}=\mathbb E[\mat P^{N_m\wedge T}].
\]
Write $p_m^{\,v}=p_{\mat K_m}^{\,v}$ and $p_{m,T}^{\,v}=p_{\mat K_{m,T}}^{\,v}$ for their distance profiles. All two-sided asymptotic statements in Theorem~\ref{thm:transport} take $m\to\infty$. For the capped operator, let $T=T_m$ and require
\begin{equation}
s_m:=\min\{m,T_m\}\longrightarrow\infty.
\label{eq:capped-scale-divergence}
\end{equation}
On finite graph families also require $m/h_{\set{G}}\to0$. The arbitrary $T$-step mixture statement is taken as $T\to\infty$, with $T/h_{\set{G}}\to0$ on finite graph families.

\begin{theorem}[Depth limits the range of positive diffusion]
\label{thm:transport}
Under the quantile transport assumption in Equation~\eqref{eq:transport-assumption} and the characteristic-scale assumption in Equation~\eqref{eq:depth-scale}, for every fixed $q\in(0,1)$, as $m\to\infty$ and, for the capped realization, along sequences satisfying Equation~\eqref{eq:capped-scale-divergence},
\begin{equation}
R_q(\mat K_m;v)=\Theta\!\left(m^{1/d_w}\right),
\qquad
R_q(\mat K_{m,T};v)
=\Theta\!\left(\min\{m,T\}^{1/d_w}\right).
\label{eq:transport-range-laws}
\end{equation}
Moreover, as $T\to\infty$, any normalized nonnegative mixture $\mat L_T=\sum_{t=0}^{T}w_t\mat P^t$ satisfies $R_q(\mat L_T;v)=O(T^{1/d_w})$. Hence, if $R_q(\mat K_{m,T};v)\geq\kappa R_q(\mat K_m;v)$ for any fixed $\kappa>0$, then $T=\Omega(m)$; choosing $T=\Theta(m)$ makes the capped range a nonvanishing fraction of the ideal range. In addition,
\begin{equation}
\norm{p_{m,T}^{\,v}-p_m^{\,v}}_{\mathrm{TV}}
\leq
\Pr(N_m>T).
\label{eq:mixture-tv}
\end{equation}
Consequently, whenever $\Pr(N_m>T)\leq\varepsilon<\min\{q,1-q\}$,
\begin{equation}
R_{q-\varepsilon}(\mat K_m;v)
\leq
R_q(\mat K_{m,T};v)
\leq
R_{q+\varepsilon}(\mat K_m;v).
\label{eq:mixture-quantile-sandwich}
\end{equation}
All statements are uniform over the admissible nodes and graph family in the divergence regime of Equation~\eqref{eq:capped-scale-divergence} and the finite-graph pre-mixing regime specified in Appendix~\ref{app:transport-assumptions}.
\end{theorem}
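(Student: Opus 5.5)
The whole argument rests on one representation. Since $\mat K_m=\mathbb E[\mat P^{N_m}]$ with $N_m$ independent of the walk, row $v$ of $\mat K_m$ is the law of $Z_{N_m}$ for the walk started at $v$. Hence the cumulative distance distribution is the mixture
\[
F^v_{\mat K_m}(r)=\mathbb E\big[F^v_{N_m}(r)\big],
\]
and likewise $F^v_{\mat K_{m,T}}(r)=\mathbb E\big[F^v_{N_m\wedge T}(r)\big]$. Everything then reduces to transferring the fixed-time quantile bounds in Equation~\eqref{eq:transport-assumption} through a random time that concentrates at scale $m$ (respectively $s_m=\min\{m,T\}$).

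\textbf{Step 1: the TV bound and the quantile sandwich.} These come first because they are exact. The random variables $N_m$ and $N_m\wedge T$ coincide on the event $\{N_m\le T\}$, so coupling the two walks to the same path gives
\[
\norm{\mathrm{law}(Z_{N_m})-\mathrm{law}(Z_{N_m\wedge T})}_{\mathrm{TV}}\le\Pr(N_m>T).
\]
Pushing forward by $u\mapsto d(u,v)$ does not increase total variation, which gives Equation~\eqref{eq:mixture-tv}. Because the TV distance bounds the difference of the CDFs at every $r$, the quantiles of one profile at level $q$ are sandwiched between those of the other at levels $q\mp\varepsilon$, which is Equation~\eqref{eq:mixture-quantile-sandwich}. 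This requires $\varepsilon<\min\{q,1-q\}$ so that both levels stay in $(0,1)$.

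\textbf{Step 2: two-sided range laws.} Fix $q$ and write $s$ for $m$ or $s_m$. The random depth $N_m\wedge T$ also has characteristic scale $s$: since $\Pr(a_\eta m\le N_m\le b_\eta m)\ge1-\eta$, the depth $N_m\wedge T$ lies in $[\min(a_\eta,1)s,\ \max(b_\eta,1)s]$ with probability at least $1-\eta$. I will use the fact that $r\mapsto F_t(r)$ is a CDF, together with Equation~\eqref{eq:transport-assumption} applied at auxiliary levels $q'$.

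For the upper bound, pick $\eta=(1-q)/2$ and $q'=1-(1-q)/2$. On the good event $t\le bs$, I need $F_t(r)\ge q'$ whenever $r\ge C_{q'}(bs)^{1/d_w}$.

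\emph{Main obstacle.} Equation~\eqref{eq:transport-assumption} controls $Q_{q'}(t)$ at each fixed $t$, but not monotonically in $t$. For $t$ in the range $[as,bs]$ the assumption already gives $Q_{q'}(t)\le C_{q'}t^{1/d_w}\le C_{q'}(bs)^{1/d_w}$, which suffices. The difficulty is small times $t<as$, which also occur with positive probability. My first attempt is to avoid them entirely by putting them inside the bad event: tightness makes both tails small. For times below $t_{q'}$ the walk radius is at most $t_{q'}$, since $\mat P$ has one-hop support. Times in $[t_{q'},as]$ are handled by the assumption directly, because $C t^{1/d_w}\le C(as)^{1/d_w}$.

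The result is $F(r)\ge(1-\eta)q'\ge q$ after adjusting constants, so $R_q=O(s^{1/d_w})$. For the lower bound, on the good event $t\ge as$ I have $F_t(r)<q''$ for $r<c_{q''}(as)^{1/d_w}$ with $q''$ close to $1$. This gives $F(r)\le\eta+q''$, which is below $q$ when both $\eta$ and $1-q''$ are chosen small relative to $q$; hence $R_q=\Omega(s^{1/d_w})$. The finite-graph pre-mixing condition ($m/h_{\set G}\to0$, $T/h_{\set G}\to0$) guarantees that $b s\le h_{\set G}$ eventually, so the assumption applies throughout.

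\textbf{Step 3: consequences.} For a general mixture $\mat L_T$, every time $t\le T$ has $Q_{q'}(t)\le\max(t_{q'},C_{q'}T^{1/d_w})$, so averaging the CDFs over $t$ gives $R_q(\mat L_T)=O(T^{1/d_w})$. Finally, combining the two range laws, the ratio of the capped range to the ideal range is $\Theta\big((s_m/m)^{1/d_w}\big)$. This ratio stays bounded below by $\kappa$ only if $T=\Omega(m)$, and choosing $T=\Theta(m)$ makes it bounded below. Uniformity over nodes and the graph family follows because every constant used is taken from the uniform constants in Equations~\eqref{eq:transport-assumption} and~\eqref{eq:depth-scale}.
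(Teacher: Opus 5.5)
Your route is the paper's: write each row as the law of $Z_Y$ for the random depth $Y\in\{N_m,N_m\wedge T\}$, transfer the fixed-time quantile bounds through an event on which $Y$ is of order $s$, and obtain Equation~\eqref{eq:mixture-tv} by coupling on $\{N_m\le T\}$. Step~1 and the upper half of Step~2 are fine. Your levels $\eta=(1-q)/2$, $q'=(1+q)/2$ give $(1-\eta)q'=((1+q)/2)^2>q$, and handling times below $as$ through the one-hop property is a valid alternative to putting them in the bad event.

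The lower half of Step~2, however, fails as written. On the good event you get $F_t(r)<q''$, hence $F(r)\le(1-\eta)q''+\eta\le q''+\eta$. This is below $q$ only if $q''<q$. With $q''$ close to $1$, the bound $q''+\eta$ is close to (or exceeds) $1$ and says nothing, so $R_q=\Omega(s^{1/d_w})$ does not follow. Your claim that it is ``below $q$ when $\eta$ and $1-q''$ are small'' has the inequality backwards. You need the paper's choice: a level $q_-<q$ and $\eta>0$ with $(1-\eta)q_-+\eta<q$ (for instance $q_-=q/2$, $\eta=q/4$). Then apply the lower transport bound at level $q_-$, so that $F_t(r)<q_-$ for every $r<c_{q_-}(as)^{1/d_w}$ and every $t$ in the good event.

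There are two smaller points.

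First, for the lower bound the good event must be two-sided, $\{as\le Y\le bs\}$, not $\{Y\ge as\}$. On finite graph families the lower transport bound is only available for $t\le h_{\mathcal{G}}$. It is the upper cut $bs\le bm\le h_{\mathcal{G}}$ (eventually, by $m/h_{\mathcal{G}}\to0$) that makes it apply. The characteristic-scale assumption, with your constants $\min\{a,1\}$ and $\max\{b,1\}$ for the capped depth, supplies exactly this event with probability at least $1-\eta$.

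Second, your derivation of $T=\Omega(m)$ from the ratio $\Theta((s_m/m)^{1/d_w})$ presupposes $s_m\to\infty$. To rule out sequences along which $T$ stays bounded, use your own mixture bound instead:
\[
R_q(\mat{K}_{m,T};v)\le\max\{t_{q'},\lceil C_{q'}T^{1/d_w}\rceil\}.
\]
This also holds for bounded $T$, because $\mat{K}_{m,T}$ is a mixture of $\mat{P}^t$ with $t\le T$. Comparing it against $R_q(\mat{K}_m;v)=\Omega(m^{1/d_w})$ gives $T=\Omega(m)$, which is how the paper argues.
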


\begin{proof}[Proof of Theorem~\ref{thm:transport}]
Set $\gamma=1/d_w$. We first note that capping preserves a characteristic depth scale. On the event in Equation~\eqref{eq:depth-scale}, with $s=\min\{m,T\}$,
\[
\min\{a_\eta,1\}s
\leq
N_m\wedge T
\leq
\max\{b_\eta,1\}s.
\]
Thus the ideal depth has scale $s=m$, and the capped depth has scale $s=\min\{m,T\}$. By Equation~\eqref{eq:capped-scale-divergence}, these scales diverge, so their fixed positive multiples eventually exceed every threshold $t_q$ used in the quantile transport assumption.

The same quantile argument handles both cases. Fix $q\in(0,1)$, and choose $q_-<q<q_+$ and $\eta>0$ so that
\[
(1-\eta)q_-+\eta<q,
\qquad
(1-\eta)q_+>q.
\]
Let $Y_s$ denote either $N_m$ or $N_m\wedge T$. With probability at least $1-\eta$, it lies in $[\underline a s,\overline b s]$ for positive constants $\underline a$ and $\overline b$. For every integer
$r<c_{q_-}(\underline a s)^\gamma$, the lower bound in Equation~\eqref{eq:transport-assumption} gives $F_{Y_s}^v(r)<q_-$ on this event. Consequently, the mixture distribution satisfies
\[
\Pr_v\!\left(d(v,Z_{Y_s})\leq r\right)
\leq
(1-\eta)q_-+\eta
<q.
\]
This gives $R_q(\mathbb E[\mat P^{Y_s}];v)=\Omega(s^\gamma)$. Conversely, for
$r=\lceil C_{q_+}(\overline b s)^\gamma\rceil$, the upper transport bound gives $F_{Y_s}^v(r)\geq q_+$ throughout the same event, and hence
\[
\Pr_v\!\left(d(v,Z_{Y_s})\leq r\right)
\geq
(1-\eta)q_+
>q.
\]
This gives the matching $O(s^\gamma)$ bound and proves both range laws in Equation~\eqref{eq:transport-range-laws}. In particular,
\[
\frac{R_q(\mat K_{m,T};v)}{R_q(\mat K_m;v)}
=
\Theta\!\left(
\min\left\{1,\left(\frac{T}{m}\right)^{1/d_w}\right\}
\right).
\]

For an arbitrary normalized nonnegative mixture $\mat L_T=\sum_{t=0}^{T}w_t\mat P^t$, choose $q_+\in(q,1)$. Equation~\eqref{eq:transport-assumption} supplies one radius $O(T^\gamma)$ containing at least $q_+$ of every fixed-time endpoint law for $t_q\leq t\leq T$; the one-hop property handles the finitely many smaller times. Averaging gives $R_q(\mat L_T;v)=O(T^\gamma)$. If this range is at least $\kappa R_q(\mat K_m;v)=\Omega(m^\gamma)$ for any fixed $\kappa>0$, then $T=\Omega(m)$. Taking $T$ to be any fixed positive multiple of $m$ makes both capped and ideal ranges $\Theta(m^\gamma)$, so their ratio stays bounded away from zero. This proves the nonvanishing-fraction sufficiency in Theorem~\ref{thm:transport} without claiming every prescribed same-$q$ fraction.

Finally, couple $Z_{N_m}$ and $Z_{N_m\wedge T}$ using the same depth and the same walk trajectory. Their endpoints agree whenever $N_m\leq T$, so the coupling inequality, followed by contraction of total variation under the distance map, gives Equation~\eqref{eq:mixture-tv}. If the right-hand side of Equation~\eqref{eq:mixture-tv} is at most $\varepsilon<\min\{q,1-q\}$, their cumulative distance probabilities differ by at most $\varepsilon$ at every radius, which gives Equation~\eqref{eq:mixture-quantile-sandwich}.
For each fixed $\varepsilon$, Equation~\eqref{eq:depth-scale} supplies an $\varepsilon$-dependent constant $b_\varepsilon$ such that $T=b_\varepsilon m$ makes the tail probability at most $\varepsilon$. Thus $\Theta(m)$ depth is sufficient for any fixed total-variation tolerance, although its constant depends on that tolerance and the depth family.
\end{proof}

\subsection{Restart diffusion specialization}
\label{app:restart-specialization}

Theorem~\ref{thm:transport} applies to any positive mixture whose walk lengths have a characteristic scale. We now connect that abstraction to the restart recurrence used in the paper. At equilibrium, restart diffusion is a geometric mixture of walks: $\beta$ controls the typical walk length, while unrolling the recurrence for $T$ updates caps every sampled length at $T$. Equation~\eqref{eq:capped-restart-mixture} formalizes this connection and yields the square-root range law summarized in Section~\ref{sec:theory}; Equation~\eqref{eq:restart-profile-tv} then gives a quantitative error guarantee for the complete distance profile.

\begin{corollary}[Restart range on diffusive graphs]
\label{cor:restart-range}
Assume the quantile transport condition in Equation~\eqref{eq:transport-assumption} with $d_w=2$ and the finite-graph pre-mixing regime specified in Appendix~\ref{app:transport-assumptions}, and let
\[
\mat R_\beta=(1-\beta)(\Id-\beta\mat P)^{-1},
\qquad
\mat P_{\beta,T}
=(1-\beta)\sum_{t=0}^{T-1}\beta^t\mat P^t+\beta^T\mat P^T.
\]
For every fixed $q\in(0,1)$, as $\beta\uparrow1$, along sequences for which $\min\{T,(1-\beta)^{-1}\}\to\infty$ and the pre-mixing conditions in Appendix~\ref{app:transport-assumptions} hold,
\begin{equation}
R_q(\mat R_\beta;v)=\Theta\!\left((1-\beta)^{-1/2}\right),
\qquad
R_q(\mat P_{\beta,T};v)
=\Theta\!\left(\sqrt{\min\{T,(1-\beta)^{-1}\}}\right).
\label{eq:restart-range-laws}
\end{equation}
Thus retaining a nonvanishing fraction of the equilibrium range has update cost $\Theta(R_q(\mat R_\beta;v)^2)$. The profile total-variation error is at most $\beta^{T+1}$, so error at most $\varepsilon$ is guaranteed by the sufficient cost $O(R_q(\mat R_\beta;v)^2\log(1/\varepsilon))$.
\end{corollary}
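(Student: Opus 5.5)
The plan is to realize both operators as depth mixtures $\mathbb E[\mat P^{N}]$ and then apply Theorem~\ref{thm:transport} with $d_w=2$.

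\textbf{Step 1: the ideal operator as a geometric depth mixture.} Let $N_\beta$ be geometric on $\{0,1,2,\dots\}$ with $\Pr(N_\beta=t)=(1-\beta)\beta^t$, independent of the walk. Expanding the Neumann series gives $\mat R_\beta=\sum_t(1-\beta)\beta^t\mat P^t=\mathbb E[\mat P^{N_\beta}]$.

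\textbf{Step 2: the finite operator as the capped mixture.} Since $\Pr(N_\beta\geq T)=\beta^T$, the capped depth $N_\beta\wedge T$ has law $(1-\beta)\beta^t$ for $t<T$ and mass $\beta^T$ at $T$. Hence
\begin{equation}
\mat P_{\beta,T}=\mathbb E\big[\mat P^{N_\beta\wedge T}\big].
\label{eq:capped-restart-mixture}
\end{equation}
This identifies $\mat R_\beta$ and $\mat P_{\beta,T}$ with $\mat K_m$ and $\mat K_{m,T}$ of Theorem~\ref{thm:transport}, with $m=(1-\beta)^{-1}$.

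\textbf{Step 3: the characteristic-scale assumption.} This is the step I expect to need care. We must verify Equation~\eqref{eq:depth-scale} for $N_\beta$ with $m=(1-\beta)^{-1}$ as $\beta\uparrow1$. The key fact is that $(1-\beta)N_\beta$ converges in law to $\mathrm{Exp}(1)$. Concretely,
\[
\Pr\big(N_\beta\geq xm\big)=\beta^{\lceil x m\rceil}\to e^{-x},
\]
uniformly for $x$ in compacts, using $m\log\beta\to-1$. So for each $\eta$ we can choose $a_\eta$ small and $b_\eta$ large such that $e^{-a_\eta}-e^{-b_\eta}>1-\eta/2$. The interval $[a_\eta m,b_\eta m]$ then has probability at least $1-\eta$ for $\beta$ close to $1$.

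Because $m$ is indexed by $\beta$ rather than an integer sequence, "$m\to\infty$" should be read as $\beta\uparrow1$. The proof of Theorem~\ref{thm:transport} only uses the event bound, so it applies verbatim.

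\textbf{Step 4: the range laws.} The pre-mixing hypotheses are assumed, and $\min\{T,m\}\to\infty$ is exactly Equation~\eqref{eq:capped-scale-divergence}. Theorem~\ref{thm:transport} then yields Equation~\eqref{eq:restart-range-laws} with exponent $1/2$. Its consequence $T=\Omega(m)$ combines with $R_q(\mat R_\beta;v)=\Theta(m^{1/2})$ to give $m=\Theta(R_q^2)$, which is the $\Theta(R_q(\mat R_\beta;v)^2)$ update cost.

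\textbf{Step 5: the total-variation bound.} Equation~\eqref{eq:mixture-tv} gives
\[
\norm{p_{m,T}^{\,v}-p_m^{\,v}}_{\mathrm{TV}}\leq\Pr(N_\beta>T)=\beta^{T+1}.
\label{eq:restart-profile-tv}
\]
To make this at most $\varepsilon$, it suffices to take $T+1\geq\log(1/\varepsilon)/\log(1/\beta)$. Since $\log(1/\beta)\geq1-\beta$, the sufficient choice is $T=\lceil m\log(1/\varepsilon)\rceil$. Substituting $m=\Theta(R_q^2)$ gives the stated sufficient cost $O(R_q(\mat R_\beta;v)^2\log(1/\varepsilon))$.
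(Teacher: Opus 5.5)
Your proposal is correct and follows the paper's proof essentially step for step. You write $\mat R_\beta$ as the geometric depth mixture $\mathbb E[\mat P^{N_\beta}]$ and $\mat P_{\beta,T}$ as its cap at $T$, verify the characteristic scale $m=(1-\beta)^{-1}$ from the exponential limit of $(1-\beta)N_\beta$, apply Theorem~\ref{thm:transport} with $d_w=2$, and read off the total-variation bound $\Pr(N_\beta>T)=\beta^{T+1}$. Your explicit threshold $T=\lceil m\log(1/\varepsilon)\rceil$, obtained from $\log(1/\beta)\geq 1-\beta$, is a slightly cleaner non-asymptotic version of the paper's $O(\cdot)$ estimate. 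For the two-sided $\Theta(R_q^2)$ cost, you should also state the sufficiency half explicitly; it follows by taking $T=m$ in the capped range law.
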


\begin{proof}
For $\Pr(N_\beta=t)=(1-\beta)\beta^t$,
\[
\mathbb E[\mat P^{N_\beta}]
=
(1-\beta)\sum_{t=0}^{\infty}\beta^t\mat P^t
=
(1-\beta)(\Id-\beta\mat P)^{-1}
=
\mat R_\beta.
\]
Unrolling $\mat H_0=\mat X$ and $\mat H_{t+1}=\beta\mat P\mat H_t+(1-\beta)\mat X$ for $T$ updates gives
\begin{equation}
\mat P_{\beta,T}
=
(1-\beta)\sum_{t=0}^{T-1}\beta^t\mat P^t
+
\beta^T\mat P^T
=
\mathbb E[\mat P^{N_\beta\wedge T}].
\label{eq:capped-restart-mixture}
\end{equation}
The final coefficient is $\Pr(N_\beta\geq T)=\beta^T$, which explains why initializing at $\mat H_0=\mat X$ produces a cap rather than a simple truncation.

Let $m_\beta=(1-\beta)^{-1}$. For every fixed $x>0$,
$\Pr(N_\beta/m_\beta>x)=\beta^{\lfloor xm_\beta\rfloor+1}\to e^{-x}$ as $\beta\uparrow1$. Hence $N_\beta$ has characteristic scale $m_\beta$, and Theorem~\ref{thm:transport} gives, for general $d_w$,
\[
R_q(\mat R_\beta;v)
=
\Theta\!\left((1-\beta)^{-1/d_w}\right),
\qquad
R_q(\mat P_{\beta,T};v)
=
\Theta\!\left(\min\{T,(1-\beta)^{-1}\}^{1/d_w}\right).
\]
Setting $d_w=2$ proves Equation~\eqref{eq:restart-range-laws} in Corollary~\ref{cor:restart-range}.

Because $\Pr(N_\beta>T)=\beta^{T+1}$, Equation~\eqref{eq:mixture-tv} becomes
\begin{equation}
\norm{p_{\beta,T}^{\,v}-p_\beta^{\,v}}_{\mathrm{TV}}
\leq
\beta^{T+1}.
\label{eq:restart-profile-tv}
\end{equation}
Thus total-variation error at most $\varepsilon$ is guaranteed by the integer threshold
\begin{equation}
T
\geq
\max\left\{0,
\left\lceil\frac{\log(1/\varepsilon)}{-\log\beta}\right\rceil-1
\right\}
=
O\!\left(\frac{\log(1/\varepsilon)}{1-\beta}\right).
\label{eq:restart-profile-threshold}
\end{equation}
Since $R_q(\mat R_\beta;v)^2=\Theta((1-\beta)^{-1})$ in the diffusive regime, Equation~\eqref{eq:restart-profile-threshold} gives the sufficient profile-accuracy cost in Corollary~\ref{cor:restart-range}. The quantile sandwich in Equation~\eqref{eq:mixture-quantile-sandwich} applies whenever $\beta^{T+1}<\min\{q,1-q\}$.
\end{proof}

\subsection{Sharp path and operator-norm refinement}
\label{app:path-restart}

Corollary~\ref{cor:restart-range}, particularly Equation~\eqref{eq:restart-range-laws}, gives the correct scaling for broad diffusive graph families but suppresses constants and does not claim an exact finite-$T$ lower bound. On the bi-infinite path, both the equilibrium Green's function and the finite-walk tails are explicit enough to sharpen the comparison. Proposition~\ref{prop:path-restart} shows directly that the equilibrium has an exponentially decaying spatial tail, whereas a $T$-step realization concentrates on the $\sqrt{T}$ scale. It also distinguishes two notions of fidelity used in the paper: agreement of the normalized distance profile and worst-case operator-norm error.

\begin{proposition}[Bi-infinite path refinement]
\label{prop:path-restart}
Let $\mat P=\mat A/2$ be simple random-walk propagation on the bi-infinite path, and define
\[
\rho(\beta)
=
\frac{1-\sqrt{1-\beta^2}}{\beta},
\qquad
\ell(\beta)
=
-\frac{1}{\log\rho(\beta)}.
\]
Fix $q\in(0,1)$.
For the equilibrium endpoint distance $D$, the exact tail is
\begin{equation}
\Pr_\beta(D>r)
=
\frac{2\rho(\beta)^{r+1}}{1+\rho(\beta)}
\qquad(r\in\{0,1,\ldots\}),
\label{eq:path-equilibrium-tail}
\end{equation}
and $R_q(\mat R_\beta;0)=\ell(\beta)\log(1/(1-q))+O(1)$ as $\beta\uparrow1$. For the capped recurrence and $T\geq1$,
\begin{equation}
\Pr_{\beta,T}(D>r)
\leq
2\exp\!\left(-\frac{r^2}{2T}\right),
\qquad
R_q(\mat P_{\beta,T};0)
\leq
\left\lceil\sqrt{2T\log\frac{2}{1-q}}\right\rceil.
\label{eq:path-capped-tail}
\end{equation}
Consequently, if $R_q(\mat P_{\beta,T};0)\geq\kappa R_q(\mat R_\beta;0)$ for fixed $\kappa>0$, then
\begin{equation}
T
\geq
\frac{(\kappa R_q(\mat R_\beta;0)-1)_+^2}
{2\log(2/(1-q))}
=
\Omega\!\left(R_q(\mat R_\beta;0)^2\right).
\label{eq:path-range-depth-lower}
\end{equation}
Moreover,
\begin{equation}
\norm{p_{\beta,T}-p_\beta}_{\mathrm{TV}}
\geq
\sup_{r\in\{0,1,\ldots\}}
\left[
\frac{2\rho(\beta)^{r+1}}{1+\rho(\beta)}
-
2\exp\!\left(-\frac{r^2}{2T}\right)
\right]_+.
\label{eq:path-tv-lower}
\end{equation}
Along any sequence with $\beta\uparrow1$ and $T/\ell(\beta)^2\to0$, the left-hand side of Equation~\eqref{eq:path-tv-lower} tends to one. Finally, the stationary restart operator error on this path is exactly $2\beta^{T+1}/(1+\beta)$. Thus, if $T_\varepsilon(\beta)$ is its smallest depth with operator error at most a fixed $\varepsilon\in(0,1/2)$,
\begin{equation}
T_\varepsilon(\beta)
=
2\ell(\beta)^2\log\frac1\varepsilon\,(1+o(1)).
\label{eq:path-operator-depth}
\end{equation}
\end{proposition}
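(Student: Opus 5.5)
The plan is to work with explicit formulas on $\mathbb Z$, taking $\mat P=\mat A/2$, which is both symmetric and row-stochastic. We treat each claim separately: the equilibrium tail (Equation~\eqref{eq:path-equilibrium-tail}) and the capped tail (Equation~\eqref{eq:path-capped-tail}); the depth and total-variation consequences derived from them; and the operator-norm statement, obtained from the proof of Corollary~\ref{cor:restart}.

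\emph{Equilibrium tail.} We would use the classical Green's function of simple random walk on $\mathbb Z$,
\[
\sum_{t\ge0}\beta^t(\mat P^t)_{0x}=\frac{\rho(\beta)^{|x|}}{\sqrt{1-\beta^2}}.
\]
It can be obtained from the generating function of return probabilities and first-passage times, or checked directly: $g(x)=c\,\rho^{|x|}$ solves $g-\tfrac{\beta}{2}(g(x-1)+g(x+1))=\delta_0$ because $\rho$ is the root in $(0,1)$ of $\beta\rho^2-2\rho+\beta=0$. Bounded solutions are unique since $\norm{\beta\mat P}<1$, and the $x=0$ equation fixes $c=1/\sqrt{1-\beta^2}$. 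Hence $(\mat R_\beta)_{0x}\propto\rho^{|x|}$, and the distance law is $\Pr_\beta(D=r)\propto\rho^r$ for $r\ge1$, with weight doubled away from $r=0$. Since the row sums to one, the normalising constant is $(1-\rho)/(1+\rho)$. Summing the geometric tail then gives exactly $2\rho^{r+1}/(1+\rho)$.

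The quantile is the smallest integer $r$ with $\rho^{r+1}\le(1-q)(1+\rho)/2$. This gives
\[
R_q(\mat R_\beta;0)=\ell\log\frac{1}{1-q}+\ell\log\frac{2}{1+\rho}+O(1).
\]
For the middle term we use $-\log\rho\sim\sqrt{2(1-\beta)}$ and $1-\rho\sim\sqrt{2(1-\beta)}$, so $\ell\log\frac{2}{1+\rho}\to\tfrac12$ and is absorbed in the $O(1)$.

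\emph{Capped tail and its consequences.} By Equation~\eqref{eq:capped-restart-mixture}, $\mat P_{\beta,T}=\mathbb E[\mat P^{N_\beta\wedge T}]$, a mixture of walk lengths $t\le T$. For each such $t$, Hoeffding's inequality for $\pm1$ steps gives $\Pr(|S_t|>r)\le 2e^{-r^2/(2t)}\le 2e^{-r^2/(2T)}$; the case $t=0$ is trivial. Averaging over $t$ yields the capped tail bound, and setting $2e^{-r^2/(2T)}\le1-q$ yields the quantile bound. The depth lower bound~\eqref{eq:path-range-depth-lower} follows by inserting $\kappa R_q(\mat R_\beta;0)\le\lceil\sqrt{2T\log(2/(1-q))}\rceil$ and solving for $T$.

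For the total-variation bound~\eqref{eq:path-tv-lower}, we use $\norm{p-p'}_{\mathrm{TV}}\ge|\Pr_p(D>r)-\Pr_{p'}(D>r)|$ for each $r$ and combine the two tail formulas. For the limit, choose $r=\lceil(\ell^2T)^{1/4}\rceil$, taking $r\to\infty$ if $T$ stays bounded. Then $r/\ell\to0$, so $\rho^{r+1}=e^{-(r+1)/\ell}\to1$ and the first term tends to $1$; and $r^2/T\to\infty$, so the second term vanishes.

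\emph{Operator norm.} The proof of Corollary~\ref{cor:restart} expresses $\norm{\mat R_\beta-\mat P_{\beta,T}}_2$ as a supremum over the spectrum of the self-adjoint $\mat P$. On the bi-infinite path this spectrum is $[-1,1]$, which contains $-1$. The supremum of the continuous scalar response is therefore attained there, and the error equals exactly $2\beta^{T+1}/(1+\beta)$. Solving $2\beta^{T+1}/(1+\beta)\le\varepsilon$ gives $T_\varepsilon=\log(1/\varepsilon)/(1-\beta)\,(1+o(1))$. Combined with $\ell^2\sim1/(2(1-\beta))$, this is Equation~\eqref{eq:path-operator-depth}.

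The main obstacle is the first step: establishing the exact Green's function with the correct constant, and then tracking the $\rho\to1$ asymptotics carefully enough that the bounded corrections (in the quantile and in $T_\varepsilon$) are genuinely $O(1)$ or $o(1)$. I would verify the constant by the normalisation check $\sum_x(\mat R_\beta)_{0x}=1$.
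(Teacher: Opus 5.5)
Your proposal is correct and follows essentially the same route as the paper: the explicit Green's function $\rho^{|x|}/\sqrt{1-\beta^2}$ with its geometric distance tail, Hoeffding's inequality conditional on the capped depth $N_\beta\wedge T$, tail-event domination of total variation with $r\approx(\ell^2T)^{1/4}$, and the attained endpoint $\lambda=-1$ of the spectrum $[-1,1]$ in Corollary~\ref{cor:restart}. You are slightly more careful than the paper in two places: you check that the extra term $\ell\log(2/(1+\rho))\to 1/2$ is genuinely $O(1)$, and you justify uniqueness of the bounded Green's function.
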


\begin{proof}
The Green's function of $(\Id-\beta\mat P)^{-1}$ is
\begin{equation}
\left[(\Id-\beta\mat P)^{-1}\right]_{0,d}
=
\frac{1}{\sqrt{1-\beta^2}}
\left(
\frac{1-\sqrt{1-\beta^2}}{\beta}
\right)^{|d|}.
\label{eq:pathgreen}
\end{equation}
Indeed, its coefficients solve
$g_d-\frac{\beta}{2}(g_{d-1}+g_{d+1})=\mathbf 1\{d=0\}$; retaining the characteristic root inside the unit circle and applying the equation at zero gives Equation~\eqref{eq:pathgreen}. Multiplication by $1-\beta$ and normalization over the one node at distance zero and two nodes at every positive distance yield
\[
p_\beta(0)=\frac{1-\rho}{1+\rho},
\qquad
p_\beta(d)=\frac{2(1-\rho)}{1+\rho}\rho^d
\quad(d\geq1),
\]
where $\rho=\rho(\beta)$. Summation proves Equation~\eqref{eq:path-equilibrium-tail}, and inversion gives
\[
R_q(\mat R_\beta;0)
=
\max\left\{0,
\left\lceil
\frac{\log((1-q)(1+\rho)/2)}{\log\rho}-1
\right\rceil
\right\}
=
\ell(\beta)\log\frac1{1-q}+O(1).
\]

Conditioned on $N_\beta\wedge T=n$, the endpoint is a sum of $n$ independent signs. Hoeffding's inequality gives $\Pr(|Z_n|>r)\leq2\exp(-r^2/(2n))$ for $n\geq1$, while the probability is zero for $n=0$. Since $n\leq T$, averaging proves the first inequality in Equation~\eqref{eq:path-capped-tail}; solving for a tail of at most $1-q$ gives its quantile bound and then Equation~\eqref{eq:path-range-depth-lower}.

Total variation dominates the probability difference of every tail event, so Equations~\eqref{eq:path-equilibrium-tail} and~\eqref{eq:path-capped-tail} prove Equation~\eqref{eq:path-tv-lower}. If $T/\ell^2\to0$ and $T\geq1$, choose
$r=\lfloor T^{1/4}\ell^{1/2}\rfloor$. Then $r/\ell\to0$ while $r^2/T\to\infty$: the equilibrium tail tends to one and the capped-tail bound tends to zero. The $T=0$ case has the same limit directly because its endpoint is always the origin.

Finally, $-\log\rho(\beta)=\sqrt{2(1-\beta)}+O(1-\beta)$, so
$\ell(\beta)=[2(1-\beta)]^{-1/2}(1+o(1))$. The path spectrum is $[-1,1]$, and the opposite endpoint attains the bound in Corollary~\ref{cor:restart}, giving exact error $2\beta^{T+1}/(1+\beta)$. Solving with Equation~\eqref{eq:restart-iterations} and using $-\log\beta=(1-\beta)+O((1-\beta)^2)$ proves Equation~\eqref{eq:path-operator-depth}.
\end{proof}

\subsection{Accelerated polynomial realization}
\label{app:cheb-acceleration}

The $\Theta(R_q^2)$ costs in Corollary~\ref{cor:restart-range} and Proposition~\ref{prop:path-restart} are properties of positive diffusion, not universal lower bounds for local graph filters. A polynomial of degree $T$ is still limited to $T$-hop support, but signed coefficients can combine different walk lengths with cancellation instead of waiting for a random walk to spread. Proposition~\ref{prop:cheb-acceleration} makes this distinction concrete: truncating the Chebyshev expansion of the same restart resolvent reduces the required depth from quadratic to essentially linear in its spatial range. Because its coefficients are signed, Proposition~\ref{prop:cheb-acceleration} is an operator-approximation result intentionally outside Theorem~\ref{thm:transport}.

\begin{proposition}[Chebyshev acceleration of normalized restart]
\label{prop:cheb-acceleration}
Let
\[
f_\beta(x)=\frac{1-\beta}{1-\beta x},
\qquad
\rho=\frac{1-\sqrt{1-\beta^2}}{\beta},
\]
and let $T_k$ be the $k$th Chebyshev polynomial. The degree-$T$ polynomial
\[
C_{\beta,T}(x)
=
\frac{1-\rho}{1+\rho}
\left(1+2\sum_{k=1}^{T}\rho^kT_k(x)\right)
\]
satisfies
\begin{equation}
\sup_{x\in[-1,1]}|f_\beta(x)-C_{\beta,T}(x)|
=
\frac{2\rho^{T+1}}{1+\rho}.
\label{eq:cheb-restart-error}
\end{equation}
Hence, for every symmetric $\mat S$ with spectrum in $[-1,1]$,
\[
\norm{(1-\beta)(\Id-\beta\mat S)^{-1}-C_{\beta,T}(\mat S)}_2
\leq
\frac{2\rho^{T+1}}{1+\rho},
\]
with equality whenever $1\in\operatorname{spec}(\mat S)$. By contrast, stationary restart has error $2\beta^{T+1}/(1+\beta)$ when $-1\in\operatorname{spec}(\mat S)$. Thus, for fixed $\varepsilon\in(0,1/2)$ and $\beta\uparrow1$, the respective worst-case sufficient depths have orders
\begin{equation}
T_{\mathrm{restart}}
=
\Theta\!\left(\ell(\beta)^2\log\frac1\varepsilon\right),
\qquad
T_{\mathrm{Cheb}}
=
\Theta\!\left(\ell(\beta)\log\frac1\varepsilon\right).
\label{eq:restart-cheb-depth-orders}
\end{equation}
When $\mat S=\mat D^{-1/2}\mat A\mat D^{-1/2}$ is the symmetric normalization of a diffusive graph family satisfying the bounded degree-ratio transfer in Equation~\eqref{eq:symmetric-quantile-transfer}, denote its row-normalized equilibrium influence profile from Equation~\eqref{eq:symmetric-row-profile} by $\widetilde\mu_{\beta,v}$. For every fixed $q\in(0,1)$, $R_q(\widetilde\mu_{\beta,v})=\Theta(\ell(\beta))$, so Proposition~\ref{prop:cheb-acceleration} gives a uniform operator-norm guarantee in depth $O(R_q(\widetilde\mu_{\beta,v})\log(1/\varepsilon))$. It uses signed coefficients in the walk-power basis and therefore lies outside Theorem~\ref{thm:transport}; Proposition~\ref{prop:cheb-acceleration} does not by itself assert convergence of a normalized absolute-influence profile.
\end{proposition}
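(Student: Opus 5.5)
The plan is to recognize $C_{\beta,T}$ as the truncation of the exact Chebyshev expansion of $f_\beta$, and to obtain that expansion from the Poisson-kernel generating function. First I will check that $\rho$ inverts the relation $\beta=2\rho/(1+\rho^2)$. Substituting $\rho=(1-\sqrt{1-\beta^2})/\beta$ shows that $\rho$ is the root in $(0,1)$ of $\beta\rho^2-2\rho+\beta=0$. Then
\[
1-\beta x=\frac{1-2\rho x+\rho^2}{1+\rho^2},
\qquad
1-\beta=\frac{(1-\rho)^2}{1+\rho^2},
\]
so
\[
f_\beta(x)=\frac{(1-\rho)^2}{1-2\rho x+\rho^2}
=\frac{1-\rho}{1+\rho}\cdot\frac{1-\rho^2}{1-2\rho x+\rho^2}.
\]
Writing $x=\cos\theta$, the classical identity $1+2\sum_{k\ge1}\rho^k\cos k\theta=(1-\rho^2)/(1-2\rho\cos\theta+\rho^2)$ converges absolutely for $0\le\rho<1$. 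Combined with $T_k(\cos\theta)=\cos k\theta$, it gives
\[
f_\beta(x)=\frac{1-\rho}{1+\rho}\Bigl(1+2\sum_{k\ge1}\rho^kT_k(x)\Bigr)
\]
uniformly on $[-1,1]$.

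The remainder is then $f_\beta-C_{\beta,T}=\frac{2(1-\rho)}{1+\rho}\sum_{k>T}\rho^kT_k(x)$. Since $|T_k|\le1$ on $[-1,1]$, its absolute value is at most $\frac{2(1-\rho)}{1+\rho}\cdot\frac{\rho^{T+1}}{1-\rho}=\frac{2\rho^{T+1}}{1+\rho}$. The bound is attained at $x=1$, where every $T_k(1)=1$, which proves Equation~\eqref{eq:cheb-restart-error} with equality. The operator statement follows from the spectral theorem exactly as in the proof of Corollary~\ref{cor:restart}: the norm equals the maximum of the scalar error over $\operatorname{spec}(\mat S)\subseteq[-1,1]$, with equality when $1\in\operatorname{spec}(\mat S)$. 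The restart comparison is the attainment case already established there for $-1\in\operatorname{spec}(\mat S)$.

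For Equation~\eqref{eq:restart-cheb-depth-orders} I will solve each exact error equation for the least $T$ giving error at most $\varepsilon$. For Chebyshev this gives $T_{\mathrm{Cheb}}=\log(2/(\varepsilon(1+\rho)))/(-\log\rho)-1$. Since $-\log\rho=1/\ell(\beta)$, this is $\ell(\beta)\log(1/\varepsilon)$ up to constants for fixed $\varepsilon<1/2$ as $\rho\uparrow1$. The restart order is Equation~\eqref{eq:path-operator-depth} of Proposition~\ref{prop:path-restart}, using $-\log\beta\sim(1-\beta)$ and $\ell(\beta)^2\sim[2(1-\beta)]^{-1}$. Both sides of each $\Theta$ hold because the errors are attained exactly in the respective spectral cases. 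The condition $\varepsilon<1/2$ keeps the logarithms bounded away from zero, so the $-1$ and $O(1)$ terms are absorbed.

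For the range claim I will invoke Corollary~\ref{cor:restart-range} with $d_w=2$, which gives $R_q(\mat R_\beta;v)=\Theta((1-\beta)^{-1/2})$ for the random-walk operator. I will then transfer this to $\widetilde\mu_{\beta,v}$ through Equation~\eqref{eq:symmetric-quantile-transfer}. Because the comparison levels are fixed in $(0,1)$, applying the same corollary at those levels preserves the order, so $R_q(\widetilde\mu_{\beta,v})=\Theta(\ell(\beta))$. Substituting this into $T_{\mathrm{Cheb}}$ yields the $O(R_q\log(1/\varepsilon))$ depth.

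I expect the only delicate points to be bookkeeping rather than analysis. The first is verifying the $\rho$–$\beta$ algebra. The second is making sure the two $\Theta$ statements are genuinely two-sided, which relies on the attainment cases. The third is ensuring that the transferred quantile levels stay inside $(0,1)$ so that Corollary~\ref{cor:restart-range} applies.
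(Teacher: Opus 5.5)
Your proposal is correct and follows the paper's proof essentially step for step. Both use the Poisson-kernel expansion obtained from $\beta=2\rho/(1+\rho^2)$, the $|T_k|\le1$ tail bound with equality at $x=1$, the spectral theorem, and the exact threshold formulas with $-\log\rho=1/\ell(\beta)$ and $-\log\beta\sim[2\ell(\beta)^2]^{-1}$. Your additional derivation of $R_q(\widetilde\mu_{\beta,v})=\Theta(\ell(\beta))$ from Corollary~\ref{cor:restart-range} and the transfer in Equation~\eqref{eq:symmetric-quantile-transfer} is valid, and it fills in a claim that the paper's proof leaves implicit.
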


\begin{proof}
The relation $\beta=2\rho/(1+\rho^2)$ and the Poisson-kernel identity give
\[
f_\beta(x)
=
\frac{1-\rho}{1+\rho}
\left(1+2\sum_{k=1}^{\infty}\rho^kT_k(x)\right).
\]
Since $|T_k(x)|\leq1$ on $[-1,1]$, the omitted tail is at most
\[
\frac{2(1-\rho)}{1+\rho}
\sum_{k=T+1}^{\infty}\rho^k
=
\frac{2\rho^{T+1}}{1+\rho}.
\]
At $x=1$, every $T_k(1)=1$, so equality holds in Equation~\eqref{eq:cheb-restart-error}. The spectral theorem proves the operator bound in Proposition~\ref{prop:cheb-acceleration} and its equality condition. The exact scalar errors imply threshold formulas
\[
T_{\mathrm{Cheb}}
=
\frac{\log(2/(\varepsilon(1+\rho)))}{-\log\rho}+O(1),
\qquad
T_{\mathrm{restart}}
=
\frac{\log(2/(\varepsilon(1+\beta)))}{-\log\beta}+O(1).
\]
Now $-\log\rho=\ell(\beta)^{-1}$ and $-\log\beta=[2\ell(\beta)^2]^{-1}(1+o(1))$, which gives Equation~\eqref{eq:restart-cheb-depth-orders}. A degree-$T$ polynomial in a one-hop operator still has support at most $T$, so acceleration removes the diffusive slowdown but not the linear locality cost.
\end{proof}

\subsection{Conditioning and cancellation: formal statement and proof}
\label{app:precision}

We now leave transport and address the representation and execution gaps described in Section~\ref{sec:theory}: a model may have adequate range yet still be difficult to fit or evaluate reliably. Let $m$ be the number of basis functions $b_j$ and fitting points $z_i$, indexed by $i,j=1,\ldots,m$, and set $B_{ij}=b_j(z_i)$. For nonzero targets $\vect{y}$ and nonsingular $\mat{B}$, the coefficients $\vect{c}=\mat{B}^{-1}\vect{y}$ give the response $r(z)=\sum_{j=1}^{m}c_jb_j(z)$. With a small denominator floor $\eta>0$, define $\chi_\eta(z)=\sum_j|c_jb_j(z)|/(|r(z)|+\eta)$; it is large when the terms are much larger than their signed sum.

\begin{theorem}[Formal conditioning and cancellation bounds]
\label{thm:realization-formal}
Let $\kappa_2(\mat{B})=\norm{\mat{B}}_2\norm{\mat{B}^{-1}}_2$ be the ratio of its largest to smallest singular value. For perturbations $\Delta\mat{B}$ and $\Delta\vect{y}$, let $\Delta\vect{c}$ be the resulting coefficient change, so that $(\mat{B}+\Delta\mat{B})(\vect{c}+\Delta\vect{c})=\vect{y}+\Delta\vect{y}$. Define $\delta_B=\norm{\Delta\mat{B}}_2/\norm{\mat{B}}_2$ and $\delta_y=\norm{\Delta\vect{y}}_2/\norm{\vect{y}}_2$. When $\kappa_2(\mat{B})\delta_B<1$,
\begin{equation}
\frac{\norm{\Delta\vect{c}}_2}{\norm{\vect{c}}_2}
\leq
\frac{\kappa_2(\mat{B})}
{1-\kappa_2(\mat{B})\delta_B}
\left(\delta_B+\delta_y\right).
\label{eq:synthesis-bound}
\end{equation}
Under the standard floating-point model for sequential summation, let $\operatorname{fl}(r(z))$ be the computed sum with unit roundoff $u$, and assume $(m-1)u<1$. Then
\begin{equation}
\frac{|\operatorname{fl}(r(z))-r(z)|}
{|r(z)|+\eta}
\leq
\gamma_{m-1}\chi_\eta(z),
\qquad
\gamma_{m-1}
=
\frac{(m-1)u}{1-(m-1)u}.
\label{eq:cancellation-bound}
\end{equation}
\end{theorem}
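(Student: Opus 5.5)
The two bounds in Theorem~\ref{thm:realization-formal} are independent, and I would prove them separately: the first is the classical perturbation bound for square linear systems, and the second is the standard error analysis of recursive summation.

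\textbf{Conditioning bound.} Subtract $\mat{B}\vect{c}=\vect{y}$ from the perturbed system $(\mat{B}+\Delta\mat{B})(\vect{c}+\Delta\vect{c})=\vect{y}+\Delta\vect{y}$. This leaves $\mat{B}\Delta\vect{c}=\Delta\vect{y}-\Delta\mat{B}\,\vect{c}-\Delta\mat{B}\,\Delta\vect{c}$, so
\[
\Delta\vect{c}=\mat{B}^{-1}\bigl(\Delta\vect{y}-\Delta\mat{B}\,\vect{c}-\Delta\mat{B}\,\Delta\vect{c}\bigr).
\]
Taking norms gives
\[
\norm{\Delta\vect{c}}_2\leq\norm{\mat{B}^{-1}}_2\bigl(\norm{\Delta\vect{y}}_2+\norm{\Delta\mat{B}}_2\norm{\vect{c}}_2+\norm{\Delta\mat{B}}_2\norm{\Delta\vect{c}}_2\bigr).
\]
Since $\norm{\mat{B}^{-1}}_2\norm{\Delta\mat{B}}_2=\kappa_2(\mat{B})\delta_B<1$, the $\Delta\vect{c}$ term on the right can be absorbed into the left. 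The hypothesis also guarantees that $\mat{B}+\Delta\mat{B}$ is nonsingular, so $\Delta\vect{c}$ is well defined.

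The only nontrivial step is turning $\norm{\Delta\vect{y}}_2$ into a relative quantity. Because $\vect{y}=\mat{B}\vect{c}$, we have $\norm{\vect{y}}_2\leq\norm{\mat{B}}_2\norm{\vect{c}}_2$. Hence
\[
\norm{\mat{B}^{-1}}_2\norm{\Delta\vect{y}}_2=\kappa_2(\mat{B})\,\delta_y\,\frac{\norm{\vect{y}}_2}{\norm{\mat{B}}_2}\leq\kappa_2(\mat{B})\,\delta_y\norm{\vect{c}}_2 .
\]
The $\Delta\mat{B}\,\vect{c}$ term similarly gives $\kappa_2(\mat{B})\delta_B\norm{\vect{c}}_2$. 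Dividing by $(1-\kappa_2(\mat{B})\delta_B)\norm{\vect{c}}_2$ yields Equation~\eqref{eq:synthesis-bound}; note that $\vect{c}\neq0$ because $\vect{y}\neq0$.

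\textbf{Cancellation bound.} I would use the standard model $\operatorname{fl}(a+b)=(a+b)(1+\delta)$ with $|\delta|\leq u$. Write $t_j=c_jb_j(z)$, which are exact by the hypothesis that the terms are already formed. Sequential summation then gives
\[
\operatorname{fl}(r(z))=\sum_j t_j\prod_{k}(1+\delta_k),
\]
where $t_1$ and $t_2$ each pass through $m-1$ roundings and $t_j$ for $j\geq3$ passes through $m-j+1$ of them. The standard lemma (Higham) states that $\bigl|\prod_{k=1}^{n}(1+\delta_k)-1\bigr|\leq\gamma_n$ whenever $nu<1$, and $\gamma_n$ is monotone in $n$. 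Therefore
\[
|\operatorname{fl}(r(z))-r(z)|\leq\gamma_{m-1}\sum_j|t_j|.
\]
This lemma is the one step needing care: the induction $(1+\theta_n)(1+\delta)=1+\theta_{n+1}$ with $|\theta_n|\leq\gamma_n$, checking $\gamma_n+u+\gamma_n u\leq\gamma_{n+1}$. Finally, dividing by $|r(z)|+\eta>0$ gives Equation~\eqref{eq:cancellation-bound} directly from the definition of $\chi_\eta$.

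Neither part is deep. I expect the main obstacle to be bookkeeping: stating the floating-point model precisely, including the exact-terms assumption, and handling the edge cases $m=1$ (zero error) and the positivity of the denominators.
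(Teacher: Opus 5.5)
Your proposal is correct and follows the paper's argument. The paper writes $\Delta\vect{c}=(\Id+\mat{E})^{-1}(\mat{B}^{-1}\Delta\vect{y}-\mat{E}\vect{c})$ with $\mat{E}=\mat{B}^{-1}\Delta\mat{B}$ and bounds $\norm{(\Id+\mat{E})^{-1}}_2\le 1/(1-\norm{\mat{E}}_2)$, which is the same step as your absorption of the $\Delta\mat{B}\,\Delta\vect{c}$ term; it then uses the same inequality $\norm{\vect{y}}_2\le\norm{\mat{B}}_2\norm{\vect{c}}_2$, and for the cancellation bound it cites the recursive-summation estimate $|\operatorname{fl}(\sum_j t_j)-\sum_j t_j|\le\gamma_{m-1}\sum_j|t_j|$ that you additionally derive from the $(1+\delta)$ model.
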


The theorem isolates two mechanisms. The first is sensitivity of the fitted coefficients to perturbations in the basis system; the second is loss of numerical accuracy when large signed terms nearly cancel. The proof treats these mechanisms separately, which is important because a well-conditioned equilibrium solve need not imply a well-conditioned basis representation, or conversely.

\begin{proof}
Let $\mat{E}=\mat{B}^{-1}\Delta\mat{B}$. From $(\mat{B}+\Delta\mat{B})(\vect{c}+\Delta\vect{c})=\vect{y}+\Delta\vect{y}$ and $\mat{B}\vect{c}=\vect{y}$, we obtain $(\Id+\mat{E})(\vect{c}+\Delta\vect{c})=\vect{c}+\mat{B}^{-1}\Delta\vect{y}$.
Thus,
\[
\Delta\vect{c}
=
(\Id+\mat{E})^{-1}
\big(
\mat{B}^{-1}\Delta\vect{y}-\mat{E}\vect{c}
\big).
\]

The assumption $\kappa_2(\mat{B})\delta_B<1$ implies $\norm{\mat{E}}_2\leq\kappa_2(\mat{B})\delta_B<1$ and hence
\[
\norm{(\Id+\mat{E})^{-1}}_2
\leq
\frac{1}{1-\norm{\mat{E}}_2}.
\]
Dividing by $\norm{\vect{c}}_2$ gives
\[
\frac{\norm{\Delta\vect{c}}_2}{\norm{\vect{c}}_2}
\leq
\frac{1}{1-\kappa_2(\mat{B})\delta_B}
\left(
\frac{
\norm{\mat{B}^{-1}}_2\norm{\Delta\vect{y}}_2
}{
\norm{\vect{c}}_2
}
+
\kappa_2(\mat{B})\delta_B
\right).
\]
Since $\norm{\vect{y}}_2=\norm{\mat{B}\vect{c}}_2\leq\norm{\mat{B}}_2\norm{\vect{c}}_2$, we have
\[
\frac{
\norm{\mat{B}^{-1}}_2\norm{\Delta\vect{y}}_2
}{
\norm{\vect{c}}_2
}
\leq
\kappa_2(\mat{B})
\frac{\norm{\Delta\vect{y}}_2}{\norm{\vect{y}}_2}
=
\kappa_2(\mat{B})\delta_y.
\]
Substitution proves Equation~\eqref{eq:synthesis-bound}.

For the evaluation bound, write $t_j=c_jb_j(z)$.
Under the standard floating-point model for sequential summation,
\[
\left|
\operatorname{fl}\!\left(
\sum_{j=1}^{m}t_j
\right)
-
\sum_{j=1}^{m}t_j
\right|
\leq
\gamma_{m-1}
\sum_{j=1}^{m}|t_j|,
\]
where $\gamma_{m-1}=(m-1)u/[1-(m-1)u]$.
Dividing by $|r(z)|+\eta$ yields Equation~\eqref{eq:cancellation-bound}.
\end{proof}

Equation~\eqref{eq:cancellation-bound} isolates the accumulation of already formed summands. If the products $c_jb_j(z)$ and the basis values themselves are also computed in floating point, their rounding errors contribute additional terms. Equivalently, a standard floating-point dot-product bound can be used with a slightly larger $\gamma$ factor. Our experiments measure the error of the complete implementation rather than summation alone.

\paragraph{Equilibrium-system conditioning is a separate quantity.}

If $\mat{S}$ is symmetric with spectrum in $[-1,1]$ and $\beta=|a|<1$, the singular values of $\Id-a\mat{S}$ lie in $[1-\beta,1+\beta]$. Therefore,
\begin{equation}
\kappa_2(\Id-a\mat{S})
\leq
\frac{1+\beta}{1-\beta}.
\label{eq:equilibrium-conditioning}
\end{equation}
Equation~\eqref{eq:equilibrium-conditioning} concerns the stability of applying a fixed equilibrium operator. It is unrelated in general to the condition number of a basis-synthesis matrix used to fit that operator or to cancellation among its coordinates.

\subsection{Stable real modes needed for exact-distance transport}
\label{app:modeproof}

Theorem~\ref{thm:mode} asks how many stable real-pole modes are needed to represent a sharply localized, exact-distance response. Such a response oscillates across the graph-frequency interval, while each additional real pole contributes only limited numerator complexity after the modes are placed over a common denominator. The alternating extrema of the Chebyshev target therefore give a simple counting argument: too few modes cannot match all sign changes with uniform error below one. Theorem~\ref{thm:mode} complements Theorem~\ref{thm:realization} and Appendix~\ref{app:precision} by identifying an expressivity requirement that remains even in exact arithmetic.

\begin{theorem}[Mode lower bound]
\label{thm:mode}
Let $d\geq1$ and $K\geq0$ be integers, where $d$ is the target distance and $K$ is the number of stable real-pole modes, and let
\[
r_K(x)
=
c_0+
\sum_{k=1}^{K}
\frac{c_k}{1-a_kx},
\]
where $c_0,c_1,\ldots,c_K$ are real coefficients and $a_1,\ldots,a_K$ are real pole parameters satisfying $|a_k|<1$. If $\norm{r_K-T_d}_{\infty}<1$ on $[-1,1]$, where $T_d$ is the degree-$d$ Chebyshev polynomial and $\norm{\cdot}_\infty$ is the uniform norm on $[-1,1]$, then $K\geq d$. If the direct term $c_0$ is absent, then $K\geq d+1$.
\end{theorem}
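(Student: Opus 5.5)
Write $e=r_K-T_d$, so $\norm{e}_\infty<1$ on $[-1,1]$. The plan is an equioscillation/sign-change count. Use the $d+1$ Chebyshev extrema $x_j=\cos(j\pi/d)$, $j=0,\ldots,d$, where $T_d(x_j)=(-1)^j$. Since $|e(x_j)|<1$, the difference $r_K(x_j)$ has sign $(-1)^j$ at every $x_j$. So $r_K$ alternates sign at $d+1$ ordered points, and by the intermediate value theorem it has at least $d$ distinct zeros in $(-1,1)$. It is continuous there because every pole $1/a_k$ lies outside $[-1,1]$, as $|a_k|<1$.

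Next bound the number of zeros of $r_K$ from above. Merge equal poles: combining terms with the same $a_k$ only reduces the effective $K$, so we may assume the $a_k$ are distinct. Terms with $a_k=0$ are constants and can be absorbed into $c_0$; this lowers $K$ and only strengthens the conclusion. Then put everything over the common denominator $q(x)=\prod_{k}(1-a_kx)$, which is nonzero on $[-1,1]$. This gives $r_K=p/q$ with
\[
p(x)=c_0q(x)+\sum_k c_k\prod_{j\neq k}(1-a_jx),
\]
a polynomial of degree at most $K$. When $c_0$ is absent, $\deg p\le K-1$. The zeros of $r_K$ in $[-1,1]$ are exactly those of $p$. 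If $p\not\equiv0$, it has at most $\deg p$ zeros, so $d\le K$, or $d\le K-1$ when $c_0$ is absent.

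The degenerate case $p\equiv0$ means $r_K\equiv0$. Then $\norm{T_d}_\infty=1$ is not strictly below one, which contradicts the hypothesis, so this case is excluded. This also covers $K=0$ with $c_0$ absent.

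The main subtlety is checking the alternation step carefully. The strict inequality matters: we need $r_K(x_j)=(-1)^j+e(x_j)$ to have the sign $(-1)^j$. This holds because $|e(x_j)|<1$, so $r_K(x_j)\neq0$ with the correct sign. The $d+1$ points then give $d$ sign changes and hence $d$ distinct roots, one in each interval $(x_{j+1},x_j)$. The remaining bookkeeping is in the merging step. Merging coincident or zero poles must not increase the degree count, and it does not, since it reduces the number of distinct nonconstant modes.
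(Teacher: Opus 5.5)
Your proof is correct and follows essentially the same route as the paper. Both arguments clear denominators with the product $\prod_k(1-a_kx)$, which has no zeros on $[-1,1]$, bound the numerator's degree by $K$ (or $K-1$ without $c_0$), and count the $d$ sign changes forced at the Chebyshev extrema $x_j=\cos(j\pi/d)$. Your merging of repeated poles and absorption of $a_k=0$ terms is harmless but unnecessary: the degree bound on $p$ already holds for repeated or zero poles, and $p\not\equiv0$ follows directly from $p(x_j)\neq0$.
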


\begin{proof}
The case $K=0$ without a direct term is immediate, since then $r_K\equiv0$ and $\norm{r_K-T_d}_\infty=1$. Hence assume $K\geq1$ in that case.
Put the rational function over the common denominator $D(x)=\prod_{k=1}^{K}(1-a_kx)$. Since $|a_k|<1$, every factor is positive on $[-1,1]$, so $D(x)>0$ throughout the interval.

With the direct term present, the numerator $P(x)=D(x)r_K(x)$ has degree at most $K$. Without the direct term, each summand omits one factor, so $\deg P\leq K-1$.

Consider the $d+1$ extrema $x_j=\cos(j\pi/d)$, $j=0,\ldots,d$, for which $T_d(x_j)=(-1)^j$. The assumption $|r_K(x_j)-T_d(x_j)|<1$ forces $r_K(x_j)$ to have the same strict sign as $T_d(x_j)$. Because $D(x_j)>0$, the numerator $P(x_j)$ alternates sign at these ordered points.

By the intermediate value theorem, $P$ has at least one zero between every consecutive pair. It therefore has at least $d$ distinct zeros in $[-1,1]$, which implies $\deg P\geq d$. Combining the degree bounds gives $K\geq d$ when $c_0$ is present and $K-1\geq d$ when it is absent.
\end{proof}

Theorem~\ref{thm:mode} explains the threshold observed in the high-precision oracle: an exact-distance response with $d$ alternating extrema requires at least $d$ stable real modes to achieve uniform error below one. Interpolating a finite graph spectrum does not guarantee accurate behavior between its eigenvalues, which motivates the cross-spectrum evaluation in Section~\ref{sec:controlled} and Table~\ref{tab:cross-spectrum}.

\end{document}